\tikzset{
  box/.style={rectangle, draw=black, thick, rounded corners, minimum height=1cm, align=center},
  arrow/.style={-Stealth, thick}
}
\newtheorem{theorem}{Theorem}[section]
\newtheorem{proposition}[theorem]{Proposition}
\newtheorem{corollary}[theorem]{Corollary}
\newtheorem{definition}[theorem]{Definition}
\newcommand{\R}{\mathbb{R}}
\newcommand{\cL}{\mathcal{L}}
\newcommand{\cN}{\mathcal{N}}
\newcommand{\E}{\mathbb{E}}
\begin{document}
\date{}
\title{Symplectic Generative Networks (SGNs): A Hamiltonian Framework for Invertible Deep Generative Modeling}
\vspace{1ex}
\author{Agnideep Aich${ }^{1}$\thanks{Corresponding author: Agnideep Aich, \texttt{agnideep.aich1@louisiana.edu}, ORCID: \href{https://orcid.org/0000-0003-4432-1140}{0000-0003-4432-1140}}
 \hspace{0pt}, Ashit Baran Aich ${ }^{2}$ \hspace{0pt}
\\ ${ }^{1}$ Department of Mathematics, University of Louisiana at Lafayette, \\ Lafayette, Louisiana, USA \\  ${ }^{2}$ Department of Statistics, formerly of Presidency College, \\ Kolkata, India \\}
\maketitle
\vspace{-20pt}

\begin{abstract}
 We introduce the \emph{Symplectic Generative Network (SGN)}, a deep generative model that leverages Hamiltonian mechanics to construct an invertible, volume-preserving mapping between a latent space and the data space. By endowing the latent space with a symplectic structure and modeling data generation as the time evolution of a Hamiltonian system, SGN achieves exact likelihood evaluation without incurring the computational overhead of Jacobian determinant calculations. In this work, we provide a rigorous mathematical foundation for SGNs through a comprehensive theoretical framework that includes: (i) complete proofs of invertibility and volume preservation, (ii) a formal complexity analysis with theoretical comparisons to Variational Autoencoders and Normalizing Flows, (iii) strengthened universal approximation results with quantitative error bounds, (iv) an information-theoretic analysis based on the geometry of statistical manifolds, and (v) an extensive stability analysis with adaptive integration guarantees. These contributions highlight the fundamental advantages of SGNs and establish a solid foundation for future empirical investigations and applications to complex, high-dimensional data.

\end{abstract}

 \textbf{Keywords:} Symplectic Generative Networks; Exact Likelihood Estimation; Hamiltonian Dynamics; Invertible Neural Networks; Symplectic Integrators; Volume-Preserving Flows

 {\textit{MSC 2020 Subject Classification:}} 
68T07, 37J39, 65P10, 62B10, 53D22, 94A17   

\section{Introduction}
\label{sec:intro}

Evaluating exact likelihood in deep generative models involves a trade-off. Normalizing Flows (NFs) \citep{RezendeMohamed2015} reach this by applying a series of invertible transformations, but each step requires calculating the log-determinant of the Jacobian ($\log|\det J|$), which can become computationally expensive as data dimensionality increases. This often restricts how deep or complex the model can be. In contrast, Variational Autoencoders (VAEs) \citep{Kingma2013} avoid this cost by optimizing a variational lower bound (ELBO), but this approach only approximates the likelihood, introducing a gap.

Recent research, including Continuous Normalizing Flows (CNFs) \citep{Chen2019Residual}, approaches the problem using neural ODEs. This method replaces the determinant of a $D \times D$ matrix with the trace of the Jacobian, which can be estimated using stochastic methods. Although this is an innovative step, it introduces new challenges, including numerical-solver error, stability issues, and increased estimator variance. This leads to a key question: do we have to choose between the high computational cost of NFs, the approximation gap of VAEs, or the numerical challenges of CNFs?

In this paper, we present Symplectic Generative Networks (SGNs), a framework that addresses this challenge using ideas from Hamiltonian mechanics. Instead of treating the latent transformation as a generic ODE, we model it as the time evolution of a Hamiltonian system.

When we give the latent space a canonical symplectic structure, meaning it has positions $q$ and momenta $p$, and set the dynamics using a neural Hamiltonian $H_\psi$, the flow becomes symplectic. According to Liouville's theorem, this means the map preserves volume exactly. As a result, the Jacobian determinant of the latent transport is always one, so its logarithm is always zero.

This cost-free latent transport is the stable, volume-preserving foundation of our framework. We use it in two different training approaches. The SGN core remains the same in both variants. The only difference is how the phase space $\mathcal{Z}\subset\mathbb{R}^{2d}$ maps to the data space $\mathcal{X}\subset\mathbb{R}^{D}$.

\begin{enumerate}

  \item \textbf{SGN-Flow (The Invertible, Exact-Likelihood Model):}
This variant is a pure, invertible normalizing flow. We compose the zero-cost symplectic flow $\Phi_T: \mathcal{Z} \to \mathcal{Z}$ with a final (typically simple) invertible mapping $g_\theta: \mathcal{Z} \to \mathcal{X}$. The total log-likelihood is:
\[
\log p(x) = \log p_0(z_0) + \log \left|\det D (\Phi_T^{-1})(z_T)\right| + \log \left|\det D (g_\theta^{-1})(x)\right|
\]
Since the flow $\Phi_T$ is symplectic, its $\log|\det J|$ is zero. The entire cost reduces to the log-determinant of the terminal map $g_\theta$ alone, which can be designed to be trivial (e.g., an orthogonal map) or low-cost (e.g., a simple coupling layer).

  \item \textbf{SGN-VAE (The Hybrid, ELBO-Based Model):}This variant provides the flexibility of a stochastic decoder $p_\theta(x \mid z_T)$ and an encoder $q_\phi(z_0 \mid x)$, just like a standard VAE. However, the ELBO (Eq.~\ref{eq:elbo}) simplifies significantly:
\[
\mathcal{L}(x) = \mathbb{E}_{q_\phi(z_0 \mid x)}\Bigl[\log p_\theta\bigl(x \mid \Phi_T(z_0)\bigr)\Bigr] - D_{\mathrm{KL}}\Bigl(q_\phi(z_0 \mid x) \,\|\, p_0(z_0)\Bigr)
\]
Crucially, no Jacobian correction term is needed for the transformation from $z_0$ to $z_T$. The SGN core acts as a highly structured, invertible, and volume-preserving latent transport mechanism inside the VAE, providing stable dynamics without complicating the objective.

\end{enumerate}
To keep the system stable, we break down the continuous Hamiltonian dynamics into steps using the leapfrog (Störmer-Verlet) integrator. This method is symplectic, so it preserves the unit-Jacobian property at each step and retains these guarantees at the discrete level.

This paper lays out the theoretical basis for SGNs. To clarify our contributions, we compare SGNs to existing models.

\begin{enumerate}
    
 \item \textbf{A ``Zero-Cost Jacobian'' Normalizing Flow:}
The SGN-Flow variant (Sec.~\ref{subsec:objectives}) acts as a novel NF architecture.
\begin{itemize}
    \item 

\textbf{vs.\ NFs:} Instead of stacking $K$ layers, each with a $\mathcal{O}(D^3)$ $\log|\det J|$ cost, SGNs perform $T$ integration steps, each with a $\mathcal{O}(D)$ gradient-evaluation cost, and pay zero $\log|\det J|$ cost for the flow. The only determinant cost comes from a single, simple terminal map.

\item  \textbf{vs.\ CNFs:} Instead of estimating the log-trace of a generic vector field, SGNs guarantee the log-determinant is zero by construction. This removes the need for stochastic trace estimators and their associated variance and stability concerns.
\end{itemize}

\item  \textbf{A Structure-Preserving VAE:}

The SGN-VAE variant (Sec.~\ref{subsec:objectives}.B) acts as a hybrid VAE.

\begin{itemize}
    \item  \textbf{vs.\ VAEs:} Standard VAEs use a simple Gaussian prior. VAEs with latent flows (e.g., \citep{RezendeMohamed2015}) must pay the full Jacobian cost inside the ELBO. SGN-VAE provides complex, structured latent transport with no additional objective-function complexity, as the Jacobian term vanishes.

\end{itemize}
\end{enumerate}
\textbf{A Rigorous Theoretical Foundation:}

We provide a comprehensive theoretical analysis that was absent in prior conceptual work. This includes:
\begin{enumerate}
    \item 

\textbf{Complexity Analysis (Sec.~\ref{sec:theoretical_comparison}):} Formal proofs of SGN's $\mathcal{O}(T \cdot d)$ computational advantage over the $\mathcal{O}(K \cdot C_J(d))$ cost of NFs.

\item \textbf{Universal Approximation (Sec.~\ref{sec:universal_approximation}):} Strengthened proofs (Theorems~\ref{thm:univ_approx_stronger}, \ref{thm:quant_bounds}) showing SGNs can universally approximate any volume-preserving diffeomorphism.

\item  \textbf{Information Theory (Sec.~\ref{sec:information_theory}):} An information-geometric analysis (Theorem~\ref{thm:info_sgn}) linking Hamiltonian dynamics to geodesic flows on statistical manifolds.

\item \textbf{Stability Analysis (Sec.~\ref{sec:expanded_stability}):} A complete stability hierarchy (Theorem~\ref{thm:stability_hierarchy}) with adaptive integration guarantees (Theorem~\ref{thm:adaptive}) and rigorous bounds for neural network Hamiltonians (Theorem~\ref{thm:stability_domains}).

\end{enumerate}

This paper aims to build a solid theoretical foundation for SGNs, which is an important step before conducting thorough empirical tests. Section~\ref{sec:related} covers related work in generative modeling. Section~\ref{sec:model} explains the phase-space setup, the leapfrog integrator, and the two SGN training objectives. Sections~\ref{sec:theory} through~\ref{sec:expanded_stability} discuss the main theoretical results, including complexity, approximation, information theory, and numerical stability. Section~\ref{sec:training} describes the unified training algorithm for both the SGN-Flow and SGN-VAE approaches. The paper ends in Section~\ref{sec:conclusion_future} with conclusions and suggestions for future empirical studies.

\section{Related Work}\label{sec:related}

Symplectic Generative Networks (SGNs) bring together concepts from invertible deep learning, physics-informed neural networks, and variational inference. To highlight our contribution, we compare the SGN-Flow and SGN-VAE models with leading methods in each area.

\subsection{SGN-Flow vs.\ Invertible Likelihood Models}

This approach focuses on evaluating the exact likelihood, $p(x) = p_0(z_0) \left| \det J_{f^{-1}}(x) \right|$. The main difficulty lies in managing the cost and stability of the Jacobian determinant.

\textbf{Normalizing Flows (NFs):} Discrete NFs \citep{RezendeMohamed2015} build the invertible map $f$ by stacking $K$ layers, so $f = f_K \circ \dots \circ f_1$. The total log-determinant is the sum $\sum_i \log \left|\det J_{f_i}\right|$. This setup creates a trade-off: simple triangular maps have $\mathcal{O}(D)$ cost but less flexibility, while dense maps have $\mathcal{O}(D^3)$ cost, which becomes impractical as $D$ increases.

\textbf{Our Advantage:} The SGN-Flow variant, described in Section~\ref{subsec:objectives}.A, introduces a new normalizing flow architecture that ensures the flow's $\log|\det J|$ term is always zero. According to Theorem~\ref{thm:complexity}, the computational cost is $\mathcal{O}(T \cdot d)$, based on $T$ gradient steps, and does not depend on calculating any determinants for the flow itself. The only time a determinant is needed is for a single, straightforward terminal map $g_\theta$.

\textbf{Continuous Normalizing Flows (CNFs):}CNFs or Neural ODEs \citep{Chen2019Residual} approach the problem using a continuous-time flow, $\dot{z} = v(z, t)$. This method replaces the determinant with the trace of the Jacobian, $\log p(x(T)) = \log p_0(z(0)) - \int_{0}^{T} \mathrm{tr}\left( D v(z(t)) \right) dt$. The trace is typically estimated stochastically, for example with Hutchinson's estimator, which can lead to higher variance and numerical errors from the ODE solver.

\textbf{Our Advantage:} SGNs form a specific, well-organized type of CNF. When we require the vector field $v$ to be Hamiltonian, meaning $v = J \nabla H_\psi$, we do more than just estimate the trace—we ensure it is exactly zero. In fact, the divergence (or trace) of any Hamiltonian vector field is always zero:
\[
\mathrm{tr}(D v) = \nabla \cdot v = \sum_i \left( \frac{\partial \dot{q}_i}{\partial q_i} + \frac{\partial \dot{p}_i}{\partial p_i} \right) = \sum_i \left( \frac{\partial^2 H_\psi}{\partial q_i \partial p_i} - \frac{\partial^2 H_\psi}{\partial p_i \partial q_i} \right) \equiv 0
\]

(This holds exactly for the continuous flow generated by a smooth $H_\psi$. Discretization preserves volume exactly only when the numerical integrator, like leapfrog, is itself symplectic.)
SGNs use a stable, symplectic integrator instead of a general neural ODE. This approach keeps the zero-divergence property intact at the discrete level and removes the need for trace estimation.

\subsection{SGN-VAE vs.\ VAEs with Latent Dynamics}

In this field, the ELBO ($\mathcal{L} \le \log p(x)$) is used. The main challenge is balancing the ELBO's approximation gap with the simplicity of the prior, $p(z) = \mathcal{N}(0, I)$.

\textbf{Standard VAEs:} The VAE \citep{Kingma2013} objective is computationally efficient. However, using a simple Gaussian prior can create an information bottleneck, which may lead the model to learn a less effective latent representation.

\textbf{VAEs with Latent Flows:} To solve this problem, many studies (e.g., \citep{RezendeMohamed2015}) use a normalizing flow $f$ to turn the simple prior into a more flexible distribution, $z = f(z_0)$. But this approach brings back the full Jacobian cost in the ELBO, since it now needs to include the flow's volume change: $\log p(z) = \log p_0(z_0) - \log |\det D f(z_0)|$.

\textbf{Our Advantage:} The SGN-VAE variant (Sec.~\ref{subsec:objectives}.B) enables complex and structured latent transport, but keeps the objective straightforward. Since the symplectic flow $\Phi_T$ preserves volume, it changes the prior $p_0(z_0)$ into a complex, multi-modal distribution $p(z_T)$, and the $\log|\det J|$ term remains zero. As a result, SGN-VAE combines the expressive power of latent flows with the simplicity and efficiency of a standard VAE.

\subsection{Physics-Informed and Structured Generative Models}

Our research is part of a larger movement to bring strong mathematical and physical foundations to deep learning.

\textbf{Physics-Informed Models:} Hamiltonian Neural Networks (HNNs) \citep{Greydanus2019} showed that neural networks can learn Hamiltonians from data and conserve energy when predicting physical systems. SGNs expand on this by using the HNN as the core of a generative likelihood model, not just for predicting dynamics.

\textbf{Reversible Architectures:} RevNets \citep{Gomez2017} showed that reversible architectures can save memory during backpropagation. Since the symplectic integrator in SGNs is also reversible, it offers the same advantage and makes it possible to train deep flows without increasing memory use.

\textbf{Structured Generative Classifiers:} The idea of building models on strong theoretical foundations is not limited to physics. For instance, the Deep Copula Classifier (DCC) \citep{aich2025dcc}
is a class-conditional generative model built on the foundation of 
copula theory \citep{sklar1959,Nelsen2006}. Instead of 
assuming feature independence (like Naive Bayes \citep{McCallumNigam1998}), 
DCC explicitly ``separates marginal estimation from dependence modeling 
using neural copula densities'' \citep{aich2025dcc}. Similar to how SGNs use symplectic geometry for stable and interpretable latent transport, DCC uses copula theory to create a provably
Bayes-consistent \citep{aich2025dcc} and 
interpretable classifier \citep{aich2025dcc} 
that directly models feature dependencies.

Overall, our work provides a novel synthesis. By grounding our generative model in symplectic geometry, SGNs offer a unique and compelling set of trade-offs: the exact-likelihood of NFs, the zero-cost latent transport of VAEs, and the theoretical stability of physics-informed models.
\section{Symplectic Generative Networks (SGNs)}
\label{sec:model}

SGNs realize the latent-to-data transformation as a Hamiltonian flow on a $2d$-dimensional phase space endowed with the canonical symplectic form. We present (i) the phase-space setup and Hamiltonian dynamics, (ii) the symplectic time-discretization used in practice, and (iii) two training regimes with precise likelihood objectives: a fully invertible \emph{SGN-Flow} (exact log-likelihood) and a hybrid \emph{SGN-VAE} (ELBO). We also state minimal regularity assumptions needed for well-posedness and stable training.

\subsection{Phase Space, Prior, and Hamiltonian Dynamics}
\label{subsec:phase_ham}
Let the latent phase space be $\mathcal Z=\R^{2d}$ with canonical coordinates
\[
z=(q,p),\qquad q,p\in\R^d,
\]
and canonical symplectic matrix $J=\begin{psmallmatrix} 0&I_d\\-I_d&0\end{psmallmatrix}$. We equip $\mathcal Z$ with the standard Gaussian prior
\[
p_0(z)=\cN(0,I_{2d}).
\]
A neural Hamiltonian $H_\psi:\R^{2d}\to\R$ (parameters $\psi$) induces the Hamiltonian vector field
\begin{equation} \label{eq:hamilton}
\dot z = J\nabla H_\psi(z)\quad\Longleftrightarrow\quad
\dot q=\nabla_p H_\psi(q,p),\qquad \dot p=-\nabla_q H_\psi(q,p).
\end{equation}
For a fixed horizon $T>0$, let $\Phi_T:\R^{2d}\to\R^{2d}$ denote the time-$T$ flow map. Under $H_\psi\in C^1$ with locally Lipschitz gradient, the flow exists and is a $C^1$-diffeomorphism. By Liouville’s theorem, $\Phi_T$ preserves the symplectic form $\omega=dq\wedge dp$ and phase-space volume:
\begin{equation}
\label{eq:unit-jac}
\det D\Phi_T(z)=1\quad\text{for all }z\in\R^{2d}.
\end{equation}

\paragraph{Generative viewpoint.}
SGNs transport the prior through $\Phi_T$ to produce a latent $Z_T=\Phi_T(Z_0)$ with $Z_0\sim p_0$, then map to data in one of two ways:
\begin{itemize}
\item \textbf{SGN-Flow (invertible):} set $x = g_\theta(z_T)$ where $g_\theta:\R^{2d}\!\to\R^{D}$ is a diffeomorphism (often $D=2d$ and $g_\theta$ is identity or an invertible, volume-\emph{changing} map with tractable $\log|\det Dg_\theta|$).
\item \textbf{SGN-VAE (decoder):} sample $x\sim p_\theta(x\mid z_T)$ from a stochastic decoder.
\end{itemize}

For the SGN-Flow variant, we typically assume the data dimension $D$ matches the phase space dimension $2d$, i.e., $g_\theta: \mathbb{R}^{2d} \to \mathbb{R}^{2d}$, often with $g_\theta$ being the identity or a simple transformation. However, the framework allows for $g_\theta: \mathbb{R}^{2d} \to \mathbb{R}^{D}$ where $D \neq 2d$, provided $g_\theta$ remains an invertible map between manifolds of potentially different dimensions (e.g., embedding a lower-dimensional manifold). For SGN-VAE, the decoder maps from $\mathbb{R}^{2d}$ to the data space $\mathbb{R}^D$ without requiring $D=2d$.

A representative \emph{phase portrait} with energy level sets and a symplectic trajectory is shown in Fig.~\ref{fig:ham_phase}.

\begin{figure}[H]
\centering
\begin{tikzpicture}[scale=0.95, every node/.style={transform shape}]
  \draw[->] (-3.6,0) -- (3.6,0) node[right] {$q$};
  \draw[->] (0,-3.6) -- (0,3.6) node[above] {$p$};
  \foreach \r in {1,2,3}{ \draw[blue!60!black] (0,0) circle (\r); }
  \draw[very thick, red, ->] (1,0) arc[start angle=0, end angle=140, radius=1];
  \node[text=blue!60!black] at (2.2,2.2) {Energy levels};
  \node[red] at (0.1,1.25) {Hamiltonian flow};
  \node[draw=gray, fill=white, align=left, text width=4.4cm] at (2.6,-1.5) {
    \textbf{Hamilton's eqs.}\\[2pt]
    $\dot q = \partial H/\partial p$\\
    $\dot p = -\partial H/\partial q$\\[4pt]
    \emph{Symplectic $\Rightarrow$ volume preserving}
  };
\end{tikzpicture}
\caption{Hamiltonian dynamics in phase space. Concentric blue curves are constant energy; the red curve shows the flow.}
\label{fig:ham_phase}
\end{figure}

\subsection{Symplectic Time Discretization}
\label{subsec:integrator}
We use the leapfrog/Stormer–Verlet scheme with step size $\Delta t$ and $N=T/\Delta t$ steps:
\begin{align}
p_{t+\frac12} &= p_t - \frac{\Delta t}{2}\,\nabla_q H_\psi(q_t,p_t), \label{eq:lf1}\\
q_{t+1} &= q_t + \Delta t\,\nabla_p H_\psi(q_t,p_{t+\frac12}), \label{eq:lf2}\\
p_{t+1} &= p_{t+\frac12} - \frac{\Delta t}{2}\,\nabla_q H_\psi(q_{t+1},p_{t+\frac12}). \label{eq:lf3}
\end{align}
Each sub-update is a shear with unit determinant; hence their composition $\Phi_T^{(\Delta t)}$ is symplectic and satisfies \eqref{eq:unit-jac} exactly at the discrete level. Local error is $O(\Delta t^3)$ and the global state error is $O(T\Delta t^2)$ for fixed $T$.

The leapfrog composition is summarized in Fig.~\ref{fig:leapfrog}.

\begin{figure}[H]
\centering
\resizebox{\textwidth}{!}{%
\begin{tikzpicture}[node distance=1.7cm]
  \node[box, text width=5.3cm] (s1) {Step 1:\quad
    $p_{t+\frac12} = p_t - \dfrac{\Delta t}{2}\,\nabla_q H(q_t,p_t)$};
  \node[box, right=1.4cm of s1, text width=5.3cm] (s2) {Step 2:\quad
    $q_{t+1} = q_t + \Delta t\,\nabla_p H(q_t,p_{t+\frac12})$};
  \node[box, right=1.4cm of s2, text width=5.3cm] (s3) {Step 3:\quad
    $p_{t+1} = p_{t+\frac12} - \dfrac{\Delta t}{2}\,\nabla_q H(q_{t+1},p_{t+\frac12})$};
  \draw[arrow] (s1) -- (s2);
  \draw[arrow] (s2) -- (s3);
  \node[draw=gray, fill=white, align=left, text width=5.5cm, below=1.0cm of s2] {
    \textbf{Properties}\\[-1pt]
    • Symplectic (volume preserving)\\
    • Reversible, 2nd order\\
    • Local error $O(\Delta t^3)$,\; global $O(\Delta t^2)$
  };
\end{tikzpicture}}
\caption{Leapfrog integration used in SGNs.}
\label{fig:leapfrog}
\end{figure}

\subsection{Likelihoods and Training Objectives}
\label{subsec:objectives}
Because $\Phi_T^{(\Delta t)}$ is volume-preserving, \emph{no Jacobian term} arises from the Hamiltonian evolution. The overall likelihood depends solely on the final data mapping.

Fig.~\ref{fig:sgn_arch} contrasts the end-to-end data path highlighting that only the final mapping contributes a log-det term.

\begin{figure}[H]
\centering
\begin{tikzpicture}[node distance=2.0cm]
  \node[box, text width=2.3cm] (data)    {Data Space\\ $x$};
  \node[box, right=1.9cm of data] (enc)  {Encoder\\ $q_\phi(z_0\mid x)$};
  \node[box, right=1.9cm of enc, text width=3cm] (ham) {Hamiltonian\\ $H_\psi(q,p)$\\ (symplectic flow $\Phi_T$)};
  \node[box, right=1.9cm of ham] (dec)   {Decoder\\ $p_\theta(x\mid z_T)$};
  \node[box, below=1.2cm of ham, text width=2.3cm] (prior) {Prior\\ $\mathcal N(0,I)$};
  \draw[arrow] (data) -- (enc);
  \draw[arrow] (enc) -- node[above] {$(q_0,p_0)$} (ham);
  \draw[arrow] (ham) -- node[above] {$z_T=\Phi_T(z_0)$} (dec);
  \draw[arrow] (prior) -- node[left] {reverse flow} (ham);
\end{tikzpicture}
\caption{SGN pipeline: encoder $\to$ symplectic flow $\to$ decoder. The flow is volume-preserving, so only the terminal mapping contributes a Jacobian term.}
\label{fig:sgn_arch}
\end{figure}

\paragraph{(A) SGN-Flow (exact log-likelihood).}
Assume $g_\theta$ is a diffeomorphism $\R^{2d}\leftrightarrow\R^{D}$ with tractable $\log|\det Dg_\theta|$. Define $f_{\psi,\theta}:=g_\theta\circ \Phi_T^{(\Delta t)}$. For $x\in\R^D$,
\[
\log p_{\psi,\theta}(x)
= \log p_0\!\big(f_{\psi,\theta}^{-1}(x)\big)
+ \log\left|\det D f_{\psi,\theta}^{-1}(x)\right|.
\]

Because $\Phi_T^{(\Delta t)}$ is symplectic (and thus its inverse is also symplectic), $\log\left|\det D (\Phi_T^{(\Delta t)})^{-1}\right|=0$. Using the chain rule, $D f_{\psi,\theta}^{-1}(x) = D (\Phi_T^{(\Delta t)})^{-1}(z_T) \cdot D g_\theta^{-1}(x)$, and the determinant property $\det(AB) = \det(A)\det(B)$, we have:

\[
\log\left|\det D f_{\psi,\theta}^{-1}(x)\right|
= \log\left|\det D g_\theta^{-1}(x)\right|
= -\,\log\left|\det D g_\theta(z_T)\right|_{z_T=g_\theta^{-1}(x)}.
\]
Thus the Hamiltonian contributes \emph{no} determinant cost; only $g_\theta$’s (generally low-cost, e.g., triangular/coupling) Jacobian is needed. Maximizing the exact likelihood over $(\psi,\theta)$ yields an invertible, fully normalizing-flow–compliant model with a symplectic core.

\paragraph{(B) SGN-VAE (ELBO).}
If $x$ is generated from a stochastic decoder $p_\theta(x\mid z_T)$ and we use an encoder $q_\phi(z_0\mid x)$, the ELBO is
\begin{equation}
\label{eq:elbo}
\cL_{\text{SGN-VAE}}(x)
= \E_{q_\phi(z_0\mid x)}\!\big[\log p_\theta(x\mid \Phi_T^{(\Delta t)}(z_0))\big]
- D_{\mathrm{KL}}\!\big(q_\phi(z_0\mid x)\,\Vert\,p_0(z_0)\big).
\end{equation}
No change-of-variables correction is needed between $z_0$ and $z_T$ because $\Phi_T^{(\Delta t)}$ is volume-preserving. Gradients propagate through the symplectic updates \eqref{eq:lf1}–\eqref{eq:lf3}.

\subsection{Regularity and Design Assumptions}
\label{subsec:assumptions}
We adopt the following mild conditions (used later in stability/proof sections):
\begin{enumerate}
\item \textbf{Smoothness:} $H_\psi\in C^2$ with Lipschitz $\nabla H_\psi$; $g_\theta$ is $C^1$ diffeomorphic (SGN-Flow) or $p_\theta(x\mid\cdot)$ has $C^1$ log-likelihood in its input (SGN-VAE).
\item \textbf{Spectral control:} Each linear layer in the Hamiltonian network uses spectral normalization (or weight clipping) so that $\|\nabla^2 H_\psi\|$ is bounded, which in turn controls local frequencies and supports the step-size bounds used in Section~\ref{sec:expanded_stability}.
\item \textbf{Step size:} $\Delta t<\Delta t_{\max}$ as given by the stability conditions in Theorem~\ref{thm:stability_domains} (or its corollaries).
\end{enumerate}

\subsection{Practical Parameterizations}
\label{subsec:param}
Two parameterizations are especially convenient:
\begin{enumerate}
\item \textbf{Separable Hamiltonian:} $H_\psi(q,p)=K_\psi(p)+V_\psi(q)$ with $K_\psi,V_\psi$ as MLPs (or convex networks for $K$). This keeps \eqref{eq:lf1}–\eqref{eq:lf3} cheap and stable.
\item \textbf{Metric kinetic energy:} $H_\psi(q,p)=\tfrac12\,p^\top G_\psi(q)^{-1}p+V_\psi(q)$ with $G_\psi$ SPD via Cholesky factors; enables information-geometric interpretations (Section~\ref{sec:information_theory}).
\end{enumerate}
For SGN-Flow, $g_\theta$ can be (i) identity when $D=2d$, (ii) a small triangular/coupling transform with tractable Jacobian, or (iii) an orthogonal map (zero Jacobian cost). For SGN-VAE, standard decoders (Gaussian, Bernoulli, categorical) are used.

\subsection{Algorithmic Sketch}
\label{subsec:algosketch}
\begin{enumerate}
\item Sample $z_0\sim p_0$ (SGN-Flow training) or $z_0\sim q_\phi(\cdot\mid x)$ (SGN-VAE).
\item Evolve $z_T=\Phi_T^{(\Delta t)}(z_0)$ via \eqref{eq:lf1}–\eqref{eq:lf3} (optionally with adaptive $\Delta t$ from Section~\ref{sec:expanded_stability}).
\item \emph{SGN-Flow:} compute $\log p_{\psi,\theta}(x)$ using $g_\theta$’s Jacobian term only; ascend the exact log-likelihood.
\item \emph{SGN-VAE:} evaluate \eqref{eq:elbo}; ascend the ELBO.
\end{enumerate}

\paragraph{Remark (What “exact likelihood” means here).}
SGNs themselves (the Hamiltonian core) are \emph{exactly volume-preserving}, removing any determinant cost from the latent transport. Exact \emph{data} likelihood requires an overall invertible map $f_{\psi,\theta}$ from $x$ to $z_0$ (the SGN-Flow case). When using a stochastic decoder (SGN-VAE), training optimizes the ELBO; the “exactness” then refers only to the latent flow’s unit Jacobian, not to the full data likelihood.

\section{Theoretical Analysis}
\label{sec:theory}
\subsection{Invertibility and Volume Preservation}
\begin{theorem}[Symplecticity and Volume Preservation]
\label{thm:symplectic}
Let \( \Phi_T:\mathbb{R}^{2d}\to\mathbb{R}^{2d} \) be the flow obtained by integrating \eqref{eq:hamilton} using a symplectic integrator with step size \( \Delta t \) over \( T \) steps. Then, \( \Phi_T \) is invertible and volume preserving:
\[
\left|\det \frac{\partial \Phi_T(z_0)}{\partial z_0}\right| = 1,\quad \forall\, z_0\in\mathbb{R}^{2d}.
\]
\end{theorem}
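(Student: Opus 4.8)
The plan is to reduce the global statement about $\Phi_T$ to a local statement about a single leapfrog step and then multiply. Concretely, write $\Phi_T = \Psi_{\Delta t}^{\,N}$ where $\Psi_{\Delta t}$ denotes one application of the three-step leapfrog update \eqref{eq:leapfrog1}--\eqref{eq:leapfrog3}, so that by the chain rule $\det \partial \Phi_T = \prod_{k=1}^{N} \det \partial \Psi_{\Delta t}$ evaluated along the trajectory; hence it suffices to show $\left|\det \partial \Psi_{\Delta t}\right| = 1$ at every point. The cleanest route is to observe that each of the three sub-steps is itself a \emph{shear} map: step \eqref{eq:leapfrog1} updates $p$ by a function of $q$ alone (leaving $q$ fixed), step \eqref{eq:leapfrog2} updates $q$ by a function of the current $p$ alone (leaving $p$ fixed), and step \eqref{eq:leapfrog3} again updates $p$ by a function of $q$ alone. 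Each such map has a Jacobian that is block triangular with identity blocks on the diagonal (e.g., for step \eqref{eq:leapfrog1}, $\partial q_{\text{new}}/\partial q = I$, $\partial q_{\text{new}}/\partial p = 0$, $\partial p_{\text{new}}/\partial p = I$, and only $\partial p_{\text{new}}/\partial q$ is nonzero), so its determinant is exactly $1$. The composition of three unit-determinant maps has unit determinant, and the product over $N$ steps is therefore $1$.

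For invertibility, I would argue that each sub-step is explicitly invertible by solving the defining equation backwards: given the output of step \eqref{eq:leapfrog1}, one recovers the input via $p_t = p_{t+1/2} + \tfrac{\Delta t}{2}\,\partial_q H_\psi(q_t,p_t)$ with $q_t$ unchanged, and similarly for the other two steps; no implicit equation needs to be solved because in each step the variable being updated does not appear inside the gradient evaluation. Composing the three inverses in reverse order inverts $\Psi_{\Delta t}$, and iterating inverts $\Phi_T$. This simultaneously yields the time-reversibility property mentioned in the text. It is worth remarking that the unit-Jacobian conclusion here does not actually require the full symplectic condition $(\partial \Psi)^\top J (\partial \Psi) = J$; volume preservation follows already from the shear structure, though one can additionally note that symplecticity implies $\det = \pm 1$ and continuity/connectedness to the identity at $\Delta t = 0$ pins down the sign as $+1$.

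The main technical obstacle is a regularity one rather than an algebraic one: for the chain-rule factorization and the shear-Jacobian computation to be valid, $H_\psi$ must be $C^2$ (so that $\partial_q H_\psi$ and $\partial_p H_\psi$ are $C^1$ and the leapfrog map is continuously differentiable), and one should record this as a standing assumption on the network architecture (e.g., smooth activations such as $\tanh$ or softplus). A secondary point to handle carefully is that the intermediate quantity $p_{t+1/2}$ is a function of $(q_t,p_t)$, so when differentiating the \emph{composed} single-step map one must either (a) treat it honestly as a composition of three maps and invoke the multiplicativity of determinants, or (b) differentiate the full update directly and verify the off-diagonal cancellations by hand; approach (a) is far less error-prone and is the one I would present. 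With $C^2$ regularity in place, the argument is a short, self-contained computation, and the quantitative accuracy statements (local error $O(\Delta t^3)$) from Figure~\ref{fig:3} play no role in this particular theorem.
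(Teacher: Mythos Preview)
Your approach is essentially a detailed elaboration of what the paper relegates to its final sentence --- the shear-map decomposition --- whereas the paper leads with the abstract symplectic identity $D\Phi^{\top} J\, D\Phi = J$ and then takes determinants to conclude $|\det D\Phi|=1$. Both routes are legitimate; yours is more concrete, supplies an explicit step-by-step inversion that the paper omits, and correctly flags the $C^2$ regularity needed for the chain-rule factorization.

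There is, however, a genuine gap in your shear claim as written. You assert that step~\eqref{eq:leapfrog1} ``updates $p$ by a function of $q$ alone,'' but the paper allows a \emph{general} neural Hamiltonian $H_\psi(q,p)$: the increment in \eqref{eq:leapfrog1} is $-\tfrac{\Delta t}{2}\,\partial_q H_\psi(q_t,p_t)$, which depends on $p_t$ whenever $H_\psi$ is non-separable. In that case the Jacobian of the first sub-step has lower-right block $I - \tfrac{\Delta t}{2}\,\partial_p\partial_q H_\psi \neq I$, the block-triangular-with-identity-diagonal structure fails, and the determinant is $\det\bigl(I - \tfrac{\Delta t}{2}\,\partial_p\partial_q H_\psi\bigr)\neq 1$ in general. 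Your inversion formula $p_t = p_{t+1/2} + \tfrac{\Delta t}{2}\,\partial_q H_\psi(q_t,p_t)$ likewise becomes implicit in $p_t$, contradicting your ``no implicit equation'' remark. Your argument is correct verbatim only for separable $H_\psi(q,p)=K(p)+V(q)$ (and the paper's own shear sentence tacitly relies on the same restriction). You should either state that hypothesis explicitly, or --- since the theorem hypothesizes ``a symplectic integrator'' --- fall back on the $D\Phi^{\top} J\,D\Phi = J \Rightarrow (\det D\Phi)^2=1$ computation you mention at the end, which is precisely the paper's primary argument and does not depend on the shear structure.
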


\begin{proof}
A mapping \( \Phi:\mathbb{R}^{2d}\to\mathbb{R}^{2d} \) is symplectic if it preserves the canonical form
\[
\omega=\sum_{i=1}^d dq_i\wedge dp_i.
\]
This is equivalent to requiring that
\[
D\Phi(z)^T\, J\, D\Phi(z)=J,
\]
where
\[
J=\begin{pmatrix}0&I_d\\-I_d&0\end{pmatrix}.
\]
Taking determinants gives
\[
\det\bigl(D\Phi(z)^T\, J\, D\Phi(z)\bigr)=\det(J)=1.
\]
Since \(\det(D\Phi(z)^T)=\det(D\Phi(z))\), it follows that
\[
\det(D\Phi(z))^2=1 \quad \Longrightarrow \quad |\det(D\Phi(z))|=1.
\]
Moreover, since the leapfrog integrator is constructed from shear maps (each with unit determinant), their composition yields a unit Jacobian.
\end{proof}

\subsection{Exact Likelihood Evaluation}
Because \( \Phi_T \) is volume preserving, the likelihood becomes:
\[
p(x)=\int p(z_0)\, p_\theta\bigl(x\mid \Phi_T(z_0)\bigr)\,dz_0.
\]
Under the change of variables \( z_T=\Phi_T(z_0) \), the Jacobian term is unity, enabling exact likelihood computation.

\subsection{Stability and Expressivity Analysis}
The neural network \( H_\psi(q,p) \) is designed to be highly expressive. Its gradients dictate the latent evolution, and the leapfrog integrator’s local error is \( \mathcal{O}(\Delta t^3) \) (global error \( \mathcal{O}(T\Delta t^3) \)). For example, for the quadratic Hamiltonian 
\[
H(q,p)=\frac{1}{2}p^2+\frac{\omega^2}{2}q^2,
\]
stability requires \( \Delta t\,\omega<2 \). For general \( H_\psi \), local frequencies may be estimated from the Hessian’s eigenvalues, and adaptive or higher-order methods can improve stability.
\section{Theoretical Comparison with Existing Generative Models}
\label{sec:theoretical_comparison}
\subsection{Formal Analysis of Computational Complexity}
\begin{theorem}[Complexity Advantage of SGNs]
\label{thm:complexity}
Let \( 2d \) be the latent phase space dimensionality (so $z \in \mathbb{R}^{2d}$), and let $D$ be the data dimensionality and \( C_{\det D}(d) \) the cost for computing a \( d\times d \) Jacobian determinant. For a normalizing flow with \( K \) coupling layers, the exact log-likelihood evaluation requires \( \mathcal{O}(K\cdot C_{\det D}(d)) \) operations, while for an SGN with \( T \) integration steps the cost is \( \mathcal{O}(T\cdot C_{\nabla H}(2d)) \), where $C_{\nabla H}(2d)$ is the cost of evaluating the gradient $\nabla H_\psi$. For typical MLP Hamiltonians, $C_{\nabla H}(2d)$ is proportional to the number of non-zero parameters, independent of any Jacobian computation.
\end{theorem}

\begin{proof}
In a normalizing flow, the mapping from data \( x \) to the latent variable \( z_K \) is given by a sequence of \( K \) invertible transformations:
\[
z_K = f_K \circ f_{K-1} \circ \cdots \circ f_1(z_0),
\]
where each \( f_i \) is an invertible transformation (often implemented as a \emph{coupling layer}). By the change-of-variables formula, the log-likelihood is computed as
\[
\log p(x) = \log p(z_K) + \sum_{i=1}^K \log \left|\det\frac{\partial f_i}{\partial z_{i-1}}\right|,
\]
where \( z_{i-1} = f_{i-1}\circ \cdots \circ f_1(z_0) \).

  Assume that computing the determinant of the Jacobian matrix \( D f_i(z_{i-1}) \), which is a \( d\times d \) matrix, requires \( C_J(d) \) operations. Since there are \( K \) such layers, the total computational cost for these determinant evaluations is 
\[
\mathcal{O}(K \cdot C_J(d)).
\]
Now consider the SGN framework. SGNs use a symplectic integrator (e.g., the leapfrog method) to simulate the Hamiltonian dynamics defined by
\[
\dot{q} = \frac{\partial H_\psi}{\partial p}, \quad \dot{p} = -\frac{\partial H_\psi}{\partial q},
\]
where \( z = (q, p) \). The integrator discretizes the continuous-time evolution into \( T \) steps with step size \( \Delta t \). In each integration step, the following updates are performed:
\begin{enumerate}
    \item \textbf{Half-step update for \( p \):}
    \[
    p_{t+\frac{1}{2}} = p_t - \frac{\Delta t}{2} \frac{\partial H_\psi}{\partial q}(q_t, p_t).
    \]
    \item \textbf{Full-step update for \( q \):}
    \[
    q_{t+1} = q_t + \Delta t\, \frac{\partial H_\psi}{\partial p}(q_t, p_{t+\frac{1}{2}}).
    \]
    \item \textbf{Another half-step update for \( p \):}
    \[
    p_{t+1} = p_{t+\frac{1}{2}} - \frac{\Delta t}{2} \frac{\partial H_\psi}{\partial q}(q_{t+1}, p_{t+\frac{1}{2}}).
    \]
\end{enumerate}

 Let $C_{\nabla H}(2d)$ be the computational cost of evaluating the full gradient $\nabla H_\psi(z) = (\nabla_q H_\psi, \nabla_p H_\psi)$ for an input $z \in \mathbb{R}^{2d}$. Each leapfrog step requires a constant number of such gradient evaluations (or evaluations of its components $\nabla_q H_\psi$ and $\nabla_p H_\psi$). Therefore, the cost per integration step is $\mathcal{O}(C_{\nabla H}(2d))$. With \( T \) integration steps, the total cost is
\[
\mathcal{O}(T \cdot C_{\nabla H}(2d)).
\]

Furthermore, due to the symplectic property of the integrator, we have
\[
\left|\det \frac{\partial \Phi_T}{\partial z_0}\right| = 1,
\]
which means that there is no need to compute any additional Jacobian determinants.

Thus, we conclude that the overall computational cost for SGNs is \( \mathcal{O}(T \cdot C_{\nabla H}(2d)) \), which depends on the cost of gradient evaluation but is independent of any expensive Jacobian determinant computations required by standard NFs. Comparing both approaches, the computational advantage of SGNs is established.
\end{proof}

\begin{proposition}[Memory Complexity]
\label{prop:memory}
The memory complexity during backpropagation for SGNs is \( \mathcal{O}(T+d) \) (by leveraging reversibility), compared to \( \mathcal{O}(K\cdot d) \) for normalizing flows.
\end{proposition}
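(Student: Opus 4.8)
The plan is to contrast the activation-storage requirements of the two architectures and show that the time-reversibility of the leapfrog integrator (already used in Theorem~\ref{thm:symplectic}) lets us trade recomputation for memory. First I would recall the baseline for the comparison figure: reverse-mode differentiation of a depth-$K$ composition $z_K = f_K \circ \cdots \circ f_1(z_0)$ requires caching each intermediate activation $z_0,\dots,z_{K-1}\in\mathbb{R}^d$ in order to evaluate the vector--Jacobian products $\bar z_{i-1}^\top = \bar z_i^\top\,\partial f_i/\partial z_{i-1}$ during the backward sweep, which costs $\Theta(Kd)$ words of memory for a normalizing flow with $K$ coupling layers.

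Next I would establish the reversibility of a single leapfrog step as an explicit, cheap map. Writing $L_{\Delta t}\colon (q_t,p_t)\mapsto(q_{t+1},p_{t+1})$ for the three-stage update \eqref{eq:leapfrog1}--\eqref{eq:leapfrog3}, I would exhibit its inverse: given $(q_{t+1},p_{t+1})$ one recovers $p_{t+\frac12}$ by undoing \eqref{eq:leapfrog3}, then $q_t$ by undoing \eqref{eq:leapfrog2}, then $p_t$ by undoing \eqref{eq:leapfrog1}, each stage being a near-identity shear inverted in $\mathcal{O}(d)$ operations (for a separable $H_\psi(q,p)=K(p)+V(q)$ these inversions are fully explicit; the general non-separable case is the delicate point, flagged below). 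Hence $\Phi_T = L_{\Delta t}^{\,N}$ admits an $\mathcal{O}(d)$-memory, $\mathcal{O}(Td)$-time inverse, so no intermediate stage needs to be stored on the forward pass.

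Then I would describe the backward algorithm and read off the bound. Starting from the single stored final state $z_T$ and the incoming adjoint $\bar z_T$, iterate $t=N,N-1,\dots,1$: at step $t$ (a) reconstruct $z_{t-1}=L_{\Delta t}^{-1}(z_t)$ in place, then (b) propagate $\bar z_{t-1}^\top = \bar z_t^\top\,\partial L_{\Delta t}/\partial z_{t-1}$ evaluated at the just-reconstructed $z_{t-1}$ (and accumulate the contribution to $\bar\psi$). At every instant only the current state, the current adjoint, the parameter-gradient accumulator, and a constant amount of per-step scalar bookkeeping (the step size, loop index, or an adaptive-step record) are live, giving a working set of $\mathcal{O}(d)$ plus $\mathcal{O}(T)$ bookkeeping, i.e. $\mathcal{O}(T+d)$. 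I would also note that the one-shot encoder $q_\phi$ and decoder $p_\theta$ contribute only additively to this total and so do not affect the bound.

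The main obstacle I anticipate is the non-separable Hamiltonian: for a general $H_\psi$, stage \eqref{eq:leapfrog3} is implicit in $p_{t+\frac12}$, so the clean reversibility argument requires either restricting to separable (or partitioned) $H_\psi$, invoking an implicit-function/contraction argument valid for $\Delta t$ below the stability threshold discussed in Section~\ref{sec:theory}, or substituting an explicit reversible variant of leapfrog; I would state the proposition under whichever hypothesis the model section actually intends and make the dependence explicit. A secondary subtlety is that exact reversibility holds in real arithmetic but not in floating point, where reconstruction accumulates round-off drift, so the $\mathcal{O}(T+d)$ figure is properly an idealized-arithmetic statement (bounded-precision recovery, or sparse checkpointing, changes only constants).
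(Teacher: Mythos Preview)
Your proposal is correct and follows essentially the same approach as the paper: the paper's proof simply notes that normalizing flows must cache the $K$ intermediate activations of dimension $d$, yielding $\mathcal{O}(Kd)$, whereas the reversibility of the symplectic integrator lets SGNs reconstruct intermediate states on the backward pass so that only the current state plus minimal auxiliary information need be stored, yielding $\mathcal{O}(T+d)$. Your treatment is in fact considerably more careful than the paper's---you explicitly exhibit the inverse of a leapfrog step, spell out the adjoint recursion, and flag the non-separable-Hamiltonian and floating-point caveats---none of which the paper addresses, so you may safely trim those discussions if you wish to match its level of detail.
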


\begin{proof}
In normalizing flows, the forward pass involves a sequence of \( K \) coupling layers. During backpropagation, one must store the activations (or intermediate outputs) from each of these layers to compute gradients, which leads to a memory requirement that scales as \( \mathcal{O}(K\cdot d) \), where \( d \) denotes the dimensionality of the latent space.

  In contrast, SGNs are built using a reversible (symplectic) integrator. The key property of such integrators is that the forward computation is invertible, allowing the reconstruction of intermediate states during the backward pass rather than storing them explicitly. 
Specifically, intermediate states can be recomputed during the backward pass by reversing the symplectic integration steps (similar to the technique used in RevNets \citep{Gomez2017}), requiring storage only for the final state and gradients. Consequently, one only needs to store the current state and minimal auxiliary information (such as gradients), resulting in a memory complexity of \( \mathcal{O}(T+d) \), where \( T \) is the number of integration steps and \( d \) is the latent dimensionality. This reduction in memory footprint is a significant advantage for SGNs, particularly when \( K \) is large.
\end{proof}

\subsection{Theoretical Bounds on Approximation Capabilities}
\begin{theorem}[Expressivity Comparison]
\label{thm:expressivity}
Let \( \mathcal{M}_{2d} \) denote the set of volume-preserving diffeomorphisms on \( \mathbb{R}^{2d} \) and \( \mathcal{H}_d \) the set of Hamiltonian flows. Then:
\begin{enumerate}
    \item Every \( \Phi\in\mathcal{H}_d \) preserves volume, i.e., \( \mathcal{H}_d\subset\mathcal{M}_{2d} \).
    \item Not every volume-preserving map is Hamiltonian, i.e., \( \mathcal{H}_d\subsetneq\mathcal{M}_{2d} \).
    \item However, for any \( \Phi\in\mathcal{M}_{2d} \) isotopic to the identity, there exists a sequence of Hamiltonian flows that uniformly approximate \( \Phi \) on compact sets.
\end{enumerate}
\end{theorem}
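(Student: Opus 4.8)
The plan is to handle the three assertions separately; (1) and (2) are short, and essentially all the substance sits in (3). For (1) I would use the infinitesimal form of Theorem~\ref{thm:symplectic}: if $X_H$ is the Hamiltonian vector field with $\iota_{X_H}\omega=dH$, then along its flow $\Phi_t$ one computes $\tfrac{d}{dt}\Phi_t^{*}\omega=\Phi_t^{*}\mathcal{L}_{X_H}\omega=\Phi_t^{*}\bigl(d\,\iota_{X_H}\omega+\iota_{X_H}d\omega\bigr)=\Phi_t^{*}\,d(dH)=0$ by Cartan's formula and $d\omega=0$, so $\Phi_T^{*}\omega=\omega$ and hence $\Phi_T^{*}\bigl(\tfrac{1}{d!}\omega^{\wedge d}\bigr)=\tfrac{1}{d!}\omega^{\wedge d}$; thus $\Phi_T$ preserves Liouville volume and $\mathcal{H}_d\subset\mathcal{M}_{2d}$.

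For (2) I would exhibit an explicit linear counterexample, valid once $2d\ge 4$: in canonical coordinates $(q_1,\dots,q_d,p_1,\dots,p_d)$ let $A$ act by $q_1\mapsto 2q_1$, $p_1\mapsto 2p_1$, $q_2\mapsto\tfrac12 q_2$, $p_2\mapsto\tfrac12 p_2$, fixing the rest. Then $\det A=1$, so $A\in\mathcal{M}_{2d}$, but $A^{*}\omega=4\,dq_1\wedge dp_1+\tfrac14\,dq_2\wedge dp_2+\sum_{i\ge 3}dq_i\wedge dp_i\neq\omega$, so $A$ is not symplectic and therefore, by the computation in (1), not a Hamiltonian flow. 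This gives $\mathcal{H}_d\subsetneq\mathcal{M}_{2d}$. (In the exceptional case $2d=2$, area-preserving and symplectic coincide, so strictness there must be witnessed instead by an orientation-reversing map, or by a map outside the identity component.)

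The crux is (3), and my plan has two layers. First, a reduction: since $\mathbb{R}^{2d}$ is contractible, $H^1_{\mathrm{dR}}(\mathbb{R}^{2d})=0$, so any diffeomorphism $\Psi$ joined to $\mathrm{id}$ by a path $\Psi_t$ of symplectomorphisms is automatically Hamiltonian — the generators $X_t=\dot\Psi_t\circ\Psi_t^{-1}$ satisfy $\mathcal{L}_{X_t}\omega=0$, so $\iota_{X_t}\omega$ is closed hence exact, $\iota_{X_t}\omega=dH_t$, and $\Psi$ is the time-$1$ map of the time-dependent Hamiltonian $H_t$. Second, to match the \emph{autonomous}, fixed-time blocks $\Phi_T(\cdot\,;\psi)$ of the SGN (whose time-$T$ maps do not even form a group, so composition is essential), I would run a Lie--Trotter/splitting argument: partition $[0,1]$ into $n$ subintervals, freeze $H_t$ on each to an autonomous Hamiltonian, and use continuous dependence of ODE solutions together with compactness of the time-$1$ trajectory set to show that the composition of the $n$ resulting autonomous time-$(1/n)$ flows converges uniformly on compact sets to $\Psi$ as $n\to\infty$. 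This would establish that finite compositions of SGN Hamiltonian blocks are $C^0$-dense in the identity component of $\mathrm{Symp}(\mathbb{R}^{2d})$.

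The main obstacle — and the point I would be careful to state correctly — is that (3) as literally phrased cannot hold for an \emph{arbitrary} volume-preserving $\Phi$ when $2d\ge 4$: by the Eliashberg--Gromov $C^0$-rigidity theorem, a uniform limit of symplectomorphisms that happens to be a diffeomorphism is itself symplectic, so a volume-preserving diffeomorphism that is not symplectic (e.g.\ the map $A$ above) admits no approximating sequence of Hamiltonian flows in any topology at least as fine as $C^0$. Hence I would prove (3) in the sharpened, correct form: every $\Phi\in\mathcal{M}_{2d}$ that is in addition symplectic and isotopic to the identity through symplectomorphisms is a uniform-on-compacts limit of finite compositions of Hamiltonian flows (and is in fact exactly Hamiltonian); in dimension $2d=2$, where volume-preserving $\equiv$ symplectic, this recovers the statement for all area-preserving $\Phi$ isotopic to $\mathrm{id}$. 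The genuine work is therefore not a computation but (i) pinning down this hypothesis and (ii) making the splitting approximation rigorous.
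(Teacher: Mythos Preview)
Your proposal is correct and, in several places, sharper than the paper's own argument; the approaches differ substantively for parts (2) and (3).

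For (1), both you and the paper invoke the same idea (Liouville's theorem / Cartan's formula), so there is no real difference. For (2), your linear counterexample in $2d\ge 4$ is valid, whereas the paper's counterexample --- the shear $S(x,y)=(x+f(y),y)$ on $\mathbb{R}^2$ --- is actually \emph{incorrect}: in two dimensions area-preserving and symplectic coincide, and $S$ is precisely the time-$1$ Hamiltonian flow of $H(x,y)=F(y)$ with $F'=f$, so it \emph{is} Hamiltonian for any smooth $f$. Your example, and your remark that strictness in $2d=2$ must come from orientation reversal or from leaving the identity component, are the right fixes.

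For (3), the paper argues via Moser's theorem and a Hodge decomposition of the generating vector field, asserting without justification that the non-Hamiltonian divergence-free part ``can be made arbitrarily small.'' You instead observe that this cannot work as stated: by Eliashberg--Gromov $C^0$-rigidity, any uniform limit of symplectomorphisms that is a diffeomorphism is itself symplectic, so a non-symplectic volume-preserving map (such as your $A$) admits no approximating sequence of Hamiltonian flows. You then prove the corrected statement --- density in the identity component of $\mathrm{Symp}(\mathbb{R}^{2d})$ --- via the exactness argument on $\mathbb{R}^{2d}$ plus a Lie--Trotter splitting to pass from time-dependent to autonomous Hamiltonians. This is both more honest about the hypothesis needed and more rigorous about the approximation mechanism; the paper's route buys brevity but at the cost of glossing over a step that is in fact false in the generality claimed.
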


\begin{proof}
\textbf{(1) Hamiltonian flows preserve volume:}\\
By Liouville's theorem, any Hamiltonian flow generated by a smooth Hamiltonian \( H(q,p) \) preserves the canonical symplectic form
\[
\omega = \sum_{i=1}^d dq_i \wedge dp_i.
\]
Preservation of this form implies that the Jacobian determinant of the flow satisfies
\[
\left|\det \frac{\partial \Phi}{\partial (q,p)}\right| = 1,
\]
which is exactly the condition for volume preservation. Hence, every \( \Phi\in\mathcal{H}_d \) is also in \( \mathcal{M}_{2d} \).

\vspace{0.5em}
  \textbf{(2) Not every volume-preserving map is Hamiltonian:}\\
Consider a shear mapping defined on \( \mathbb{R}^2 \) by
\[
S(x,y) = (x + f(y),\, y),
\]
where \( f \) is a smooth function. This map has a Jacobian determinant of
\[
\det \begin{pmatrix} 1 & f'(y) \\ 0 & 1 \end{pmatrix} = 1,
\]
so it is volume preserving. However, for \( S \) to be Hamiltonian (i.e., generated by some Hamiltonian \( H(q,p) \) via Hamilton's equations), the transformation must preserve the canonical two-form \( dq\wedge dp \) in a manner consistent with a Hamiltonian vector field. In general, unless \( f \) is linear (which would yield a linear, hence symplectic, transformation), the shear \( S \) does not arise from a Hamiltonian flow. Therefore, there exist volume-preserving maps in \( \mathcal{M}_{2d} \) that are not Hamiltonian, i.e., \( \mathcal{H}_d\subsetneq\mathcal{M}_{2d} \).

\vspace{0.5em}
  \textbf{(3) Uniform approximation by Hamiltonian flows:}\\
Let \( \Phi\in\mathcal{M}_{2d} \) be a volume-preserving diffeomorphism isotopic to the identity. Let \( \Phi\in\mathcal{M}_{2d} \) be a volume-preserving diffeomorphism isotopic to the identity. By Moser's theorem, there exists a smooth one-parameter family \( \{\Phi_t\}_{t\in[0,1]} \) of volume-preserving diffeomorphisms with \( \Phi_0 = \mathrm{Id} \) and \( \Phi_1 = \Phi \), generated by a time-dependent divergence-free vector field $v_t$. While not every divergence-free field is Hamiltonian, a fundamental result in symplectic geometry states that the group of Hamiltonian diffeomorphisms is $C^0$-dense in the group of volume-preserving diffeomorphisms isotopic to the identity on a compact manifold \citep{mcduff_salamon_2017}. This implies that for any $\epsilon > 0$, there exists a Hamiltonian $H_t$ whose generated flow $\Phi^H_t$ satisfies $\sup_{t \in [0,1], z \in \Omega} \|\Phi_t(z) - \Phi^H_t(z)\| < \epsilon$. Therefore, the target map $\Phi = \Phi_1$ can be uniformly approximated by Hamiltonian flows on compact sets. By employing universal approximation results for neural networks to approximate $H_t$, and a symplectic integrator to approximate $\Phi^H_t$, we can approximate $\Phi$ with SGNs. This shows that every volume-preserving diffeomorphism isotopic to the identity can be approximated arbitrarily well by a sequence of Hamiltonian flows.
\end{proof}

\begin{proposition}[Approximation Rate Comparison]
\label{prop:approx_rate}
Assume a target diffeomorphism \( \Phi \) is approximated by either a normalizing flow with \( K \) layers or an SGN with \( T \) integration steps and a neural network Hamiltonian of width \( n \). Then:
\begin{itemize}
    \item Normalizing flows: \( \varepsilon_{NF}=\mathcal{O}\left(K^{-1/2}\cdot n^{-1/2}\right) \).
    \item SGNs (for volume-preserving targets): \( \varepsilon_{SGN}=\mathcal{O}\left(T^{-1}\cdot n^{-1/(2d)}\right) \).
\end{itemize}
\end{proposition}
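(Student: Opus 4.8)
The plan is to bound the uniform approximation error of each model on a fixed compact set $\Omega\subset\mathbb{R}^{2d}$ by a triangle inequality that isolates two independent error sources: a \emph{structural} error governed by the number of layers $K$ (normalizing flows) or integration steps $T$ (SGNs), and a \emph{representation} error governed by the width $n$ of the networks used inside each layer, respectively inside the Hamiltonian $H_\psi$. Writing the realized map as $\widehat\Phi$, I would estimate $\sup_{z\in\Omega}\|\widehat\Phi(z)-\Phi(z)\|$ by inserting the idealized intermediate objects (an exact flow, or an exact composition with exact sub-networks) and summing the pieces, using Lipschitz/Gronwall stability of the composition (or flow) to aggregate the local per-step errors into a global bound.

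For SGNs I would proceed in three steps. First, invoke Theorem~\ref{thm:expressivity}(3): since $\Phi$ is isotopic to the identity, Moser's construction yields a time-dependent vector field $X_t$ on $\Omega$ with time-$1$ flow $\Phi$, and its Hodge decomposition $X_t = J\nabla H_t + R_t$ has a divergence-free remainder $R_t$ which, under the hypotheses of Theorem~\ref{thm:expressivity}, can be driven below the remaining terms, so it suffices to approximate the Hamiltonian flow of $H_t$. Second, split $[0,1]$ into $T$ intervals of length $\Delta t = 1/T$ and replace the time-ordered flow by the composition of $T$ autonomous time-$\Delta t$ Hamiltonian flows with frozen Hamiltonians $H_{t_j}$; the Lie--Trotter splitting error of this replacement is $\mathcal{O}(\Delta t)=\mathcal{O}(T^{-1})$ on $\Omega$, which produces the $T^{-1}$ factor. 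Third, replace each frozen $H_{t_j}$ by a width-$n$ network $H_\psi$: standard quantitative universal approximation estimates for (Lipschitz, or $C^k$) functions on compact subsets of $\mathbb{R}^{2d}$ give $\|\nabla H_\psi-\nabla H_{t_j}\|_{\infty,\Omega}=\mathcal{O}(n^{-1/(2d)})$, the exponent reflecting that the Hamiltonian's domain has dimension $2d$; this gradient error propagates through one leapfrog step (by Theorem~\ref{thm:symplectic} the step is volume-preserving, so no Jacobian correction enters) and then through the $T$-fold composition via a Gronwall estimate whose constant is uniform in $T$ because the total integration time is fixed, leaving an $\mathcal{O}(n^{-1/(2d)})$ contribution; the intrinsic second-order leapfrog error $\mathcal{O}(T\,\Delta t^3)=\mathcal{O}(T^{-2})$ is lower order. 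Summing gives $\varepsilon_{SGN}=\mathcal{O}(T^{-1}+n^{-1/(2d)})$, and the product form in the statement is the convenient joint bound valid when the two budgets are balanced.

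For normalizing flows the structure is parallel but uses the change-of-variables representation rather than a symplectic flow. Expressing the (merely volume-preserving) target $\Phi$ as the time-$1$ map of a neural-ODE-type flow and approximating it by $K$ coupling/residual layers yields a structural error $\mathcal{O}(K^{-1/2})$, the weaker exponent reflecting the coarser control available for general coupling blocks as opposed to a geometric integrator; within each layer the coupling transformation is built from a single-hidden-layer network of width $n$, for which Barron's theorem gives the dimension-free rate $\mathcal{O}(n^{-1/2})$, and Lipschitz stability of the $K$-fold composition combines these into $\varepsilon_{NF}=\mathcal{O}(K^{-1/2}\cdot n^{-1/2})$.

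The main obstacle I expect is making the SGN bound honest at two points: (a) controlling the divergence-free remainder $R_t$ of Moser's vector field — showing it can genuinely be pushed below $\mathcal{O}(T^{-1}+n^{-1/(2d)})$ on $\Omega$ uniformly in $t$ hinges on the cohomological content hidden in ``isotopic to the identity'' and is the delicate step already glossed over in the proof of Theorem~\ref{thm:expressivity}; and (b) preventing the Gronwall constants from the step-error propagation from growing with $T$ or with $\|\nabla H_\psi\|$, which forces a priori bounds on the Hamiltonian and a fixed integration time. A secondary caveat is that $n^{-1/(2d)}$ is the rate for Lipschitz Hamiltonians and improves under higher regularity, and that presenting the final estimate as a product rather than a sum of the two contributions should be flagged explicitly.
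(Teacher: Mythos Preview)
Your approach is broadly the same as the paper's: split the error for each model into a structural/discretization term and a network-representation term, quote standard approximation rates for each, and combine. The paper's proof is considerably more schematic than yours, and in fact your version is more careful on several points where the paper is loose. For SGNs, the paper obtains the $T^{-1}$ factor by writing the global leapfrog error as $\mathcal{O}(T\,\Delta t^3)$ and then asserting that with $\Delta t\propto T^{-1}$ this ``scales inversely with the number of steps''; taken literally that would give $\mathcal{O}(T^{-2})$, and the paper simply declares $\mathcal{O}(T^{-1})$ ``for simplicity''. Your Lie--Trotter argument (freezing the time-dependent Hamiltonian on each subinterval) is an honest first-order splitting and actually produces the stated $T^{-1}$. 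For normalizing flows, the paper gets the $K^{-1/2}$ factor by assuming the per-layer errors ``accumulate in a root-mean-square fashion'' under an independence-type assumption; your Lipschitz-composition framing is different in mechanism but equally heuristic, and you are right that the $K^{-1/2}$ exponent is the soft part of the statement.

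Two of the caveats you flag are exactly the places where the paper waves its hands: the divergence-free remainder from Moser's construction is not controlled in the paper either (it is absorbed into ``suitable conditions'' in Theorem~\ref{thm:expressivity}), and the product form $T^{-1}\cdot n^{-1/(2d)}$ versus the sum $T^{-1}+n^{-1/(2d)}$ is never addressed in the paper --- it simply writes the product. Your instinct to present the sum and flag the product as the balanced-budget shorthand is the more defensible reading.
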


\begin{proof}
We consider the two cases separately.

  \textbf{Normalizing Flows:} \\
Assume that each coupling layer in a normalizing flow approximates a partial transformation with an approximation error of 
\[
\mathcal{O}\left(n^{-1/2}\right)
\]
in a suitable norm, as suggested by standard universal approximation results for neural networks with width \( n \). When \( K \) such layers are composed to approximate the target diffeomorphism \( \Phi \), the overall error does not simply add up linearly; under reasonable assumptions (e.g., statistical independence or mild interactions between the layers), the errors can accumulate in a root-mean-square fashion. Hence, the total error becomes
\[
\varepsilon_{NF} = \mathcal{O}\left(\sqrt{\frac{1}{K}} \cdot n^{-1/2}\right) = \mathcal{O}\left(K^{-1/2}\cdot n^{-1/2}\right).
\]

  \textbf{SGNs:} \\
In SGNs, there are two principal sources of error when approximating a target volume-preserving diffeomorphism:
\begin{enumerate}
    \item \emph{Approximation error of the neural network Hamiltonian:}  
    Let \( H_\psi \) be the neural network approximating the true Hamiltonian underlying \( \Phi \). Standard approximation results indicate that for functions defined on \( \mathbb{R}^{2d} \) (since \( z = (q,p) \in \mathbb{R}^{2d} \)), the error in the \( C^1 \) norm decreases as
    \[
    \mathcal{O}\left(n^{-1/(2d)}\right),
    \]
    where \( n \) is the network width.

    \item \emph{Discretization error of the symplectic integrator:}  
    The continuous Hamiltonian flow is approximated using a symplectic (e.g., leapfrog) integrator, which introduces a local error of \( \mathcal{O}(\Delta t^3) \) per integration step. Over \( T \) steps, with a fixed total integration time, the global integration error scales as
    \[
    \mathcal{O}(T \cdot \Delta t^3).
    \]
    By choosing the integration step size \( \Delta t \) appropriately (so that \( T \Delta t \) is constant), this global error can be balanced with the neural network approximation error. For simplicity, if we assume the integration error is controlled and scales inversely with the number of steps (i.e., \( \Delta t \propto T^{-1} \)), then the overall discretization error is of order
    \[
    \mathcal{O}\left(T^{-1}\right).
    \]
\end{enumerate}
Combining the two sources, the overall approximation error for SGNs becomes
\[
\varepsilon_{SGN} = \mathcal{O}\left(T^{-1}\cdot n^{-1/(2d)}\right).
\]
Thus, we have shown that the approximation errors scale as stated:
\[
\varepsilon_{NF} = \mathcal{O}\left(K^{-1/2}\cdot n^{-1/2}\right) \quad \text{and} \quad \varepsilon_{SGN} = \mathcal{O}\left(T^{-1}\cdot n^{-1/(2d)}\right).
\]
\end{proof}

\begin{theorem}[Information Preservation]
\label{thm:information}
Let \( X \) be a random variable with distribution \( p_X \) and let \( Z \) denote its latent representation obtained via an invertible mapping \( f \) (as in normalizing flows or SGNs). Then:
\begin{enumerate}
    \item For a deterministic, invertible model, it holds that 
    \[
    I(X; Z) = H(X) = H(Z).
    \]
    \item For a stochastic model (e.g., VAEs), we have
    \[
    I(X;Z) < H(X).
    \]
\end{enumerate}
\end{theorem}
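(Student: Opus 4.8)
The plan is to treat both parts through the entropy chain rule and the characterization of equality in the bound $I(X;Z)\le H(X)$, handling the discrete case first and then indicating the differential-entropy reading.

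For part (1): write $Z=f(X)$ with $f$ a bijection having measurable inverse $f^{-1}$. First I would invoke the chain rule $H(X,Z)=H(X)+H(Z\mid X)=H(Z)+H(X\mid Z)$. Since $Z$ is a deterministic function of $X$, $H(Z\mid X)=0$; since $X=f^{-1}(Z)$ is a deterministic function of $Z$, $H(X\mid Z)=0$. Combining gives $H(X)=H(Z)$, and then $I(X;Z)=H(X)-H(X\mid Z)=H(X)$ and symmetrically $I(X;Z)=H(Z)-H(Z\mid X)=H(Z)$, which is the claim. For a continuous model where $H$ denotes differential entropy, I would instead apply the change-of-variables formula $h(Z)=h(X)+\mathbb{E}_X\big[\log\big|\det \tfrac{\partial f}{\partial x}(X)\big|\big]$ and use Theorem~\ref{thm:symplectic} (the SGN flow has unit Jacobian, and likewise any invertible volume-preserving $f$) to conclude $h(Z)=h(X)$; the identity $I(X;Z)=H(X)$ is then read in the standard quantized-limit sense for deterministic invertible maps.

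For part (2): model the stochastic encoder as a Markov kernel $q_\phi(z\mid x)$, so that $X\to Z$ is a genuine noisy channel rather than a bijection. I would again use $I(X;Z)=H(X)-H(X\mid Z)$ and argue that $H(X\mid Z)>0$ strictly: because the encoder injects noise (and/or reduces dimension), the posterior $q_\phi(x\mid z)$ is not a Dirac mass for $Z$-almost-every $z$, so there is irreducible uncertainty about $X$ given $Z$. Hence $I(X;Z)=H(X)-H(X\mid Z)<H(X)$, with the gap $H(X\mid Z)$ quantifying the information lost by the variational bottleneck — precisely the reconstruction term that appears (negated) in the ELBO~\eqref{eq:elbo}. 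I would also note that the data-processing inequality already gives $I(X;Z)\le H(X)$ in general, so the real content of (2) is the strictness, which rests on the stochasticity assumption.

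The main obstacle is the rigorous treatment of the continuous case: for a deterministic bijection between continuous variables the mutual information $I(X;Z)$ is literally $+\infty$, so the equality $I(X;Z)=H(X)=H(Z)$ must be stated either for discrete $X$ or with differential entropy plus an explicit convention; I would make this precise rather than gloss over it. A secondary point is isolating the minimal hypothesis on the stochastic model (non-degenerate encoder noise, or $\dim Z<\dim X$) under which $H(X\mid Z)>0$ is guaranteed, since without such an assumption a nominally "stochastic" map could in principle still be lossless.
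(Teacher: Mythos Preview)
Your proposal is correct and follows essentially the same route as the paper: both parts hinge on $I(X;Z)=H(X)-H(X\mid Z)$, with $H(X\mid Z)=0$ in the invertible case and $H(X\mid Z)>0$ in the stochastic case, and the equality $H(Z)=H(X)$ is obtained via the change-of-variables formula together with the unit-Jacobian property. Your treatment is in fact more careful than the paper's, which glosses over the $I(X;Z)=+\infty$ issue for continuous bijections and does not isolate the non-degeneracy hypothesis needed for strictness in part~(2).
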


\begin{proof}
For an invertible mapping \( f \colon \mathcal{X} \to \mathcal{Z} \) where \( z = f(x) \), the change-of-variables formula for differential entropy gives
\[
H(Z) = H(X) + \mathbb{E}_{X}\left[\log\left|\det\frac{\partial f(x)}{\partial x}\right|\right].
\]
In models such as normalizing flows or the \textbf{SGN-Flow variant}, the mapping \( f \) is designed to be volume preserving, meaning that
\[
\left|\det\frac{\partial f(x)}{\partial x}\right| = 1 \quad \text{for all } x.
\]
Thus, the expectation term vanishes:
\[
\mathbb{E}_{X}\left[\log\left|\det\frac{\partial f(x)}{\partial x}\right|\right] = 0,
\]
and we obtain
\[
H(Z) = H(X).
\]
Since \( f \) is deterministic and bijective for the models considered (Normalizing Flows and the SGN-Flow variant), it establishes a one-to-one correspondence between points in the input and latent spaces. For continuous random variables, this implies that no information is lost or gained in the transformation, although the differential entropy changes according to the volume distortion. In the specific case where \(f\) is volume-preserving (i.e., \(|\det Df(x)| = 1\) everywhere), we have shown \(H(Z) = H(X)\). This equality of differential entropies signifies that the transformation preserves the overall uncertainty or dispersion of the distribution, consistent with the preservation of information content. (Note: For strictly continuous variables, mutual information $I(X;Z)$ is often formally infinite, but the equality $H(Z)=H(X)$ confirms the map acts as a lossless information channel in an operational sense). This establishes the first claim regarding information preservation under volume-preserving maps.

  In contrast, for stochastic models such as VAEs, the encoder \( q_\phi(z|x) \) maps an input \( x \) to a distribution over latent variables \( z \) rather than to a unique \( z \). This stochasticity means that the conditional entropy \( H(X\mid Z) \) is strictly positive, reflecting the uncertainty in reconstructing \( x \) from \( z \). As a result, the mutual information satisfies
\[
I(X;Z) = H(X) - H(X\mid Z) < H(X).
\]
This demonstrates that stochastic models lose some information during the encoding process.
\end{proof}

\begin{corollary}[Volume Preservation Constraint]
\label{cor:vol_preservation}
An invertible generative model preserves information if and only if it preserves volume (i.e., has unit Jacobian) or explicitly accounts for volume changes.
\end{corollary}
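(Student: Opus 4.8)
The plan is to obtain this as a direct corollary of Theorem~\ref{thm:information}, and more precisely of the change-of-variables identity for differential entropy that appears in its proof. First I would fix the interpretation of ``preserves information.'' For a deterministic invertible map $f$ one always has $H(X\mid Z)=0$, hence $I(X;Z)=H(X)$ unconditionally, so the substantive content is that the \emph{entropy budget} is conserved, $H(Z)=H(X)$ — equivalently, that the model's reported log-density obeys the exact change-of-variables bookkeeping, so no entropy (and hence no likelihood mass) is spuriously created or destroyed in passing between $\mathcal{X}$ and $\mathcal{Z}$. Under this reading the single governing identity is
\[
H(Z)=H(X)+\mathbb{E}_{X}\!\left[\log\left|\det\frac{\partial f(x)}{\partial x}\right|\right],
\]
so the whole equivalence reduces to analysing when the expectation term vanishes or is compensated.

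For the ``if'' direction I would treat the two stated cases. If $f$ is volume preserving, then $\left|\det \partial f/\partial x\right|\equiv 1$, the logarithm vanishes pointwise, the expectation is zero, and $H(Z)=H(X)$; combined with $H(X\mid Z)=0$ this yields $I(X;Z)=H(X)=H(Z)$, exactly as in part~(1) of Theorem~\ref{thm:information}. If instead $f$ does not preserve volume but the model \emph{explicitly accounts for volume changes}, then by definition it carries the correction term $\log\left|\det \partial f/\partial x\right|$ in its density formula; the pushforward density $p_Z$ is then represented exactly, the displayed identity holds as a genuine equality with the correction included, and the joint law of $(X,Z)$ is reproduced faithfully — so again no information is lost.

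For the ``only if'' direction I would argue by contraposition: suppose $f$ neither preserves volume nor tracks the Jacobian. Then there is a positive-measure set on which $\log\left|\det \partial f/\partial x\right|\neq 0$, while the model effectively treats $H(Z)$ as $H(X)$; the true relation, however, carries the term $\mathbb{E}_{X}[\log|\det \partial f/\partial x|]$, which is generically nonzero, so the reported and actual entropies disagree and information in the entropy/likelihood sense is not preserved. The main obstacle — and the point I would state most carefully — is the edge case where $\mathbb{E}_{X}[\log|\det \partial f/\partial x|]=0$ for one particular input distribution even though $f$ is not volume preserving: here I would insist that ``preserves information'' be required as a property of the map (uniformly over admissible input distributions, equivalently pointwise in $x$), not as an accident of a single $p_X$; under that quantifier a non-unit Jacobian always admits a distribution on which the entropy balance fails, which restores the stated equivalence. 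I would close by observing that this is precisely why SGNs, whose flow $\Phi_T$ has unit Jacobian by Theorem~\ref{thm:symplectic}, fall in the ``preserves volume'' branch and enjoy exact likelihood evaluation with no correction term.
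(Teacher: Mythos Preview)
Your proposal is correct and follows essentially the same approach as the paper: both arguments hinge on the change-of-variables identity $H(Z)=H(X)+\mathbb{E}_X[\log|\det \partial f/\partial x|]$ and analyse when the correction term vanishes or is explicitly tracked. Your treatment of the ``only if'' direction is in fact more careful than the paper's, since you identify and resolve the edge case where the expectation could vanish for a particular $p_X$ despite a non-unit Jacobian --- a subtlety the paper glosses over by simply asserting the expectation ``will be non-zero.''
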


\begin{proof}
Let \( f \) be an invertible transformation mapping data \( x \) to latent representation \( z \), i.e., \( z = f(x) \). By the change-of-variables formula for differential entropy, we have
\[
H(Z) = H(X) + \mathbb{E}_{X}\left[\log\left|\det\frac{\partial f(x)}{\partial x}\right|\right].
\]
If the mapping \( f \) preserves volume, then
\[
\left|\det\frac{\partial f(x)}{\partial x}\right| = 1 \quad \text{for all } x,
\]
and thus
\[
\mathbb{E}_{X}\left[\log\left|\det\frac{\partial f(x)}{\partial x}\right|\right] = 0,
\]
which implies that
\[
H(Z) = H(X).
\]
Since the model is invertible, no information is lost and the mutual information satisfies
\[
I(X;Z) = H(X) - H(X\mid Z) = H(X),
\]
given that \( H(X\mid Z)=0 \).

  Conversely, if the mapping \( f \) does not preserve volume (i.e., the Jacobian determinant is not uniformly one), then the term
\[
\mathbb{E}_{X}\left[\log\left|\det\frac{\partial f(x)}{\partial x}\right|\right]
\]
will be non-zero, which results in either an increase or decrease in the differential entropy \( H(Z) \) relative to \( H(X) \). In such cases, unless the model explicitly corrects for these changes (for example, by incorporating the Jacobian determinant into its likelihood computation), the information in \( X \) is not perfectly preserved in \( Z \).

  Therefore, an invertible generative model preserves information if and only if it either preserves volume (i.e., has unit Jacobian everywhere) or it explicitly accounts for volume changes.
\end{proof}

\begin{proposition}[Trade-off Characterization]
\label{prop:tradeoff}
Generative models trade off as follows:
\begin{enumerate}
    \item \textbf{VAEs:} Low computational complexity but with information loss.
    \item \textbf{Normalizing Flows:} High expressivity and exact likelihood but high computational cost.
    \item \textbf{SGN-Flow:} Exact likelihood and low computational complexity (from the flow), with expressivity constrained to volume-preserving maps (which can be composed with a terminal non-volume-preserving map).
\end{enumerate}
\end{proposition}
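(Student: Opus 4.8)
The plan is to derive each of the three items as a direct corollary of results already established in Sections~\ref{sec:theory} and~\ref{sec:theoretical_comparison}, then assemble them into one comparative statement. Because the claims use qualitative language (``low''/``high'' cost, expressivity ``constrained''), the proof is a synthesis: for each model class I would cite the relevant quantitative bound, note which property it certifies, and contrast the resulting profiles.

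For VAEs (item~1), the information-loss assertion is exactly Theorem~\ref{thm:information}(2), where the stochastic encoder $q_\phi(z\mid x)$ forces $H(X\mid Z)>0$ and hence $I(X;Z)<H(X)$; the low-complexity assertion follows because evaluating the ELBO in \eqref{eq:elbo} requires only forward and backward passes through the encoder and decoder with no Jacobian-determinant term, giving cost linear in the latent dimension, which I would record as $\mathcal{O}(d)$ to align with Theorem~\ref{thm:complexity}.

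For normalizing flows (item~2), exact likelihood is immediate from the change-of-variables identity used in the proof of Theorem~\ref{thm:complexity}; the high computational cost is precisely the $\mathcal{O}(K\cdot C_J(d))$ time bound of Theorem~\ref{thm:complexity} together with the $\mathcal{O}(K\cdot d)$ memory bound of Proposition~\ref{prop:memory}; and high expressivity is the observation, drawn from Theorem~\ref{thm:expressivity}, that flows are not confined to $\mathcal{M}_{2d}$ --- coupling layers with nontrivial scaling have non-unit Jacobian and so realize transformations outside the volume-preserving class, a strictly larger family than $\mathcal{H}_d$. For SGNs (item~3), exact likelihood follows from Theorem~\ref{thm:symplectic}, which gives $\left|\det \partial\Phi_T/\partial z_0\right|=1$ so the likelihood integral carries no Jacobian correction; the low computational complexity is the $\mathcal{O}(T\cdot d)$ time bound of Theorem~\ref{thm:complexity} and the $\mathcal{O}(T+d)$ memory bound of Proposition~\ref{prop:memory}, neither involving $C_J(d)$; and the expressivity restriction is Theorem~\ref{thm:expressivity}(1), $\mathcal{H}_d\subset\mathcal{M}_{2d}$, tempered by part~(3), which shows the restriction is mild within the identity-isotopic volume-preserving class. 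Concatenating the three profiles yields the claim.

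The main obstacle is precision rather than depth: the adjectives must be anchored to the concrete asymptotic bounds already proved, and care is needed so that the ``low complexity'' attributed to both VAEs and SGNs is understood as the absence of the $C_J(d)$ factor --- not as identical constants --- while normalizing flows alone pay that factor. I would therefore phrase items~1 and~3 as sharing the property that likelihood or objective evaluation is independent of Jacobian-determinant computation, which is the genuine mathematical content distinguishing them from item~2.
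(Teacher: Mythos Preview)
Your proposal is correct and follows essentially the same synthesis approach as the paper's proof: both argue item by item, attributing to VAEs the information loss from stochastic encoding, to normalizing flows the exact likelihood at the price of Jacobian-determinant cost, and to SGNs the unit-Jacobian property that yields exact likelihood without that cost but confines the model to volume-preserving maps. Your version is in fact more careful than the paper's, since you anchor each qualitative adjective to a specific numbered result (Theorems~\ref{thm:symplectic}, \ref{thm:complexity}, \ref{thm:expressivity}, \ref{thm:information} and Proposition~\ref{prop:memory}), whereas the paper's proof simply restates the properties in prose without explicit cross-references.
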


\begin{proof}
VAEs employ approximate inference, typically optimizing a variational lower bound, which leads to an approximate posterior and consequently some loss of information about the data distribution. This results in a lower computational burden but with a trade-off in the fidelity of the representation.

  Normalizing flows construct a sequence of invertible transformations that allow for exact likelihood computation. However, to maintain invertibility, these models often require the computation of Jacobian determinants or related quantities, which can be computationally expensive (scaling poorly with the latent dimension in the worst case).

  SGNs, by contrast, leverage symplectic integrators to simulate Hamiltonian dynamics. These integrators are designed to be volume preserving (i.e., they have a unit Jacobian), which means that exact likelihood evaluation is achieved without incurring the computational cost of Jacobian determinant calculations. The drawback is that the class of transformations that SGNs can represent is restricted to those that preserve volume. In other words, while SGNs enjoy lower computational cost and exact likelihoods, they are less expressive than normalizing flows when it comes to representing general invertible maps.
\end{proof}
\section{Strengthened Universal Approximation Results}
\label{sec:universal_approximation}
\subsection{Universal Approximation of Volume-Preserving Maps}
\begin{theorem}[Universal Approximation of Volume-Preserving Maps]
\label{thm:univ_approx_stronger}
Let \( \Omega\subset\mathbb{R}^{2d} \) be a compact set and \( \Phi:\Omega\to\mathbb{R}^{2d} \) be a \( C^1 \)-smooth volume-preserving diffeomorphism isotopic to the identity. Then, for any \( \epsilon>0 \), there exist:
\begin{enumerate}
    \item A neural network Hamiltonian \( H_\psi:\mathbb{R}^{2d}\to\mathbb{R} \) with Lipschitz continuous gradients,
    \item A time \( T>0 \) and step size \( \Delta t>0 \) with \( N=T/\Delta t \),
\end{enumerate}
such that the symplectic flow \( \Phi_T \) induced by the leapfrog integrator satisfies
\[
\sup_{z\in\Omega}\|\Phi_T(z)-\Phi(z)\|<\epsilon.
\]
\end{theorem}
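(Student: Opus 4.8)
The plan is to build $\Phi_T$ from $\Phi$ through three approximation stages, giving each an error budget of $\epsilon/3$ and combining by the triangle inequality: first replace $\Phi$ by the exact flow of a smooth Hamiltonian, then replace that by the exact flow of a neural-network Hamiltonian, and finally replace that flow by its leapfrog discretization. For the first stage I would invoke Theorem~\ref{thm:expressivity}(3): since $\Omega$ is compact and $\Phi$ is volume-preserving and isotopic to the identity, there is a smooth Hamiltonian $H^\star$ — which we may assume compactly supported after a cutoff outside a large ball $B\supset\Omega$ — whose (a priori time-dependent) Hamiltonian flow $\Psi$ obeys $\sup_{z\in\Omega}\|\Psi(z)-\Phi(z)\|<\epsilon/3$. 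To match the statement's request for a single \emph{autonomous} network Hamiltonian I would remove the time dependence either by the standard suspension trick (adjoin a conjugate pair $(s,e)$ and run $H^\star(s,q,p)+e$ on the extended phase space, then project back) or by approximating the time-dependent flow through a finite Lie--Trotter composition of short autonomous Hamiltonian flows and concatenating the corresponding integrations.

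For the second stage, let $K\supset\Omega$ be a compact set containing all $\Psi$-trajectories issuing from $\Omega$ up to time $T$. I would apply a $C^1$ universal approximation theorem (Hornik-type, with smooth activations such as $\tanh$ or softplus so that $\nabla H_\psi$ is automatically Lipschitz on $K$) to get $\|H_\psi-H^\star\|_{C^1(K)}<\delta$. A Gr\"onwall estimate then controls the gap between the two exact flows: if $L$ is a Lipschitz constant of $J\nabla H^\star$ on $K$, the flows of $J\nabla H_\psi$ and $J\nabla H^\star$ differ by at most $\delta T e^{LT}$ on $\Omega$, and the same bound keeps the $H_\psi$-trajectories inside a small enlargement of $K$; choosing $\delta$ small after $T$ and $L$ are fixed makes this term $<\epsilon/3$.

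For the third stage I would use that the leapfrog scheme \eqref{eq:leapfrog1}--\eqref{eq:leapfrog3} is symplectic, reversible, and consistent of order two: for the vector field $J\nabla H_\psi$ with Lipschitz gradient on the relevant compact region, the local error is $\mathcal{O}(\Delta t^3)$ and the global error over the fixed horizon $T$ is $\mathcal{O}(T\Delta t^2)$, so taking $N=T/\Delta t$ large enough gives $\sup_{z\in\Omega}\|\Phi_T(z)-\Psi_\psi(z)\|<\epsilon/3$ for the exact $H_\psi$-flow $\Psi_\psi$. Summing the three bounds yields $\sup_{z\in\Omega}\|\Phi_T(z)-\Phi(z)\|<\epsilon$. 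If one insists on the classical separable leapfrog one restricts to $H_\psi(q,p)=\tfrac12\|p\|^2+V_\psi(q)$ and approximates within that subclass; otherwise the generalized/implicit leapfrog applies and remains symplectic and second order.

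The genuinely delicate step is the first one. The claim imported from Theorem~\ref{thm:expressivity}(3) — that an arbitrary volume-preserving diffeomorphism isotopic to the identity is uniformly approximable by Hamiltonian flows — is subtle because for $d\ge2$ the symplectomorphism group is a proper subgroup of the volume-preserving diffeomorphism group, so the approximation cannot be exact and rests on the non-Hamiltonian, divergence-free remainder in the Hodge-type decomposition of the generating vector field being negligible in the $C^0$ topology on $\Omega$; this needs real justification and is automatic only when $d=1$ (where area-preserving and symplectic coincide) or when $\Phi$ is itself assumed symplectically isotopic to the identity. The remaining obstacles are routine: obtaining a \emph{uniform} Lipschitz bound on $\nabla H_\psi$ simultaneously with the $C^1$ error bound, and the possibly large Gr\"onwall factor $e^{LT}$ — harmless here since $T$ and $L$ are pinned down before $\delta$ and $\Delta t$ are chosen.
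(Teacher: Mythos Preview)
Your proposal is correct and follows essentially the same three-stage strategy as the paper's proof: represent (or approximate) $\Phi$ by the exact flow of a smooth Hamiltonian, approximate that Hamiltonian in $C^1$ by a neural network $H_\psi$, discretize the resulting flow with leapfrog, and combine the three errors by the triangle inequality. Your treatment is in fact more careful than the paper's --- you make the Gr\"onwall step and the trajectory-confinement set $K$ explicit, address the autonomous-versus-time-dependent issue, and rightly flag as delicate the claim that volume-preserving isotopies are approximable by Hamiltonian flows for $d\ge2$, a point the paper's Step~1 handles by the (questionable) assertion that contractibility of $\mathbb{R}^{2d}$ forces every divergence-free field to be of the form $J\nabla H_t$.
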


\begin{proof}
We prove the theorem in several steps.

  \textbf{Step 1: Representing the Target Map as a Flow.}\\
Since \(\Phi\) is a \(C^1\) volume-preserving diffeomorphism on the compact set \(\Omega\) and is isotopic to the identity, Moser's theorem guarantees the existence of a smooth one-parameter family of volume-preserving diffeomorphisms \(\{\Phi_t\}_{t\in[0,1]}\) such that
\[
\Phi_0 = \mathrm{Id} \quad \text{and} \quad \Phi_1 = \Phi.
\]
Moreover, there exists a time-dependent divergence-free vector field \(v_t(z)\) on \(\Omega\) satisfying
\[
\frac{d}{dt}\Phi_t(z) = v_t(\Phi_t(z)).
\]
On the contractible domain $\mathbb{R}^{2d}$, the divergence-free vector field $v_t$ generating the isotopy $\Phi_t$ can be decomposed (e.g., via Helmholtz decomposition \citep{helmholtz1858}) into components. While not every divergence-free field is Hamiltonian, Moser's theorem ensures the existence of a volume-preserving isotopy. Crucially, any sufficiently smooth volume-preserving diffeomorphism isotopic to the identity on a compact set can be approximated arbitrarily well by the flow of a Hamiltonian vector field \citep{mcduff_salamon_2017}. Thus, there exists a (possibly time-dependent) Hamiltonian $H_t$ whose flow approximates $\Phi_t$.
Here,
\[
J = \begin{pmatrix} 0 & I_d \\ -I_d & 0 \end{pmatrix}
\]
is the canonical symplectic matrix. In other words, the target map \(\Phi\) can be viewed as the time-\(1\) flow of a (possibly time-dependent) Hamiltonian vector field.

  \textbf{Step 2: Approximating the Hamiltonian with a Neural Network.}\\
For a fixed time \(t\), by the universal approximation theorem for neural networks, for any \(\epsilon_1>0\) there exists a neural network \(H_\psi:\mathbb{R}^{2d}\to\mathbb{R}\) with Lipschitz continuous gradients that approximates the true Hamiltonian \(H_t\) (or an appropriate average over \(t\)) uniformly in the \(C^1\) norm on \(\Omega\). That is,
\[
\sup_{z\in\Omega} \|\nabla H_\psi(z)-\nabla H_t(z)\| < \epsilon_1.
\]
Consequently, the Hamiltonian vector field \(J\nabla H_\psi(z)\) approximates \(J\nabla H_t(z)\) uniformly on \(\Omega\).

  \textbf{Step 3: Discretizing the Flow via a Symplectic Integrator.}\\
Let \(\Phi^{H_\psi}_t\) denote the continuous flow generated by the Hamiltonian vector field \(J\nabla H_\psi(z)\). We now discretize this flow using a symplectic integrator (e.g., the leapfrog method). For a chosen step size \(\Delta t\) and total integration time \(T\) (with \(N=T/\Delta t\)), let \(\Phi_T\) be the discrete flow obtained by iterating the integrator. Standard error analysis for symplectic integrators shows that, for sufficiently small \(\Delta t\), there exists \(\epsilon_2>0\) such that
\[
\sup_{z\in\Omega}\|\Phi_T(z)-\Phi^{H_\psi}_T(z)\| < \epsilon_2.
\]
Here, \(\epsilon_2\) can be made arbitrarily small by choosing \(\Delta t\) appropriately.

  \textbf{Step 4: Combining the Approximations.}\\
Denote by \(\Phi(z)\) the target map, which equals \(\Phi_1(z)\). Using the triangle inequality, we have
\[
\sup_{z\in\Omega}\|\Phi_T(z)-\Phi(z)\| \leq \sup_{z\in\Omega}\|\Phi_T(z)-\Phi^{H_\psi}_T(z)\| + \sup_{z\in\Omega}\|\Phi^{H_\psi}_T(z)-\Phi(z)\|.
\]
The first term is bounded by \(\epsilon_2\) as shown above. The second term reflects the error due to approximating the true Hamiltonian \(H_t\) by the neural network \(H_\psi\); by our choice of \(\epsilon_1\) (and appropriate control over the integration time \(T\)), this term can be bounded by \(\epsilon_1\). Therefore, by choosing \(\epsilon_1\) and \(\epsilon_2\) such that
\[
\epsilon_1 + \epsilon_2 < \epsilon,
\]
we obtain
\[
\sup_{z\in\Omega}\|\Phi_T(z)-\Phi(z)\| < \epsilon.
\]

  \textbf{Conclusion:} By the above steps, we have constructed a neural network Hamiltonian \(H_\psi\) with Lipschitz continuous gradients, and by choosing appropriate integration parameters \(T\) and \(\Delta t\) (with \(N=T/\Delta t\)), the symplectic flow \(\Phi_T\) approximates the target volume-preserving diffeomorphism \(\Phi\) uniformly on \(\Omega\) to within any pre-specified error \(\epsilon>0\).
\end{proof}

\subsection{Quantitative Bounds on Approximation Error}
\begin{theorem}[Quantitative Approximation Bounds]
\label{thm:quant_bounds}
Let \( \Phi:\Omega\to\mathbb{R}^{2d} \) be a \( C^2 \)-smooth volume-preserving diffeomorphism on a compact set \( \Omega\subset\mathbb{R}^{2d} \). Let \( H_\psi \) be a neural network Hamiltonian with \( n \) neurons per layer and \( L \) layers, and let \( \Phi_T \) denote the flow map obtained from the leapfrog integrator with step size \( \Delta t \) and \( N=T/\Delta t \) steps. Then,
\[
\sup_{z\in\Omega}\|\Phi_T(z)-\Phi(z)\|\leq C_1\cdot (n\cdot L)^{-1/(2d)}+C_2\cdot\Delta t^2,
\]
with \( C_1, C_2 \) constants depending on \( \Phi \) and the network architecture.
\end{theorem}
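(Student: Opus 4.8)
The plan is to split the total error into a \emph{modeling error} (how well a neural-network Hamiltonian can reproduce an exact generating Hamiltonian of $\Phi$) and a \emph{discretization error} (how well the leapfrog map tracks the exact Hamiltonian flow), and bound each quantitatively. Writing $\Phi^{H_\psi}_T$ for the exact time-$T$ flow of the Hamiltonian vector field $J\nabla H_\psi$ and inserting it between $\Phi_T$ and $\Phi$,
\[
\sup_{z\in\Omega}\|\Phi_T(z)-\Phi(z)\|\;\le\;\underbrace{\sup_{z\in\Omega}\|\Phi_T(z)-\Phi^{H_\psi}_T(z)\|}_{\text{(I): discretization}}\;+\;\underbrace{\sup_{z\in\Omega}\|\Phi^{H_\psi}_T(z)-\Phi(z)\|}_{\text{(II): modeling}}.
\]
Since $\Omega$ is compact and the flows are continuous, all trajectories $t\mapsto\Phi_t(z)$, $z\in\Omega$, $t\in[0,T]$, stay in a fixed compact set $K\supset\Omega\cup\Phi(\Omega)$; all Lipschitz and derivative bounds below are taken on $K$ (or a compact neighborhood of it), exactly as in the proof of Theorem~\ref{thm:univ_approx_stronger}.

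For term (II), I would first represent $\Phi$ as the time-$1$ flow of a (possibly time-dependent) Hamiltonian $H_t$ via Moser's theorem, as in Step~1 of the proof of Theorem~\ref{thm:univ_approx_stronger}; since $\Phi\in C^2$ the generating field $v_t=J\nabla H_t$ is $C^1$, hence $H_t\in C^2$. Next, a quantitative universal approximation theorem — approximation of a $C^2$ function on a $2d$-dimensional compact set in the $C^1$ norm — yields a network $H_\psi$ with $n$ neurons per layer and $L$ layers (about $nL$ neurons in total) such that
\[
\sup_{z\in K}\|\nabla H_\psi(z)-\nabla H_t(z)\|\;\le\;c_0\,(nL)^{-1/(2d)},
\]
the exponent $-1/(2d)$ being the classical rate $(s-m)/D$ with smoothness $s=2$, derivative order $m=1$, and dimension $D=2d$; the construction can moreover be arranged so that $\|H_\psi\|_{C^3(K)}$ is bounded by a constant depending only on $\Phi$. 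Feeding this $C^1$ (hence vector-field sup-norm) estimate into Gronwall's inequality for the two continuous flows, with $\Lambda$ a Lipschitz constant for $J\nabla H_t$ on $K$, gives
\[
\sup_{z\in\Omega}\|\Phi^{H_\psi}_T(z)-\Phi(z)\|\;\le\;\frac{c_0\,(nL)^{-1/(2d)}}{\Lambda}\bigl(e^{\Lambda T}-1\bigr)\;=:\;C_1\,(nL)^{-1/(2d)},
\]
with $C_1$ depending on $\Phi$ and $T$ only. The time dependence of $H_t$ is absorbed either by the state-augmentation device already used in Theorem~\ref{thm:univ_approx_stronger} or, since $T$ is fixed, into the constants.

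For term (I), I would invoke the standard global-error analysis of one-step methods: the Störmer–Verlet (leapfrog) scheme \eqref{eq:leapfrog1}--\eqref{eq:leapfrog3} is consistent of order two, so its local truncation error is $\mathcal{O}(\Delta t^3)$, and applied over $N=T/\Delta t$ steps to a vector field with bounded derivatives it accumulates a global error
\[
\sup_{z\in\Omega}\|\Phi_T(z)-\Phi^{H_\psi}_T(z)\|\;\le\;C_2\,\Delta t^2,
\]
where $C_2$ depends on $T$, the Lipschitz constant of $J\nabla H_\psi$, and bounds on the derivatives of $H_\psi$ up to order three — all uniformly controlled by the arrangement in term (II), so $C_2$ is a genuine constant independent of $n,L$. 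Adding the two estimates yields $\sup_{z\in\Omega}\|\Phi_T(z)-\Phi(z)\|\le C_1(nL)^{-1/(2d)}+C_2\Delta t^2$, as claimed.

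The main obstacle is term (II): one needs the \emph{quantitative} approximation estimate in the $C^1$ norm rather than merely $C^0$ (the flow is driven by $\nabla H_\psi$), and one must ensure the approximant can be chosen with controlled third derivatives so that the leapfrog constant $C_2$ does not secretly depend on $n,L$. The dimension-dependent exponent $1/(2d)$ is precisely the curse of dimensionality entering through this step; the Gronwall propagation and the classical $\mathcal{O}(\Delta t^2)$ global error for leapfrog are then routine.
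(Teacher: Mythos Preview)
Your proposal is correct and follows essentially the same route as the paper: split via the triangle inequality into a neural-network approximation error (bounded by $(nL)^{-1/(2d)}$ from quantitative $C^1$ approximation in dimension $2d$) and a leapfrog discretization error (the standard $\mathcal{O}(\Delta t^2)$ global bound for a second-order method). Your write-up is in fact more careful than the paper's own proof --- you make explicit the Gronwall step propagating the $C^1$ Hamiltonian error to the flow, and you flag the need to control $\|H_\psi\|_{C^3}$ uniformly so that $C_2$ is genuinely independent of $n$ and $L$, a point the paper leaves implicit.
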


\begin{proof}
We decompose the overall error into two parts:
\[
\|\Phi_T - \Phi\| \le \underbrace{\|\Phi_T - \Phi_H\|}_{\text{Integration error}} + \underbrace{\|\Phi_H - \Phi\|}_{\text{Approximation error}},
\]
where \(\Phi_H\) denotes the true continuous flow generated by the Hamiltonian \( H \) that exactly produces \(\Phi\).

  \textbf{(1) Approximation Error:} \\
By the universal approximation theorem for neural networks, a neural network with \( n \) neurons per layer and \( L \) layers can approximate a smooth function on a compact set with error that decreases as a function of the network size. In our setting, we wish to approximate the underlying Hamiltonian \( H \) (or more precisely, its gradient, since the flow is generated by the Hamiltonian vector field \( J\nabla H \)). Standard results in approximation theory (see, e.g., results on approximation in Sobolev spaces) indicate that, for a function defined on \( \mathbb{R}^{2d} \), the error in the \( C^1 \)-norm can be bounded by
\[
\|\nabla H - \nabla H_\psi\|_{C^0(\Omega)} \le C_1\cdot (n\cdot L)^{-1/(2d)},
\]
where \( C_1 \) is a constant depending on the smoothness of \( H \) and the geometry of \( \Omega \). Since the flow \(\Phi_H\) depends continuously on the vector field, this error propagates to the flow so that
\[
\|\Phi_H - \Phi\| \le C_1\cdot (n\cdot L)^{-1/(2d)}.
\]

  \textbf{(2) Integration Error:} \\
The leapfrog integrator is a second-order method. This means that, for each integration step, the local truncation error is of order \(\mathcal{O}(\Delta t^3)\), and the global error over \( N \) steps accumulates to be of order \(\mathcal{O}(\Delta t^2)\) (since \( N \propto 1/\Delta t \) when \( T \) is fixed). Therefore, there exists a constant \( C_2 \) (depending on higher derivatives of \( H \) and the total integration time \( T \)) such that
\[
\|\Phi_T - \Phi_H\| \le C_2\cdot \Delta t^2.
\]

  \textbf{(3) Combining the Errors:} \\
By the triangle inequality,
\[
\|\Phi_T - \Phi\| \le \|\Phi_T - \Phi_H\| + \|\Phi_H - \Phi\| \le C_2\cdot \Delta t^2 + C_1\cdot (n\cdot L)^{-1/(2d)}.
\]
This completes the proof.
\end{proof}

\subsection{Expressivity Classes}
\begin{theorem}[Expressivity Classes]
\label{thm:expressivity_classes}
Volume-preserving diffeomorphisms on \( \mathbb{R}^{2d} \) can be partitioned as follows:
\begin{enumerate}
    \item Class \( \mathcal{C}_1 \): Exactly representable by flows of quadratic Hamiltonians.
    \item Class \( \mathcal{C}_2 \): Efficiently approximable by neural network Hamiltonians of moderate complexity.
    \item Class \( \mathcal{C}_3 \): Only approximable by Hamiltonian flows with exponential network complexity.
\end{enumerate}
Moreover, some maps in \( \mathcal{C}_3 \) can be efficiently represented by normalizing flows.
\end{theorem}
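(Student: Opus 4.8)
The plan is to first turn the three informally-stated classes into precise subsets of $\mathcal{M}_{2d}$ indexed by the complexity of a generating Hamiltonian, and then establish membership and strict separation through four essentially independent arguments: an explicit flow computation for $\mathcal{C}_1$, the quantitative bound of Theorem~\ref{thm:quant_bounds} for $\mathcal{C}_2$, a metric-entropy lower bound for $\mathcal{C}_3$, and an explicit coupling-layer construction for the concluding remark. For $\mathcal{C}_1$, I would note that a quadratic Hamiltonian $H(z)=\tfrac12 z^{\top}Sz$ with $S=S^{\top}$ generates the linear system $\dot z = JSz$, whose time-$t$ flow is the linear symplectic map $\Phi_t=\exp(tJS)\in\mathrm{Sp}(2d,\mathbb{R})$; conversely, the flow of any quadratic Hamiltonian is linear. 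Allowing piecewise-constant time-dependent quadratic Hamiltonians and composing the corresponding exponentials realizes the whole identity component of $\mathrm{Sp}(2d,\mathbb{R})$, and since the symplectic group is connected this is all of it. Thus $\mathcal{C}_1$ is exactly the group of linear symplectic maps, which settles the ``exactly representable'' claim.

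For $\mathcal{C}_2$, I would define it as the set of time-$1$ flows of Hamiltonians in a class admitting a dimension-robust approximation rate, e.g. Hamiltonians of finite Barron norm, Hamiltonians of ridge form $H(z)=\sum_k g_k(a_k^{\top}z)$, or $H\in C^{s}(\Omega)$ with $s$ large relative to $d$. For such $H$ the universal-approximation theorem in the $C^1$ norm (as already invoked in Theorems~\ref{thm:univ_approx_stronger} and~\ref{thm:quant_bounds}) gives $\|\nabla H-\nabla H_\psi\|_{C^0(\Omega)}\le C (nL)^{-\alpha}$ with $\alpha$ independent of $d$ in the Barron case; feeding this into Theorem~\ref{thm:quant_bounds} together with a Gr\"onwall estimate for the dependence of a flow on its generating vector field shows accuracy $\epsilon$ is reached with $nL=\mathrm{poly}(1/\epsilon)$ and $\Delta t=O(\sqrt{\epsilon})$, i.e.\ ``moderate complexity.'' The inclusion $\mathcal{C}_1\subset\mathcal{C}_2$ is immediate since quadratics lie in every such class.

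The class $\mathcal{C}_3$ is where I expect the real work. The target is a volume-preserving $\Phi$ for which every Hamiltonian-flow approximation to fixed accuracy requires network size exponential in $d$. I would take $H^{\star}$ in the unit ball of $C^{s}(\Omega)$ attaining the classical lower bound $\inf_{\#\mathrm{params}\le N}\|H^{\star}-H_\psi\|_{C^1(\Omega)}\gtrsim N^{-(s-1)/(2d)}$ (Maiorov--Pinkus / DeVore-type metric-entropy bounds for neural approximation), and set $\Phi=\Phi_1^{H^{\star}}$. The subtlety, and the main obstacle, is that this lower bound concerns approximating $H^{\star}$, whereas the theorem asks about approximating $\Phi$: a priori a good flow approximation could arise from a poor Hamiltonian approximation. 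I would close the gap by amplitude rescaling and a bi-Lipschitz argument: replace $H^{\star}$ by $\delta H^{\star}$ so the flow stays in a small neighborhood of the identity where $\Phi_1^{\delta H}-\mathrm{Id}=\delta J\nabla H^{\star}+O(\delta^2)$ and $H\mapsto\Phi_1^{H}$ is bi-Lipschitz onto its image in $C^1$; then $\|\Phi_T^{H_\psi}-\Phi\|<\epsilon$ forces $\|\nabla H_\psi-\delta\nabla H^{\star}\|_{C^0(\Omega)}\lesssim\epsilon$, and the entropy bound yields $\#\mathrm{params}\gtrsim\epsilon^{-2d/(s-1)}$, exponential in $d$ for fixed $s$. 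Strictness $\mathcal{C}_2\subsetneq\mathcal{C}_3$ then holds because $H^{\star}$ is by construction outside any polynomial-rate class.

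Finally, for the concluding assertion I would exhibit a concrete $\Phi\in\mathcal{C}_3$ represented by a normalizing flow with $O(K)$ polynomial-width layers. Take $\Phi=f_K\circ\cdots\circ f_1$ to be a RealNVP-style stack of additive coupling layers with directions alternating between shearing $p$ along $\nabla_q$ and $q$ along $\nabla_p$, each layer driven by a small, generically chosen network; $\Phi$ is volume preserving and is by construction cheap for a coupling-based flow. To certify $\Phi\in\mathcal{C}_3$ I would compute its backward-error (Baker--Campbell--Hausdorff) Hamiltonian: because the two families of shear generators do not commute, the iterated commutators accumulate Fourier content at frequencies growing with $K$, so for generic layer networks the effective Hamiltonian leaves every Barron ball and falls under the entropy lower bound of the previous paragraph; a Baire-category / measure-theoretic genericity argument rules out accidental cancellation. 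This produces the promised map that is exponentially hard for SGNs yet efficient for normalizing flows, completing the proof.
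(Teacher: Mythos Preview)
Your proposal is substantially more rigorous than the paper's own argument, which is essentially a descriptive sketch: for $\mathcal{C}_1$ the paper simply writes down $\Phi_t(z)=e^{tJA}z$; for $\mathcal{C}_2$ it invokes the universal approximation theorem qualitatively (``polynomial in $1/\epsilon$'') without specifying a function class; for $\mathcal{C}_3$ it asserts the existence of ``highly oscillatory'' maps requiring exponential complexity without any lower-bound machinery; and for the final claim it merely remarks that coupling layers ``exploit problem structure.'' The paper does not address the gap you correctly identify as the main obstacle, namely that hardness of approximating $H$ does not automatically transfer to hardness of approximating the time-one flow $\Phi_1^H$, nor does it exhibit a concrete map in $\mathcal{C}_3$ that is cheap for normalizing flows.

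Your route---precise classes via Barron norm, the Gr\"onwall-plus-Theorem~\ref{thm:quant_bounds} upper bound for $\mathcal{C}_2$, the metric-entropy lower bound combined with a small-amplitude bi-Lipschitz linearization for $\mathcal{C}_3$, and an explicit RealNVP stack with a BCH/genericity argument for the last assertion---is genuinely different and buys actual separations rather than heuristics. The bi-Lipschitz step (near the identity, $\Phi_1^{\delta H}-\mathrm{Id}=\delta J\nabla H+O(\delta^2)$, hence $H\mapsto\Phi_1^{\delta H}$ is locally invertible with controlled distortion in $C^1$) is the right idea and can be made precise via the variational equation; the BCH-commutator growth argument for the coupling stack is the most speculative part and would need a careful Fourier-side estimate to become a proof, but the strategy is sound. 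In short, your plan would, if carried through, supply the quantitative content the paper's proof lacks.
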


\begin{proof}
We consider each class in turn.

  \textbf{Class \( \mathcal{C}_1 \): Exactly Representable Maps.} \\
Quadratic Hamiltonians have the form
\[
H(q,p) = \frac{1}{2} z^T A z,
\]
with \( z=(q,p) \) and a symmetric matrix \( A \). The corresponding Hamiltonian flow is linear and can be written as
\[
\Phi_t(z) = e^{tJ A} z,
\]
where \( J \) is the canonical symplectic matrix. Since the exponential of a matrix is computed exactly (or to arbitrary precision) and the mapping is linear, every volume-preserving diffeomorphism that is linear (or that can be exactly represented by such a flow) falls into \( \mathcal{C}_1 \).

 \textbf{Class \( \mathcal{C}_2 \): Efficiently Approximable Maps.} \\
For many smooth volume-preserving diffeomorphisms, the underlying Hamiltonian generating the flow is a smooth function on a compact set. By the universal approximation theorem for neural networks, one can approximate a smooth function to within any \(\epsilon > 0\) with a neural network whose size grows polynomially in \(1/\epsilon\). In particular, there exists a neural network Hamiltonian \( H_\psi \) with a moderate number of neurons per layer and a moderate number of layers such that
\[
\sup_{z\in\Omega}\|\nabla H(z) - \nabla H_\psi(z)\| < \epsilon.
\]
Because the flow generated by \( H_\psi \) depends continuously on the Hamiltonian, the corresponding symplectic flow can approximate the target flow with an error that is also polynomial in the network size. Therefore, maps in \( \mathcal{C}_2 \) are efficiently approximable by neural network Hamiltonians.

\textbf{Class \( \mathcal{C}_3 \): Hard-to-Approximate Maps.} \\
There exist volume-preserving diffeomorphisms that exhibit highly oscillatory behavior or intricate structures which make the associated Hamiltonian very complex. For such maps, approximating the Hamiltonian \( H \) uniformly in the \( C^1 \) norm on a compact set may require a neural network whose size grows exponentially with the dimension \( d \) (or with \( 1/\epsilon \)). In these cases, the network complexity is exponential, meaning that these maps can only be approximated by Hamiltonian flows with exponential network complexity. Notably, the structured design of normalizing flows (e.g., using coupling layers that exploit problem structure) can sometimes represent these complex transformations more efficiently than a generic neural network approximation of the Hamiltonian.

\textbf{Conclusion:} \\
This partitioning illustrates a trade-off between expressivity and computational efficiency. While quadratic Hamiltonians (\( \mathcal{C}_1 \)) are exactly representable and many smooth maps (\( \mathcal{C}_2 \)) can be efficiently approximated, there exists a class of maps (\( \mathcal{C}_3 \)) for which a direct Hamiltonian approximation incurs exponential complexity. Interestingly, normalizing flows may overcome this limitation in certain cases by leveraging structured invertible transformations.
\end{proof}

Theorem~\ref{thm:expressivity} relies on the result that Hamiltonian flows can approximate volume-preserving diffeomorphisms isotopic to the identity. More formally, on a compact symplectic manifold $(M, \omega)$, the group of Hamiltonian diffeomorphisms $\mathrm{Ham}(M, \omega)$ is $C^0$-dense in the identity component of the group of volume-preserving (symplectomorphisms) $\mathrm{Symp}_0(M, \omega)$. This result, stemming from the work of Eliashberg, Gromov, and others \citep{mcduff_salamon_2017}, ensures that any smooth path of volume-preserving transformations starting at the identity can be uniformly approximated by the flow generated by some (possibly time-dependent) Hamiltonian function. Our Theorem~\ref{thm:univ_approx_stronger} then combines this with the universal approximation capabilities of neural networks to approximate the required Hamiltonian and a symplectic integrator to approximate its flow.

\subsection{Extended Universal Approximation for Non-Volume-Preserving Maps}
\begin{theorem}[Extended Universal Approximation]
\label{thm:extended_approx}
Let \( \Phi:\Omega\to\mathbb{R}^d \) be a \( C^1 \)-smooth diffeomorphism (not necessarily volume preserving) on a compact set \( \Omega \). Then, there exists an SGN-based model that, by incorporating an explicit density correction term, approximates \( \Phi \) uniformly to within any \( \epsilon>0 \).
\end{theorem}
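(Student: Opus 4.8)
The plan is to reduce the non-volume-preserving case to Theorem~\ref{thm:univ_approx_stronger} by \emph{lifting} $\Phi$ to a canonically volume-preserving map on the doubled space $\mathbb{R}^{2d}=\{(q,p)\}$ (identifying the data space $\mathbb{R}^d$ with the position factor, consistent with the SGN state $z=(q,p)$), approximating that lift by a leapfrog Hamiltonian flow, and then reading off the $q$-component while recording the now non-trivial Jacobian as the ``explicit density correction term.'' Concretely, after a harmless preliminary mollification replacing $\Phi$ by a $C^\infty$ diffeomorphism that is $\epsilon/3$-close to it on $\Omega$ (so that second derivatives are available, with uniform bounds), I would use the cotangent lift
\[
\widehat{\Phi}(q,p)=\bigl(\Phi(q),\,(D\Phi(q))^{-\mathsf{T}}p\bigr),
\]
which preserves the canonical one-form $\sum_i p_i\,dq_i$, hence the symplectic form $\omega=\sum_i dq_i\wedge dp_i$, and whose block-triangular differential has determinant $\det D\Phi(q)\cdot\det\bigl((D\Phi(q))^{-\mathsf{T}}\bigr)=1$. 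Thus $\widehat{\Phi}$ is a genuine volume-preserving diffeomorphism of $\mathbb{R}^{2d}$ to which the earlier machinery applies.

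Next I would invoke Theorem~\ref{thm:univ_approx_stronger} on the compact set $\Omega'=\Omega\times\{0\}\subset\mathbb{R}^{2d}$ (or a compact neighbourhood): there exist a neural-network Hamiltonian $H_\psi$ with Lipschitz gradient and integration parameters $T,\Delta t$ such that the induced leapfrog flow $\Phi_T$ obeys $\sup_{w\in\Omega'}\|\Phi_T(w)-\widehat{\Phi}(w)\|<\epsilon/3$. Evaluating at $w=(q,0)$, using $\widehat{\Phi}(q,0)=(\Phi(q),0)$, and projecting onto the position coordinates yields
\[
\sup_{q\in\Omega}\bigl\|\pi_q\Phi_T(q,0)-\Phi(q)\bigr\|<\tfrac{2\epsilon}{3}<\epsilon ,
\]
so the SGN-based map $\Phi^{\mathrm{SGN}}(q):=\pi_q\Phi_T(q,0)$ (momenta initialised at $0$) approximates $\Phi$ uniformly on $\Omega$. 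For likelihood evaluation one appends the explicit correction $c(q):=\log\bigl|\det D\Phi^{\mathrm{SGN}}(q)\bigr|$, i.e.\ the log-determinant of the $q$–$q$ block of $D\Phi_T(q,0)$; I would note that by the block-triangular structure of $\widehat{\Phi}$ this is, up to the integrator error, the negative log-determinant of the momentum–momentum block, hence carried along by the $p$-coordinates rather than recomputed from scratch — this is the precise sense in which the density correction is ``explicit.''

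The step I expect to be the crux is verifying that $\widehat{\Phi}$ — and hence the hypotheses of Theorem~\ref{thm:univ_approx_stronger} — is \emph{isotopic to the identity} among volume-preserving diffeomorphisms, since the present statement assumes only that $\Phi$ is a diffeomorphism. I would dispatch this by recording the (mild) additional hypothesis that $\Phi$ is orientation-preserving and isotopic to the identity in $\mathrm{Diff}(\Omega)$ — automatic when $\Omega$ is, say, star-shaped, via a radial homotopy after composing with an affine map — and observing that any isotopy $\{\Phi_s\}$ from $\mathrm{Id}$ to $\Phi$ lifts to the isotopy $\{\widehat{\Phi_s}\}$ of symplectomorphisms, generated by a time-dependent Hamiltonian vector field. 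Secondary technical points to settle are: the $C^1$-to-$C^2$ smoothness gap (absorbed by the initial mollification); the uniform bounds $0<m\le\det D\Phi\le M$ on the compact set that keep $\widehat{\Phi}$ and its inverse Lipschitz (controlling how the $\epsilon/3$ integrator/approximation errors propagate to the $q$-projection and to $c(q)$); and the fact that $\Phi_T$ need not preserve the section $\{p=0\}$, which is immaterial since only the $q$-output and the relevant Jacobian block are used. Finally I would emphasise the conceptual payoff: unlike the volume-preserving case, exact likelihood here does reintroduce a density term, but one of closed form (or momentum-tracked) rather than a generic dense $d\times d$ Jacobian determinant, so the computational advantage of SGNs is only partially, not fully, eroded.
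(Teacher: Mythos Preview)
Your argument is correct and takes a genuinely different route from the paper. The paper factorises $\Phi=\Lambda\circ\Psi$ into a volume-preserving part $\Psi$ and a ``dilation'' $\Lambda$, then approximates $\Psi$ via Theorem~\ref{thm:univ_approx_stronger} and $\Lambda$ by a separate feed-forward network $\Lambda_\theta$; the density correction is carried by this auxiliary network. You instead lift $\Phi$ to its cotangent lift $\widehat{\Phi}(q,p)=(\Phi(q),(D\Phi(q))^{-\mathsf T}p)$, a genuine symplectomorphism of $\mathbb{R}^{2d}$, approximate \emph{that} by a single Hamiltonian leapfrog flow, and project to the $q$-factor. Your approach is more intrinsic to the SGN architecture (the doubled space $(q,p)$ is already there, so no extra network is bolted on), and it handles the dimension bookkeeping between $\mathbb{R}^d$ and the SGN's $\mathbb{R}^{2d}$ cleanly, something the paper glosses over. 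You also explicitly flag the isotopy-to-identity hypothesis needed to invoke Theorem~\ref{thm:univ_approx_stronger}, which the paper's proof tacitly assumes for $\Psi$ as well. The one place your write-up over-promises is the claim that the density correction $\log|\det D\Phi^{\mathrm{SGN}}|$ can be ``momentum-tracked'' via the $p$--$p$ block: the block-diagonal structure holds for the exact lift $\widehat{\Phi}$ at $p=0$, but the approximating leapfrog flow $\Phi_T$ need not inherit it, so reading the $q$-block determinant off the $p$-block is not obviously cheaper than computing it directly. This does not affect the \emph{approximation} statement you are asked to prove, only the aside about computational cost.
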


\begin{proof}
We begin by noting that any \( C^1 \)-smooth diffeomorphism \( \Phi \) can be decomposed into two components via a factorization:
\[
\Phi = \Lambda \circ \Psi,
\]
where:
\begin{itemize}
    \item \( \Psi:\Omega\to\mathbb{R}^d \) is a volume-preserving diffeomorphism, and 
    \item \( \Lambda:\mathbb{R}^d\to\mathbb{R}^d \) is a diffeomorphism that accounts for the non-volume-preserving part of \( \Phi \) (often interpreted as a dilation or density adjustment).
\end{itemize}

  \textbf{Step 1: Approximation of the Volume-Preserving Component.} \\
By Theorem~\ref{thm:univ_approx_stronger}, for any \( \epsilon_1>0 \) there exists a neural network Hamiltonian \( H_\psi \) (with Lipschitz continuous gradients) and integration parameters \( T>0 \) and \( \Delta t>0 \) (with \( N=T/\Delta t \) steps) such that the symplectic flow \( \Phi_T \) generated by \( H_\psi \) approximates the volume-preserving map \( \Psi \) uniformly on \(\Omega\):
\[
\sup_{z\in\Omega}\|\Phi_T(z)-\Psi(z)\| < \epsilon_1.
\]
\textbf{Step 2: Approximation of the Dilation Component.} \\
Since \( \Lambda \) is a \( C^1 \)-smooth diffeomorphism on a compact set, standard neural network approximation theorems guarantee that for any \( \epsilon_2>0 \) there exists a feed-forward neural network \( \Lambda_\theta \) such that
\[
\sup_{z\in\Psi(\Omega)}\|\Lambda_\theta(z)-\Lambda(z)\| < \epsilon_2.
\]
\textbf{Step 3: Composition and Uniform Approximation.} \\
Define the composed SGN-based model as
\[
\widetilde{\Phi}(z) = \Lambda_\theta\bigl(\Phi_T(z)\bigr).
\]
Using the triangle inequality, we have for all \( z\in\Omega \):
\[
\begin{aligned}
\|\widetilde{\Phi}(z)-\Phi(z)\| 
&= \|\Lambda_\theta(\Phi_T(z)) - \Lambda(\Psi(z))\| \\
&\le \|\Lambda_\theta(\Phi_T(z)) - \Lambda_\theta(\Psi(z))\| + \|\Lambda_\theta(\Psi(z)) - \Lambda(\Psi(z))\| \\
&\le L_{\Lambda_\theta}\|\Phi_T(z)-\Psi(z)\| + \epsilon_2,
\end{aligned}
\]
where \( L_{\Lambda_\theta} \) is the Lipschitz constant of \( \Lambda_\theta \). By choosing \( \epsilon_1 \) small enough so that
\[
L_{\Lambda_\theta} \cdot \epsilon_1 < \epsilon - \epsilon_2,
\]
and then selecting \( \epsilon_2 \) such that \( \epsilon_1 + \epsilon_2 < \epsilon \), we obtain
\[
\sup_{z\in\Omega}\|\widetilde{\Phi}(z)-\Phi(z)\| < \epsilon.
\]
Thus, the SGN-based model with the explicit density correction (via the neural network \( \Lambda_\theta \)) uniformly approximates \( \Phi \) within any prescribed error \( \epsilon>0 \).
\end{proof}
\section{Information-Theoretic Analysis}
\label{sec:information_theory}
\subsection{Information Geometry of Symplectic Manifolds}
\begin{definition}[Fisher-Rao Metric (Rao 1945)]
For a family \( p(x|\theta) \), the Fisher-Rao metric is defined as:
\[
g_{ij}(\theta)=\mathbb{E}_{p(x|\theta)}\left[\frac{\partial\log p(x|\theta)}{\partial\theta_i}\frac{\partial\log p(x|\theta)}{\partial\theta_j}\right].
\]
\end{definition}
 Figure~\ref{fig:4} shows the information geometry on statistical manifolds.
\begin{figure}[H]
\centering
\begin{tikzpicture}[
    font=\small,
    scale=1.0,
    node distance=1.5cm,
    manifold/.style={
      ellipse,
      draw=black,
      fill=gray!10,
      minimum width=6cm,
      minimum height=3.5cm
    },
    arrow/.style={-latex, thick}
]
\node[manifold, label={[font=\small]above:Statistical Manifold}] (manifold) {};
\node[circle, fill=red!50, draw=black, inner sep=1.5pt, label=left:{$p$}]
  (p) at ($(manifold.center)+(-1,0)$) {};
\node[circle, fill=green!50, draw=black, inner sep=1.5pt, label=right:{$q$}]
  (q) at ($(manifold.center)+(1,0)$) {};
\draw[red, thick, dashed, arrow] (p) to[bend left=20] node[midway, above, sloped] {\small Geodesic} (q);
\draw[green!70!black, thick, arrow] (p) to[bend right=20] node[midway, below, sloped] {\small Hamiltonian Flow} (q);
\node[align=left, below=1.2cm of manifold] (infotext) {
\begin{minipage}{6.5cm}
\textbf{Fisher-Rao Metric:} \\ 
\( g_{ij}(\theta)=\mathbb{E}\left[\partial_{\theta_i}\log p(x|\theta)\,\partial_{\theta_j}\log p(x|\theta)\right] \) \\[6pt]
\textbf{Symplectic Form:} \( \omega = \mathrm{d}q \wedge \mathrm{d}p \)
\end{minipage}
};
\end{tikzpicture}
\caption{Information Geometry on Statistical Manifolds. The ellipse represents a statistical manifold endowed with the Fisher-Rao metric, while the two curves illustrate a geodesic and a Hamiltonian flow between two points.}
\label{fig:4}
\end{figure}

\begin{theorem}[Symplectic Structure of Exponential Families]
\label{thm:symplectic_structure}
Let \( \{p(x|\theta)\}_{\theta\in\Theta} \) be an exponential family. Then, the natural parameter space \( \Theta \) admits a symplectic structure with canonical form
\[
\omega=\sum_i d\theta_i\wedge d\eta_i,
\]
where \( \eta_i=\frac{\partial\psi(\theta)}{\partial\theta_i} \) and \( \psi(\theta) \) is the log-partition function. In particular, the second derivatives \( \frac{\partial^2\psi(\theta)}{\partial\theta_i\partial\theta_j} \) coincide with the Fisher-Rao metric.
\end{theorem}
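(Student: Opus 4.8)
The plan is to unwind everything from the standard Legendre-duality structure of regular exponential families, so that the symplectic form appears as the canonical form on a cotangent bundle with $(\theta,\eta)$ playing the role of Darboux coordinates, and the Fisher--Rao identity drops out of the classical cumulant formulas.

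First I would fix notation: write the family in canonical form $p(x\mid\theta)=h(x)\exp(\langle\theta,T(x)\rangle-\psi(\theta))$ on an open convex natural parameter domain $\Theta\subset\mathbb{R}^n$, assumed minimal and regular (steep), so that $\psi$ is $C^\infty$ and strictly convex, and $\psi(\theta)=\log\int h(x)e^{\langle\theta,T(x)\rangle}\,dx$ is the log-partition function. Differentiating the normalization identity $\int p(x\mid\theta)\,dx=1$ twice yields the classical cumulant relations
\[
\partial_i\psi(\theta)=\mathbb{E}_\theta[T_i(X)]=:\eta_i(\theta),\qquad
\partial_i\partial_j\psi(\theta)=\mathrm{Cov}_\theta\bigl(T_i(X),T_j(X)\bigr).
\]
Since $\partial_i\log p(x\mid\theta)=T_i(x)-\partial_i\psi(\theta)$, the right-hand side of the second relation equals $g_{ij}(\theta)=\mathbb{E}_\theta[\partial_i\log p\,\partial_j\log p]=-\mathbb{E}_\theta[\partial_i\partial_j\log p]$, which establishes the ``in particular'' clause at once: $\partial^2\psi/\partial\theta_i\partial\theta_j$ is the Fisher--Rao metric. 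Strict convexity then makes $g=(g_{ij})$ positive definite and the dual (mean-parameter) map $\theta\mapsto\eta(\theta)=\nabla\psi(\theta)$ a diffeomorphism of $\Theta$ onto the open mean-parameter domain $H=\nabla\psi(\Theta)$.

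Next I would produce the symplectic structure. The natural ambient object is the cotangent bundle $T^*\Theta$; using the affine structure of $\Theta$ it is globally trivialized as $\Theta\times\mathbb{R}^n$ with base coordinates $\theta_i$ and fibre coordinates $\eta_i$ treated as \emph{independent} momentum variables. I set $\alpha=\sum_i\eta_i\,d\theta_i$ (the Liouville one-form) and $\omega=-d\alpha=\sum_i d\theta_i\wedge d\eta_i$. Closedness is immediate since $\omega$ is exact, and in the coordinate frame $\{\partial_{\theta_i},\partial_{\eta_i}\}$ the matrix of $\omega$ is the canonical block matrix $J$, which is invertible, so $\omega$ is nondegenerate; hence $(T^*\Theta,\omega)$ is symplectic with $\omega$ already in Darboux form $\sum_i d\theta_i\wedge d\eta_i$. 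The section $\sigma:\Theta\to T^*\Theta$, $\sigma(\theta)=(\theta,\nabla\psi(\theta))$, identifies $\Theta$ with the graph of the exact one-form $d\psi$; this graph is Lagrangian ($\sigma^*\omega=d(d\psi)=0$), and by Legendre duality one may equally use $\eta$ as base coordinates with $\psi$ replaced by its convex conjugate $\psi^*$ (the negative Shannon entropy of the family). Finally, pulling back along $\sigma$ gives $d\eta_i=\sum_j\partial_i\partial_j\psi\,d\theta_j=\sum_j g_{ij}(\theta)\,d\theta_j$, so the change of Darboux frame between the $\theta$- and $\eta$-pictures is implemented precisely by the Fisher--Rao metric, tying the two assertions together.

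The main obstacle is conceptual rather than computational: one must be careful that ``$\Theta$ admits a symplectic structure with $\omega=\sum_i d\theta_i\wedge d\eta_i$'' is read with $\eta_i$ the conjugate momentum coordinate on $T^*\Theta$ (equivalently, a coordinate on the position--momentum space $\Theta\times H$), not the function $\eta_i(\theta)=\partial_i\psi(\theta)$; under the latter reading one would get $\sum_i d\theta_i\wedge d\eta_i=\sum_{i,j}g_{ij}\,d\theta_i\wedge d\theta_j\equiv 0$ by symmetry of the Hessian, which is exactly the Lagrangian-graph phenomenon just noted. The remaining technical care is standard: minimality and steepness (regularity) of the exponential family so that $\psi\in C^\infty$ is strictly convex, $\Theta$ open and convex, and the Legendre map a global diffeomorphism; granting these, the proof reduces to the cumulant identities together with the definition of the canonical form on a cotangent bundle.
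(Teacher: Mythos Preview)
Your argument follows the same Legendre-duality route as the paper's proof: write the family in canonical form, introduce the dual coordinates $\eta=\nabla\psi$, declare $\omega=\sum_i d\theta_i\wedge d\eta_i$, check closedness and nondegeneracy, and identify the Hessian of $\psi$ with the Fisher--Rao metric. The main ingredients and their order are essentially identical.

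Where you differ is in care, and the difference matters. You make the ambient symplectic manifold explicit as $T^*\Theta\cong\Theta\times\mathbb{R}^n$ with $\eta_i$ an \emph{independent} fibre coordinate, and you flag that reading $\eta_i$ as the function $\partial_i\psi(\theta)$ on $\Theta$ would collapse $\omega$ to $\sum_{i,j}g_{ij}\,d\theta_i\wedge d\theta_j=0$ by symmetry of the Hessian. The paper's proof does not make this distinction and in fact writes exactly that vanishing expression as a ``local expression'' for $\omega$, so your version is the more rigorous one. Your additional derivation of $g_{ij}=\partial_i\partial_j\psi$ via the cumulant identities (rather than simply asserting it) and the identification of $\mathrm{graph}(d\psi)$ as a Lagrangian section are also refinements absent from the paper, though not strictly needed for the statement as phrased.
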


\begin{proof}
An exponential family is expressed as
\[
p(x|\theta)=h(x)\exp\left(\langle\theta, T(x)\rangle-\psi(\theta)\right),
\]
where:
\begin{itemize}
    \item \( \theta\in\Theta \) are the natural parameters,
    \item \( T(x) \) is the sufficient statistic,
    \item \( h(x) \) is the base measure, and
    \item \( \psi(\theta) \) is the log-partition function, defined by
    \[
    \psi(\theta)=\log\int h(x)\exp\left(\langle\theta, T(x)\rangle\right)dx.
    \]
\end{itemize}

 The mapping \( \theta \mapsto \eta \) defined by
\[
\eta = \nabla\psi(\theta) = \left(\frac{\partial\psi(\theta)}{\partial\theta_1},\ldots,\frac{\partial\psi(\theta)}{\partial\theta_k}\right)
\]
is the Legendre transform that sends the natural parameters \( \theta \) to the expectation parameters \( \eta \).

  This Legendre transform is invertible under suitable conditions (which are satisfied in exponential families), thereby establishing a one-to-one correspondence between the coordinates \( \theta \) and \( \eta \). Consequently, one can introduce a canonical 2-form on the parameter space by
\[
\omega=\sum_i d\theta_i\wedge d\eta_i.
\]
We now verify that \(\omega\) is a symplectic form, i.e., it is closed and non-degenerate.

  \textbf{Closedness:} Since \( d\theta_i \) and \( d\eta_i \) are exact 1-forms and the exterior derivative satisfies \( d^2 = 0 \), we have
\[
d\omega = d\left(\sum_i d\theta_i\wedge d\eta_i\right) = \sum_i d(d\theta_i\wedge d\eta_i) = 0.
\]
Thus, \(\omega\) is closed.

   \textbf{Non-degeneracy:} In the coordinate system \((\theta_1,\ldots,\theta_k,\eta_1,\ldots,\eta_k)\), the 2-form
\[
\omega=\sum_{i=1}^{k} d\theta_i\wedge d\eta_i
\]
has full rank \(2k\). Hence, for any non-zero vector \( v \) in the tangent space, there exists another vector \( w \) such that \(\omega(v, w) \neq 0\); that is, \(\omega\) is non-degenerate.

  Next, observe that by differentiating the mapping \( \eta_i=\frac{\partial\psi(\theta)}{\partial\theta_i} \), we obtain
\[
d\eta_i = \sum_j \frac{\partial^2\psi(\theta)}{\partial\theta_i\partial\theta_j}d\theta_j.
\]
Thus, in the \(\theta\) coordinate chart, the symplectic form can be locally expressed as
\[
\omega = \sum_{i,j} \frac{\partial^2\psi(\theta)}{\partial\theta_i\partial\theta_j}\,d\theta_i \wedge d\theta_j.
\]
The matrix with entries 
\[
g_{ij}(\theta)=\frac{\partial^2\psi(\theta)}{\partial\theta_i\partial\theta_j}
\]
is known to be positive definite and is, in fact, the Fisher-Rao metric on the parameter space \( \Theta \). This connection shows that the canonical symplectic form \(\omega\) is intrinsically related to the Fisher-Rao metric.

  In summary, the mapping \( \theta \mapsto \eta=\nabla\psi(\theta) \) equips the natural parameter space with a symplectic structure given by
\[
\omega=\sum_i d\theta_i\wedge d\eta_i,
\]
and the Hessian of the log-partition function, \( g_{ij}(\theta) \), which defines the Fisher-Rao metric, appears naturally in this context.
\end{proof}

\begin{theorem}[Information-Geometric Interpretation of SGNs]
\label{thm:info_sgn}
Assume the latent space prior is from an exponential family with Fisher-Rao metric $G(q)$. If the SGN uses a purely kinetic Hamiltonian $H_\psi(q,p)=\tfrac12 p^\top G(q)^{-1}p$, then the Hamiltonian dynamics exactly coincide with the geodesic flow on the statistical manifold equipped with the Fisher-Rao metric. If a potential term $V(q)$ is added, the dynamics correspond to geodesics on a conformally perturbed metric or can be seen as forces acting along the manifold.
\end{theorem}

\begin{proof}
Let the latent space be parameterized by natural parameters \( \theta \) of an exponential family. That is, the prior takes the form
\[
p(x|\theta)=h(x)\exp\left(\langle\theta,T(x)\rangle-\psi(\theta)\right),
\]
with the log-partition function \( \psi(\theta) \) and sufficient statistics \( T(x) \). By Theorem~\ref{thm:symplectic_structure}, the natural parameter space \( \Theta \) carries a canonical symplectic structure given by
\[
\omega=\sum_i d\theta_i\wedge d\eta_i,
\]
where \( \eta=\nabla\psi(\theta) \). Moreover, the Hessian matrix 
\[
G(\theta)=\nabla^2\psi(\theta)
\]
defines the Fisher-Rao metric on \( \Theta \).

  Now, consider a Hamiltonian defined on the latent space of the form
\[
H_\psi(q,p)=\frac{1}{2}p^T G(q)^{-1}p+V(q),
\]
where \( q \) represents the coordinates corresponding to the natural parameters \( \theta \) (or a suitable coordinate representation thereof) and \( p \) is the conjugate momentum. Here, \( V(q) \) is a potential function that may be chosen to adjust the dynamics; in the simplest case, one may take \( V(q)=0 \).

  The Hamiltonian dynamics are governed by Hamilton's equations:
\[
\dot{q} = \frac{\partial H_\psi}{\partial p} = G(q)^{-1}p, \quad \dot{p} = -\frac{\partial H_\psi}{\partial q} = -\frac{1}{2}p^T\frac{\partial \left(G(q)^{-1}\right)}{\partial q}p - \nabla V(q).
\]

  In the special case where \( V(q)=0 \), the Hamiltonian reduces to a pure kinetic energy term:
\[
H_\psi(q,p)=\frac{1}{2}p^T G(q)^{-1}p.
\]
It is a classical result in Riemannian geometry \citep{LeeRiemannianManifolds} that the geodesic flow on a manifold with metric \( G(q) \) is generated by the Hamiltonian corresponding to the kinetic energy of a free particle (i.e., with no potential term). Therefore, the flow
\[
\Phi_t(q,p)
\]
generated by this Hamiltonian describes geodesics with respect to the Fisher-Rao metric \( G(q) \).

  Even if a non-zero potential \( V(q) \) is included, for small perturbations the dynamics remain close to geodesic flows or can be interpreted as geodesic flows on a perturbed metric. Hence, the latent dynamics induced by the SGN’s Hamiltonian \( H_\psi \) correspond to geodesic trajectories on the statistical manifold defined by the prior's Fisher-Rao metric.

  Thus, the Hamiltonian dynamics in SGNs not only provide an invertible and volume-preserving mapping but also offer an intrinsic information-geometric interpretation, as they follow (or approximate) the geodesics of the underlying statistical manifold.
\end{proof}

\begin{theorem}[Information Conservation]
\label{thm:info_conservation}
Let \( Z_0\sim p(z_0) \) with entropy \( H(Z_0) \) and let \( Z_T=\Phi_T(Z_0) \) be the transformed latent variable under the symplectic map \( \Phi_T \). Then:
\begin{enumerate}
    \item \( H(Z_T)=H(Z_0) \).
    \item For any partition \( S\cup S^c \) of the coordinates of \( Z_0 \),
    \[
    I(Z_T^S;Z_T^{S^c})\ge I(Z_0^S;Z_0^{S^c})-2d\cdot\log(L_{\Phi_T}),
    \]
    where \( L_{\Phi_T} \) is the Lipschitz constant of \( \Phi_T \).
\end{enumerate}
\end{theorem}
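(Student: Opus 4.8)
The plan is to treat the two claims in turn. Claim~(1) is immediate from volume preservation: applying the change-of-variables formula for differential entropy to the diffeomorphism \(\Phi_T\) gives
\[
H(Z_T)=H(Z_0)+\mathbb{E}_{Z_0}\bigl[\log|\det D\Phi_T(Z_0)|\bigr],
\]
and by Theorem~\ref{thm:symplectic} we have \(|\det D\Phi_T(z)|=1\) for every \(z\in\mathbb{R}^{2d}\), so the expectation vanishes and \(H(Z_T)=H(Z_0)\). This is the same computation already used in the proofs of Theorem~\ref{thm:information} and Corollary~\ref{cor:vol_preservation}.

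For Claim~(2) I would first convert both mutual informations into differential entropies via \(I(A;B)=H(A)+H(B)-H(A,B)\). Writing \(Z_T=(Z_T^S,Z_T^{S^c})\) and \(Z_0=(Z_0^S,Z_0^{S^c})\), this gives
\[
I(Z_T^S;Z_T^{S^c})-I(Z_0^S;Z_0^{S^c})=\bigl[H(Z_T^S)-H(Z_0^S)\bigr]+\bigl[H(Z_T^{S^c})-H(Z_0^{S^c})\bigr]-\bigl[H(Z_T)-H(Z_0)\bigr].
\]
By Claim~(1) the last bracket is zero, so the problem reduces to a lower bound on the change of the two marginal entropies: it suffices to prove \(H(Z_T^S)\ge H(Z_0^S)-|S|\log L_{\Phi_T}\) and the analogous bound for \(S^c\), since then adding them and using \(|S|+|S^c|=2d\) yields the stated estimate.

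The core step, and the one I expect to be the main obstacle, is controlling a single marginal entropy \(H(Z_T^S)=H\bigl(\pi_S\Phi_T(Z_0)\bigr)\), where \(\pi_S\) is the coordinate projection onto \(S\). The natural route is to condition on \(Z_0^{S^c}\): for each fixed value \(w\) the map \(u\mapsto\pi_S\Phi_T(u,w)\) is \(L_{\Phi_T}\)-Lipschitz on \(\mathbb{R}^{|S|}\); since \(\Phi_T\) is volume preserving its differential has singular values bounded by \(L_{\Phi_T}\) with unit determinant at every point, so \(\|(D\Phi_T)^{-1}\|_{\mathrm{op}}\le L_{\Phi_T}^{2d-1}\) and a local change-of-variables estimate should bound the per-slice entropy distortion by a multiple of \(|S|\log L_{\Phi_T}\); combining this with \(H(Z_T^S)\ge H(Z_T^S\mid Z_0^{S^c})\) would close the gap. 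The delicate point is that \(\pi_S\circ\Phi_T\) is \emph{not} injective, so the clean inequality ``\(H(g(X))\le H(X)+k\log L\) for an injective \(L\)-Lipschitz \(g\) on \(\mathbb{R}^k\)'' cannot be used verbatim, and a naive marginalization can lose more entropy than \(|S|\log L_{\Phi_T}\) precisely when \(\Phi_T\) strongly mixes the two coordinate blocks. To make the argument airtight I would instead track the pair jointly, using bijectivity of \(\Phi_T\) to write \(H(Z_0^S)+H(Z_0^{S^c})=H(Z_T)+I(Z_0^S;Z_0^{S^c})\) and then estimating how much the Lipschitz mixing can shrink the mutual-information term via a coupling/transport argument on the joint law of \((Z_0,Z_T)\); the constant \(2d\log L_{\Phi_T}\) should then emerge as the worst-case distortion across the \(2d\) singular directions of \(D\Phi_T\). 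This joint estimate is the analytic heart of the proof.
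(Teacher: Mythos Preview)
Your Part~(1) and the entropy decomposition you set up for Part~(2) are exactly the paper's argument: the paper writes \(I(Z_T^S;Z_T^{S^c})=H(Z_T^S)+H(Z_T^{S^c})-H(Z_T)\), uses \(H(Z_T)=H(Z_0)\), and reduces to lower-bounding the two marginal entropies. Where you diverge is at that marginal step. The paper does not engage with the non-injectivity issue you raise; it simply asserts, as a ``standard result in information theory,'' that for any Lipschitz map \(f\) on \(\mathbb{R}^d\) one has \(|H(f(X))-H(X)|\le d\log L_f\), applies this verbatim to ``the marginal transformations of \(Z_0^S\) and \(Z_0^{S^c}\) under \(\Phi_T\)'' to get \(H(Z_T^S)\ge H(Z_0^S)-d_S\log L_{\Phi_T}\) and the analogue for \(S^c\), and sums. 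That is the entire argument.

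Your skepticism about this step is well placed. The inequality the paper invokes is not true in the stated generality (a non-injective Lipschitz map can collapse differential entropy to \(-\infty\)), and, more basically, \(Z_T^S\) is not a function of \(Z_0^S\) alone, so there is no well-defined ``marginal transformation'' to which a Lipschitz entropy bound could be applied. So you have the same skeleton as the paper together with a correct diagnosis that its key inequality is unjustified. What you do not yet have is a replacement: your conditioning idea and the singular-value heuristic are suggestive but do not, as written, produce the claimed \(|S|\log L_{\Phi_T}\) bound (the slice maps \(u\mapsto\pi_S\Phi_T(u,w)\) need not be injective either, and bounding \(\|(D\Phi_T)^{-1}\|\) by \(L_{\Phi_T}^{2d-1}\) would feed a much worse constant into the estimate). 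In short, your proposal matches the paper's route, is more honest about its weak point, but leaves that point open rather than closed.
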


\begin{proof}
\textbf{(1) Entropy Preservation:} \\
By the change-of-variables formula for differential entropy, if \( Z_T=\Phi_T(Z_0) \) is obtained via an invertible mapping \( \Phi_T \), then
\[
H(Z_T) = H(Z_0) + \mathbb{E}_{Z_0}\left[\log\left|\det \frac{\partial \Phi_T}{\partial Z_0}\right|\right].
\]
Since \( \Phi_T \) is symplectic, it preserves volume; that is,
\[
\left|\det \frac{\partial \Phi_T}{\partial Z_0}\right| = 1 \quad \text{for all } Z_0.
\]
Thus, the expectation term vanishes and we have
\[
H(Z_T) = H(Z_0).
\]
\textbf{(2) Mutual Information Bound:} \\
Let \( Z_0 = (Z_0^S, Z_0^{S^c}) \) and \( Z_T = (Z_T^S, Z_T^{S^c}) \) denote the partition of the latent variable into two complementary subsets of coordinates. The mutual information between the partitions is defined as
\[
I(Z_T^S; Z_T^{S^c}) = H(Z_T^S) + H(Z_T^{S^c}) - H(Z_T).
\]
Since we have shown \( H(Z_T)=H(Z_0) \), it suffices to compare the marginal entropies before and after the transformation.

  Assume that the mapping \( \Phi_T \) is Lipschitz continuous with constant \( L_{\Phi_T} \). Then, standard results in information theory imply that for any Lipschitz mapping \( f \) on \( \mathbb{R}^d \), the change in differential entropy satisfies
\[
\left| H(f(X)) - H(X) \right| \le d \cdot \log(L_f),
\]
where \( d \) is the dimension of the input \( X \) and \( L_f \) is the Lipschitz constant of \( f \). Applying this to the marginal transformations of \( Z_0^S \) and \( Z_0^{S^c} \) under \( \Phi_T \), we obtain
\[
\left| H(Z_T^S) - H(Z_0^S) \right| \le d_S \cdot \log(L_{\Phi_T}) \quad \text{and} \quad \left| H(Z_T^{S^c}) - H(Z_0^{S^c}) \right| \le d_{S^c} \cdot \log(L_{\Phi_T}),
\]
where \( d_S \) and \( d_{S^c} \) are the dimensions of the partitions \( Z_0^S \) and \( Z_0^{S^c} \), respectively. Since \( d_S + d_{S^c} = 2d \), it follows that
\[
H(Z_T^S) \ge H(Z_0^S) - d_S \log(L_{\Phi_T}) \quad \text{and} \quad H(Z_T^{S^c}) \ge H(Z_0^{S^c}) - d_{S^c} \log(L_{\Phi_T}).
\]
Therefore, summing these inequalities gives
\[
H(Z_T^S) + H(Z_T^{S^c}) \ge H(Z_0^S) + H(Z_0^{S^c}) - (d_S + d_{S^c}) \log(L_{\Phi_T}).
\]
That is,
\[
H(Z_T^S) + H(Z_T^{S^c}) \ge H(Z_0^S) + H(Z_0^{S^c}) - 2d\log(L_{\Phi_T}).
\]
Recall that the mutual information before transformation is
\[
I(Z_0^S; Z_0^{S^c}) = H(Z_0^S) + H(Z_0^{S^c}) - H(Z_0).
\]
Since \( H(Z_T)=H(Z_0) \), we have
\[
I(Z_T^S; Z_T^{S^c}) = H(Z_T^S) + H(Z_T^{S^c}) - H(Z_T) \ge \left[H(Z_0^S) + H(Z_0^{S^c}) - 2d\log(L_{\Phi_T})\right] - H(Z_0).
\]
Thus,
\[
I(Z_T^S; Z_T^{S^c}) \ge I(Z_0^S; Z_0^{S^c}) - 2d\log(L_{\Phi_T}).
\]
This completes the proof.
\end{proof}

\begin{theorem}[Hamiltonian Action as a Dynamic Optimal Transport Cost]
\label{thm:optimal_transport}
Let \( p(z_0) \) and \( p(z_T) \) denote the distributions before and after the symplectic map \( \Phi_T \). Then, the path generated by the Hamiltonian flow \( \Phi_t \) (where \( \Phi_T(z_0)=z_T \)) minimizes the action integral, which serves as the cost functional $c(z_0,z_T)=\inf_{z(\cdot)}\int_0^T\Bigl[p(t)^T\dot{q}(t)-H_\psi(q(t),p(t))\Bigr]\,dt$ for a dynamic formulation of optimal transport between $p(z_0)$ and $p(z_T)$. The map $\Phi_T$ thus characterizes the optimal transport under this specific Hamiltonian action cost.
\end{theorem}

\begin{proof}
The proof relies on the dynamic formulation of optimal transport provided by the Benamou–Brenier framework. In this formulation, the optimal transport problem is recast as finding a path \( z(t) = (q(t),p(t)) \) connecting \( z(0)=z_0 \) to \( z(T)=z_T \) that minimizes an action integral, subject to the constraint that the time-dependent probability density \( p(z,t) \) evolves from \( p(z_0) \) to \( p(z_T) \).

  In our setting, the symplectic map \( \Phi_T \) is generated by Hamiltonian dynamics with Hamiltonian \( H_\psi(q,p) \). According to Hamilton's principle, the actual trajectory followed by a system is the one that minimizes (or, more precisely, renders stationary) the action
\[
\mathcal{A}[z(\cdot)] = \int_0^T \left[p(t)^T\dot{q}(t)-H_\psi(q(t),p(t))\right] dt.
\]
Thus, for any pair of endpoints \( z_0 \) and \( z_T \), the cost to transport \( z_0 \) to \( z_T \) can be defined as the infimum of this action over all admissible paths:
\[
c(z_0,z_T)=\inf_{z(\cdot)}\int_0^T\Bigl[p(t)^T\dot{q}(t)-H_\psi(q(t),p(t))\Bigr]\,dt,
\]
where the infimum is taken over all paths \( z(\cdot) \) satisfying the boundary conditions \( z(0)=z_0 \) and \( z(T)=z_T \).

  The set \( \Pi(p(z_0),p(z_T)) \) consists of all couplings (joint distributions) with marginals \( p(z_0) \) and \( p(z_T) \). The optimal transport problem is then to find a coupling \( \gamma \) that minimizes the total cost
\[
\int c(z_0,z_T)\,d\gamma(z_0,z_T).
\]
Because the symplectic flow \( \Phi_T \) precisely transports \( p(z_0) \) to \( p(z_T) \) while following the dynamics that minimize the action (as prescribed by Hamilton's equations), it follows that \( \Phi_T \) is the solution to the optimal transport problem with the above cost.

   In summary, the symplectic map \( \Phi_T \) minimizes the action integral
\[
\int_0^T\Bigl[p(t)^T\dot{q}(t)-H_\psi(q(t),p(t))\Bigr]\,dt,
\]
thereby characterizing it as the optimal transport map between \( p(z_0) \) and \( p(z_T) \) under the cost function \( c(z_0,z_T) \) defined above.
\end{proof}

\begin{theorem}[Information Bottleneck Optimality]
\label{thm:bottleneck}
Consider an SGN with a stochastic encoder \( q_\phi(z_0|x) \) and symplectic flow \( \Phi_T \). Under suitable conditions on \( H_\psi \), the model approximates the solution to the information bottleneck problem:
\[
\min_{p(z|x)}I(X;Z)-\beta I(Z;Y),
\]
with \( Y \) being the target (or reconstructed data) and \( \beta \) controlling the trade-off.
\end{theorem}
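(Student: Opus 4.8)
The plan is to connect the SGN evidence lower bound \eqref{eq:elbo} to the \emph{variational information bottleneck} functional and then exploit the information-losslessness of the symplectic flow (Theorems~\ref{thm:symplectic} and~\ref{thm:info_conservation}) to move the bottleneck objective freely between $z_0$ and $z_T$. Concretely, I would set the representation to be $Z=Z_T=\Phi_T(Z_0)$ and recall the two standard variational inequalities underlying the IB Lagrangian: using the prior $p(z_0)$ as a tractable surrogate for the intractable aggregated posterior gives the upper bound $I(X;Z_0)\le \mathbb{E}_{p(x)}\!\left[D_{\mathrm{KL}}\!\left(q_\phi(z_0\mid x)\,\|\,p(z_0)\right)\right]$, while using the decoder $p_\theta(y\mid z_T)$ as a surrogate for the true $p(y\mid z_T)$ gives the lower bound $I(Z_T;Y)\ge H(Y)+\mathbb{E}_{p(x,y)}\mathbb{E}_{q_\phi(z_0\mid x)}\!\left[\log p_\theta\!\left(y\mid \Phi_T(z_0)\right)\right]$. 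Each inequality is an equality exactly when the variational family matches the corresponding true conditional or marginal.

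Second, I would show the flow changes none of the relevant informations. Once $H_\psi$ has Lipschitz gradients and $\Delta t$ satisfies the leapfrog stability condition of Section~\ref{sec:theory}, $\Phi_T$ is a diffeomorphism, so $Z_T$ is an invertible deterministic function of $Z_0$; applying the data-processing inequality in both directions yields $I(X;Z_T)=I(X;Z_0)$ and $I(Z_T;Y)=I(Z_0;Y)$, and Theorem~\ref{thm:info_conservation} gives $H(Z_T)=H(Z_0)$. Hence the bottleneck Lagrangian is invariant, $I(X;Z_T)-\beta I(Z_T;Y)=I(X;Z_0)-\beta I(Z_0;Y)$, so it does not matter whether the ``representation'' is the pre-flow or post-flow latent.

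Third, combining the two variational bounds gives
\[
I(X;Z_T)-\beta I(Z_T;Y)\;\le\;\mathbb{E}_{p(x)}\!\left[D_{\mathrm{KL}}\!\left(q_\phi(z_0\mid x)\,\|\,p(z_0)\right)\right]-\beta\,\mathbb{E}_{p(x,y)}\mathbb{E}_{q_\phi}\!\left[\log p_\theta\!\left(y\mid \Phi_T(z_0)\right)\right]-\beta H(Y),
\]
and $H(Y)$ is a constant independent of $(\phi,\theta,\psi)$. Dividing by $\beta>0$, minimizing this surrogate is equivalent to maximizing $\mathbb{E}_{p(x,y)}\mathbb{E}_{q_\phi}\!\left[\log p_\theta(y\mid \Phi_T(z_0))\right]-\tfrac{1}{\beta}D_{\mathrm{KL}}\!\left(q_\phi(z_0\mid x)\,\|\,p(z_0)\right)$, which in the reconstruction setting $Y=X$ — with the coefficient $1/\beta$ absorbed into the KL term in the manner of a $\beta$-VAE — is precisely $\mathcal{L}_{\mathrm{SGN}}$ of \eqref{eq:elbo}. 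Thus training the SGN by maximizing its ELBO coincides with minimizing a variational upper bound on the IB Lagrangian; the ``suitable conditions on $H_\psi$'' are those guaranteeing the flow is a genuine diffeomorphism (so the information-invariance step is valid), together with enough expressivity — invoking the universal-approximation results of Section~\ref{sec:universal_approximation} for $H_\psi$, $q_\phi$, and $p_\theta$ — to drive both variational gaps toward zero, which is the sense in which the SGN optimum \emph{approximates} the IB optimum.

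I expect the main obstacle to be making the word ``approximates'' quantitative: the chain above only establishes that the SGN objective is a variational bound, and closing the gap requires (i) tightness of the marginal bound, i.e.\ that the aggregated posterior $\mathbb{E}_{p(x)}q_\phi(z_0\mid x)$ is close to the prior $p(z_0)$, and (ii) tightness of the decoder bound, i.e.\ that $p_\theta(y\mid z_T)$ can represent the true $p(y\mid z_T)$ — neither of which holds for an arbitrary $H_\psi$ or for fixed finite-capacity networks, so the ``suitable conditions'' hypothesis is doing real work here. A secondary subtlety is reconciling the unit coefficients in \eqref{eq:elbo} with a general Lagrange multiplier $\beta$; this is handled by the standard $\beta$-VAE reparameterization, but it means the exact correspondence is with the $\beta$-weighted ELBO rather than the literal one.
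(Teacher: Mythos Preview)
Your proposal is correct and follows essentially the same approach as the paper: both arguments hinge on (i) the invertibility of the symplectic flow to obtain $I(X;Z_T)=I(X;Z_0)$, (ii) interpreting the KL term as a variational upper bound on $I(X;Z)$ and the reconstruction term as a variational lower bound on $I(Z;Y)$, and (iii) absorbing the trade-off coefficient via a $\beta$-weighted ELBO. Your version is in fact more explicit than the paper's, spelling out the two variational inequalities and the data-processing argument, and candidly flagging the gap-closing conditions that the paper leaves under the phrase ``suitable conditions on $H_\psi$.''
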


\begin{proof}
The SGN is designed with three key components: a stochastic encoder \( q_\phi(z_0|x) \) that maps the input \( x \) to an initial latent variable \( z_0 \), a symplectic flow \( \Phi_T \) that deterministically evolves \( z_0 \) to \( z_T = \Phi_T(z_0) \), and a decoder that reconstructs or predicts the target \( Y \) from the transformed latent variable. A typical training objective is the evidence lower bound (ELBO)
\[
\mathcal{L}_{\mathrm{SGN}}(x) = \mathbb{E}_{q_\phi(z_0|x)}\Bigl[\log p_\theta\bigl(x|\Phi_T(z_0)\bigr)\Bigr] - D_{\mathrm{KL}}\Bigl(q_\phi(z_0|x) \,\|\, p(z_0)\Bigr).
\]
Since \( \Phi_T \) is symplectic, it is invertible and volume preserving; thus, by the change-of-variables formula, the latent representation after the flow, \( z_T \), satisfies
\[
I(X; z_T) = I(X; z_0).
\]
This means that the mutual information between \( X \) and the latent representation is fully determined by the encoder \( q_\phi(z_0|x) \).

  The KL divergence term \( D_{\mathrm{KL}}\bigl(q_\phi(z_0|x) \,\|\, p(z_0)\bigr) \) in the ELBO serves as a regularizer that penalizes the amount of information the latent variable \( z_0 \) carries about \( X \). Minimizing this term forces the encoder to compress the representation of \( X \), reducing \( I(X;z_0) \). At the same time, the reconstruction (or likelihood) term encourages the preservation of relevant information for predicting \( Y \).

  This trade-off is precisely what the information bottleneck (IB) principle seeks to balance: it aims to find a representation \( Z \) that minimizes \( I(X;Z) \) (thus discarding irrelevant information) while preserving as much information as possible about the target \( Y \) (maximizing \( I(Z;Y) \)). The IB objective is typically written as
\[
\min_{p(z|x)} I(X;Z) - \beta I(Z;Y),
\]
where \( \beta \) is a Lagrange multiplier that controls the trade-off between compression and predictive power.

  In the context of SGNs, by appropriately weighting the KL divergence term in the ELBO (or equivalently adjusting hyperparameters such as \( \beta \) in an augmented objective), the model is encouraged to learn an encoder that discards non-predictive information while retaining what is necessary to reconstruct \( Y \). Due to the invertibility and volume-preserving properties of \( \Phi_T \), the overall mutual information between \( X \) and the latent variable remains preserved after the flow, i.e., \( I(X;z_T)=I(X;z_0) \).

  Therefore, under suitable conditions on the Hamiltonian \( H_\psi \) (ensuring that the dynamics do not distort the information content) and with an appropriate choice of network architecture and regularization, the SGN approximates the solution to the information bottleneck problem. The model effectively balances the minimization of \( I(X;Z) \) (through the KL term) with the maximization of \( I(Z;Y) \) (through the reconstruction term), yielding a representation that aligns with the IB objective.
\end{proof}
\section{Expanded Stability Analysis}
\label{sec:expanded_stability}
\subsection{Rigorous Backward Error Analysis}
\begin{theorem}[Modified Hamiltonian]
\label{thm:modified_hamiltonian}
Let \( H_\psi(q,p) \) be a \( C^{k+1} \)-smooth Hamiltonian with \( k\ge 3 \). Then the leapfrog integrator with step size \( \Delta t \) exactly preserves a modified Hamiltonian:
\[
\tilde{H}(q,p,\Delta t)=H_\psi(q,p)+\Delta t^2H_2(q,p)+\Delta t^4H_4(q,p)+\ldots+\Delta t^{k-1}H_{k-1}(q,p)+\mathcal{O}(\Delta t^{k+1}).
\]
\end{theorem}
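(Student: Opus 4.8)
The plan is to run a backward error analysis in the style of geometric numerical integration. First I would represent the leapfrog map $\Phi_{\Delta t}$ as a symmetric composition of elementary symplectic maps: when $H_\psi$ splits as a sum of an exactly integrable ``momentum'' part and ``position'' part, the three substeps \eqref{eq:leapfrog1}--\eqref{eq:leapfrog3} read $\Phi_{\Delta t}=\phi^{A}_{\Delta t/2}\circ\phi^{B}_{\Delta t}\circ\phi^{A}_{\Delta t/2}$, where $\phi^{A}_{\tau}$ and $\phi^{B}_{\tau}$ are exact Hamiltonian flows and hence symplectic by Theorem~\ref{thm:symplectic}. The key structural fact is that this product is a palindrome in $\Delta t$, which is exactly what will force the modified Hamiltonian to be even in $\Delta t$. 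For a genuinely non-separable $H_\psi$, where leapfrog is not literally a composition of exact flows, I would instead invoke the general principle that any symmetric, symplectic, consistent one-step method admits a modified equation whose vector fields are locally Hamiltonian, hence globally Hamiltonian on the simply connected phase space $\mathbb{R}^{2d}$.

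Next I would pass to the Lie-operator/exponential formalism. Associating to a Hamiltonian $F$ the operator $L_F=\{\,\cdot\,,F\}$ acting on smooth observables, the pullback of an exact flow is $\exp(\tau L_F)$, so formally $\Phi_{\Delta t}^{*}=\exp(\tfrac{\Delta t}{2}L_A)\exp(\Delta t\,L_B)\exp(\tfrac{\Delta t}{2}L_A)$. Applying the symmetric Baker--Campbell--Hausdorff formula collapses this into a single exponential $\exp(\Delta t\,L_{\tilde H})$ whose generator is a formal power series $\tilde H=H_\psi+\Delta t^2 H_2+\Delta t^4 H_4+\cdots$, with each $H_{2j}$ a universal linear combination of iterated Poisson brackets of $A$ and $B$ (equivalently, a polynomial in the partial derivatives of $H_\psi$ of order at most $2j$). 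That only even powers survive is read directly off the palindromic BCH series, or equivalently from the time-reversibility identity $\Phi_{-\Delta t}=\Phi_{\Delta t}^{-1}$, which yields $\tilde H(q,p,-\Delta t)=\tilde H(q,p,\Delta t)$.

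Since the BCH series generally diverges, the last step is truncation with a quantitative remainder. I would fix the order, set $\tilde H^{(k)}:=H_\psi+\Delta t^2 H_2+\cdots+\Delta t^{k-1}H_{k-1}$, and note that because $H_{k-1}$ involves partial derivatives of $H_\psi$ of order at most $k-1$, the hypothesis $H_\psi\in C^{k+1}$ guarantees $\tilde H^{(k)}\in C^{2}$; hence its exact flow $\tilde\Phi_{\Delta t}$ is well defined on any compact set for small $\Delta t$. By construction the $H_{2j}$ are chosen so that $\Phi_{\Delta t}$ and $\tilde\Phi_{\Delta t}$ have identical Taylor expansions in $\Delta t$ up to order $k$; a standard Taylor-with-remainder estimate, using the uniform $C^{k+1}$ bound on $H_\psi$ to control the $(k+1)$-st order term, then gives $\sup_{K}\|\Phi_{\Delta t}-\tilde\Phi_{\Delta t}\|=\mathcal{O}(\Delta t^{k+1})$ on compacta. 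This is precisely the assertion that leapfrog ``exactly preserves'' the modified Hamiltonian $\tilde H$ up to $\mathcal{O}(\Delta t^{k+1})$.

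I expect the main obstacle to be the regularity bookkeeping rather than any single hard estimate: one must check carefully that, under exactly the budget $H_\psi\in C^{k+1}$, every bracket coefficient $H_{2j}$ with $2j\le k-1$ is a well-defined $C^2$ function and that the truncation-remainder argument does not silently consume extra derivatives, while simultaneously handling the non-separable case where the clean splitting/BCH picture must be replaced by the general modified-equation theorem for symmetric symplectic integrators.
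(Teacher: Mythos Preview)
Your proposal is correct and follows essentially the same route as the paper: express leapfrog as a symmetric composition of exponentials of Lie operators and apply the Baker--Campbell--Hausdorff formula to extract a modified Hamiltonian with only even powers of $\Delta t$. Your treatment is in fact more careful than the paper's own proof---you correctly split into distinct pieces $A$ and $B$ (the paper writes three copies of $L_{H_\psi}$, which would commute trivially), you address the non-separable case that the paper's general leapfrog \eqref{eq:leapfrog1}--\eqref{eq:leapfrog3} actually requires, and you track the $C^{k+1}$ regularity budget through the truncation step, all of which the paper omits.
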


\begin{proof}
The leapfrog update map $\Phi_{\Delta t}$ can be factored into a composition of simpler symplectic maps (explicitly, shears for separable Hamiltonians $H=K(p)+V(q)$). Applying the Baker-Campbell-Hausdorff (BCH) formula to this composition yields an expansion for the operator logarithm of $\Phi_{\Delta t}$. This reveals that the discrete map $\Phi_{\Delta t}$ corresponds exactly to the time-$\Delta t$ flow generated by a modified Hamiltonian vector field $J \nabla \tilde{H}$, where $\tilde{H}$ is the modified Hamiltonian. The modified Hamiltonian $\tilde{H}$ admits an asymptotic expansion in powers of $\Delta t^2$.

The expansion of \( \tilde{H} \) is given by
\[
\tilde{H}(q,p,\Delta t)=H_\psi(q,p)+\Delta t^2H_2(q,p)+\Delta t^4H_4(q,p)+\ldots+\Delta t^{k-1}H_{k-1}(q,p)+\mathcal{O}(\Delta t^{k+1}),
\]
which shows that the leapfrog integrator exactly preserves this modified Hamiltonian. The accuracy of the expansion, with the remainder term being \( \mathcal{O}(\Delta t^{k+1}) \), is ensured by the \( C^{k+1} \)-smoothness of \( H_\psi \).

  Thus, the leapfrog integrator does not exactly preserve the original Hamiltonian \( H_\psi \), but it exactly preserves the modified Hamiltonian \( \tilde{H}(q,p,\Delta t) \) given by the above expansion.
\end{proof}

\begin{corollary}[Energy Conservation]
\label{cor:energy_conservation}
If \( H_\psi \) is analytic and the step size \( \Delta t \) is sufficiently small, then for exponentially long times \( T\le \exp(c/\Delta t) \),
\[
\bigl|H_\psi(q_T,p_T)-H_\psi(q_0,p_0)\bigr|\le C\Delta t^2,
\]
with \( C \) and \( c \) constants independent of \( T \) and \( \Delta t \).
\end{corollary}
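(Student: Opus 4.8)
The plan is to upgrade the finite-order backward error analysis of Theorem~\ref{thm:modified_hamiltonian} to its analytic version and then bootstrap near-conservation of the modified Hamiltonian into near-conservation of $H_\psi$. The essential point is that when $H_\psi$ is real-analytic (hence extends holomorphically to a complex neighborhood of a compact region) the formal series $\tilde H = H_\psi + \Delta t^2 H_2 + \Delta t^4 H_4 + \cdots$ does not converge, but is Gevrey-1: using Cauchy estimates on a nested sequence of shrinking complex strips around the real domain one shows $\|H_{2j}\|_{K} \le M\, j!\,(C/\rho)^{2j}$ for constants $M,C$ and an analyticity radius $\rho$, where $\|\cdot\|_K$ is the sup norm over a compact set $K$.

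First I would fix a compact set $K\subset\mathbb{R}^{2d}$ containing an open neighborhood of the level set $\{H_\psi = H_\psi(q_0,p_0)\}$ near $(q_0,p_0)$, on which $H_\psi$ admits a bounded holomorphic extension, and establish the a priori estimate that the discrete trajectory $\{(q_t,p_t)\}$ remains in $K$ for all times under consideration; this follows by a standard continuation/bootstrap argument once near-conservation of the truncated modified Hamiltonian is in hand (compactness of the level set plus smallness of the drift keeps the orbit trapped). Second, I would truncate the series at the optimal index $N=N(\Delta t)\approx\lfloor \rho/(eC\Delta t)\rfloor$, obtaining $\tilde H_N$. By the Gevrey bounds and Stirling's formula, the one-step defect satisfies
\[
\bigl|\tilde H_N(\Phi_{\Delta t}(z))-\tilde H_N(z)\bigr|\le c_1\Delta t\,\exp(-c_2/\Delta t),\qquad z\in K,
\]
while the truncated tail gives $\sup_{K}|\tilde H_N - H_\psi|\le c_3\Delta t^2$ (the leading correction being $\Delta t^2 H_2$).

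Third, I would telescope over the $T/\Delta t$ steps: the drift of $\tilde H_N$ accumulates to at most $(T/\Delta t)\cdot c_1\Delta t\exp(-c_2/\Delta t)=c_1 T\exp(-c_2/\Delta t)$. Setting $c=c_2/2$, for $T\le\exp(c/\Delta t)$ and $\Delta t$ small this is bounded by $c_1\exp(-c_2/(2\Delta t))\le c_3\Delta t^2$. Then by the triangle inequality
\[
\bigl|H_\psi(q_T,p_T)-H_\psi(q_0,p_0)\bigr|\le 2\sup_{K}|\tilde H_N-H_\psi| + c_1 T e^{-c_2/\Delta t}\le C\Delta t^2,
\]
with $C$ and $c$ depending only on the analyticity data of $H_\psi$ and on $K$, not on $T$ or $\Delta t$, which is the claimed bound.

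The hard part is the analytic backward error estimate itself: controlling the factorial growth of the Lie/BCH coefficients $H_{2j}$ uniformly on a family of complex domains, which requires the careful domain-shrinking iteration of the Benettin--Giorgilli / Hairer--Lubich type argument together with a bound on the analytic norm of the split leapfrog vector fields on those domains. A secondary but necessary technical step is the a priori confinement of the numerical orbit to $K$ over the exponentially long window, which must be arranged so as not to be circular with the near-conservation statement — this is handled by a continuity argument that enlarges the confinement time as long as the energy drift stays below the width of the trapping neighborhood.
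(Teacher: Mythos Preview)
Your proposal is correct and follows the same overall strategy as the paper: invoke the backward error analysis of Theorem~\ref{thm:modified_hamiltonian}, use near-conservation of the modified Hamiltonian $\tilde H$ along the numerical trajectory, and conclude by the triangle inequality $|H_\psi(q_T,p_T)-H_\psi(q_0,p_0)|\le 2\sup|\tilde H-H_\psi|$ plus the drift of $\tilde H$.

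Where the two diverge is in rigor. The paper's proof asserts that, because $H_\psi$ is analytic, the modified-Hamiltonian series \emph{converges} for small $\Delta t$, so $\tilde H$ is exactly conserved and $|\tilde H-H_\psi|\le C_0\Delta t^2$; the exponential time scale is then deferred to a citation of Hairer et~al. Your proposal instead supplies the genuine mechanism: the series is generically divergent but Gevrey-1, one truncates at the optimal order $N\sim \rho/(eC\Delta t)$, the per-step defect of the truncated $\tilde H_N$ is $\mathcal{O}(\Delta t\,e^{-c_2/\Delta t})$, and telescoping over $T/\Delta t$ steps produces the exponentially long conservation window. This is precisely the Benettin--Giorgilli / Hairer--Lubich argument the paper alludes to, and your treatment of the confinement-to-$K$ bootstrap and the domain-shrinking Cauchy estimates is the correct way to make the constants uniform. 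In short, the paper gives the sketch and the reference; you have written out what the reference actually contains, and in doing so avoided the paper's technically incorrect claim that the backward-error series converges.
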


\begin{proof}
The proof is based on backward error analysis, which shows that a symplectic integrator, such as the leapfrog method, exactly preserves a \emph{modified} Hamiltonian \( \tilde{H}(q,p,\Delta t) \) that can be expanded as
\[
\tilde{H}(q,p,\Delta t)=H_\psi(q,p)+\Delta t^2H_2(q,p)+\Delta t^4H_4(q,p)+\ldots+\mathcal{O}(\Delta t^{k+1}),
\]
where the error expansion holds under the assumption that \( H_\psi \) is \( C^{k+1} \)-smooth (and analytic in this corollary). 

  Because \( H_\psi \) is analytic, the series converges for sufficiently small \( \Delta t \). Thus, \( \tilde{H}(q,p,\Delta t) \) remains uniformly close to the original Hamiltonian \( H_\psi(q,p) \); specifically, the difference
\[
\left|\tilde{H}(q,p,\Delta t) - H_\psi(q,p)\right|
\]
is bounded by \( C_0 \Delta t^2 \) for some constant \( C_0 \) independent of \( \Delta t \).

  Since the leapfrog integrator exactly conserves \( \tilde{H} \) along its numerical trajectories, we have
\[
\tilde{H}(q_T,p_T,\Delta t)=\tilde{H}(q_0,p_0,\Delta t)
\]
for the computed states \( (q_T,p_T) \) and \( (q_0,p_0) \) at times \( T \) and \( 0 \), respectively. Therefore,
\[
\begin{aligned}
\left|H_\psi(q_T,p_T)-H_\psi(q_0,p_0)\right| &\le \left|H_\psi(q_T,p_T)-\tilde{H}(q_T,p_T,\Delta t)\right| + \left|\tilde{H}(q_0,p_0,\Delta t)-H_\psi(q_0,p_0)\right|\\
&\le C_0\Delta t^2 + C_0\Delta t^2 = 2C_0\Delta t^2.
\end{aligned}
\]
Setting \( C=2C_0 \) establishes the bound.

  Moreover, standard results in backward error analysis (see, e.g., Hairer et al.) show that for analytic Hamiltonians, the modified Hamiltonian \( \tilde{H} \) is conserved over time intervals that are exponentially long in \( 1/\Delta t \); that is, for times \( T \le \exp(c/\Delta t) \) for some constant \( c>0 \).

  Thus, the energy drift is bounded by \( C\Delta t^2 \) uniformly for \( T\le \exp(c/\Delta t) \), which demonstrates near energy conservation for sufficiently small \( \Delta t \).
\end{proof}

\subsection{Stability Domains for Neural Network Hamiltonians}
\begin{theorem}[Stability Domains]
\label{thm:stability_domains}
Let \( H_\psi(q,p) \) be a neural network Hamiltonian with Lipschitz continuous gradients satisfying
\[
\|\nabla_q H_\psi(q_1,p_1)-\nabla_q H_\psi(q_2,p_2)\|\le L_q\|q_1-q_2\|+L_{qp}\|p_1-p_2\|,
\]
\[
\|\nabla_p H_\psi(q_1,p_1)-\nabla_p H_\psi(q_2,p_2)\|\le L_{pq}\|q_1-q_2\|+L_p\|p_1-p_2\|.
\]
Then, the leapfrog integrator is stable if
\[
\Delta t<\frac{2}{\sqrt{L_qL_p+L_{qp}L_{pq}}}.
\]
\end{theorem}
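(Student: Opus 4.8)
The plan is to reduce the nonlinear stability question to a linear one, analyze the resulting symplectic one‑step propagator in the frozen‑coefficient (quadratic Hamiltonian) case, convert the resulting threshold into the stated bound using the Lipschitz constants, and finally transfer the conclusion back to the genuine integrator. \textbf{Step 1: the variational equation.} Fix a reference trajectory and differentiate \eqref{eq:leapfrog1}--\eqref{eq:leapfrog3}; an infinitesimal perturbation satisfies $\delta z_{t+1}=S_t\,\delta z_t$, where $S_t=U_3U_2U_1$ is a composition of three symplectic shear matrices,
\[
U_1=\begin{pmatrix}I&0\\[2pt]-\tfrac{\Delta t}{2}H_{qq}&I\end{pmatrix},\qquad
U_2=\begin{pmatrix}I&\Delta t\,H_{pp}\\[2pt]0&I\end{pmatrix},\qquad
U_3=\begin{pmatrix}I&0\\[2pt]-\tfrac{\Delta t}{2}H_{qq}'&I\end{pmatrix},
\]
the Hessian blocks $H_{qq},H_{qq}',H_{pp}$ being evaluated at the respective substep points and the mixed block $H_{qp}$ entering through the chain rule (it can be folded into the off‑diagonal entries). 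Setting $q_1=q_2$ or $p_1=p_2$ in the hypotheses yields the uniform operator‑norm bounds $\|H_{qq}\|\le L_q$, $\|H_{pp}\|\le L_p$, and (using Hessian symmetry $H_{qp}=H_{pq}^{\top}$) $\|H_{qp}\|\le\min(L_{qp},L_{pq})\le\sqrt{L_{qp}L_{pq}}$. "Stability" is the statement $\sup_{N}\bigl\|S_{N-1}\cdots S_0\bigr\|<\infty$.

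\textbf{Step 2: the quadratic model.} I would first take $H_\psi$ quadratic, so that $S_t\equiv S(\Delta t)$ is a fixed symplectic matrix. Expanding $U_3U_2U_1$ and using that the characteristic polynomial of a symplectic matrix is palindromic, its eigenvalues occur in reciprocal pairs; writing $\lambda+\lambda^{-1}=2-\Delta t^2\mu$, leapfrog has uniformly bounded powers exactly when $0<\Delta t^2\mu<4$ for every such invariant $\mu$, where the $\mu$'s are the eigenvalues of an effective "frequency‑squared" operator built from the Hessian blocks and agreeing, up to $\mathcal O(\Delta t^2)$ corrections from the shear composition, with $H_{pp}H_{qq}+H_{qp}H_{qp}^{\top}$. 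Hence $\mu\le\|H_{pp}\|\,\|H_{qq}\|+\|H_{qp}\|^2\le L_pL_q+L_{qp}L_{pq}$, and the stated threshold $\Delta t<2/\sqrt{L_qL_p+L_{qp}L_{pq}}$ forces $\Delta t^2\mu<4$. To avoid any delicacy at the boundary of the stability region (eigenvalues on the unit circle but possibly defective), I would package this via the modified Hamiltonian of Theorem~\ref{thm:modified_hamiltonian}: for quadratic $H_\psi$ the shadow $\tilde H$ is itself a quadratic form, conserved exactly by the integrator, and below the threshold it is positive definite, hence a Lyapunov function giving a uniform bound $\|S(\Delta t)^n\|\le\kappa$.

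\textbf{Step 3: from the model to the integrator.} Since the Lipschitz bounds hold uniformly in phase space, each $S_t$ in Step~1 is, for $\Delta t$ below the threshold, a perturbation — with a uniform spectral margin — of a stable constant‑coefficient map of the type analyzed in Step~2. A discrete Gr\"onwall/comparison argument over the $N=T/\Delta t$ steps then bounds $\|S_{N-1}\cdots S_0\|$, equivalently the separation rate of two nearby numerical trajectories, by a constant depending only on $T,L_q,L_p,L_{qp},L_{pq}$ and not on $N$, which is the asserted stability.

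\textbf{Main obstacle.} The genuinely delicate points are (i) pinning down the effective frequency‑squared operator and justifying the inequality $\mu\le L_qL_p+L_{qp}L_{pq}$ once the mixed block $H_{qp}$ and the $\mathcal O(\Delta t^2)$ terms from the shear product are retained, and (ii) the transfer in Step~3, where the three substep Hessians differ and vary along the orbit, so the per‑step maps are not literally the leapfrog of a single quadratic Hamiltonian. Both are controlled by treating the Lipschitz constants as uniform a priori bounds and by relying on the positive‑definite shadow quadratic form — which is robust under such perturbations — rather than on exact eigenvalue locations; the strict inequality in the hypothesis is precisely what furnishes the margin that this robustness argument consumes.
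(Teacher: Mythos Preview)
Your proposal follows essentially the same strategy as the paper: linearize the leapfrog map and bound the spectral radius of the resulting one-step propagator using the Lipschitz constants on the Hessian blocks. The paper's own proof is in fact considerably terser than yours---it linearizes around a fixed point, writes the update as $\delta z_{t+1}=A\,\delta z_t$, and then simply cites standard geometric-integration references (Hairer et al.) for the sufficient condition $\Delta t<2/\sqrt{L_qL_p+L_{qp}L_{pq}}$, without working through the palindromic characteristic polynomial, the effective frequency-squared operator, or the shadow-Hamiltonian Lyapunov argument that you sketch in Step~2. Your Step~3 (handling the variation of $S_t$ along the orbit via a discrete Gr\"onwall argument) and your explicit identification of the obstacles around the mixed block $H_{qp}$ go beyond what the paper actually proves; the paper absorbs all of this into the citation.
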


\begin{proof}
We analyze the stability of the leapfrog integrator by linearizing its update around a fixed point and then deriving a condition under which the perturbations remain bounded.

  The leapfrog scheme for Hamilton's equations,
\[
\dot{q} = \nabla_p H_\psi(q,p),\quad \dot{p} = -\nabla_q H_\psi(q,p),
\]
updates the state \((q,p)\) as follows:
\begin{align*}
p_{t+\frac{1}{2}} &= p_t - \frac{\Delta t}{2}\nabla_q H_\psi(q_t,p_t), \\
q_{t+1} &= q_t + \Delta t\,\nabla_p H_\psi\Bigl(q_t,\, p_{t+\frac{1}{2}}\Bigr), \\
p_{t+1} &= p_{t+\frac{1}{2}} - \frac{\Delta t}{2}\nabla_q H_\psi\Bigl(q_{t+1},\, p_{t+\frac{1}{2}}\Bigr).
\end{align*}
Let \((q^*, p^*)\) be a fixed point of the continuous system and define small perturbations
\[
\delta q_t = q_t - q^*,\quad \delta p_t = p_t - p^*.
\]
Under the Lipschitz assumptions on the gradients of \( H_\psi \), we have
\[
\|\nabla_q H_\psi(q_t,p_t)-\nabla_q H_\psi(q^*,p^*)\|\le L_q\|\delta q_t\|+L_{qp}\|\delta p_t\|,
\]
\[
\|\nabla_p H_\psi(q_t,p_t)-\nabla_p H_\psi(q^*,p^*)\|\le L_{pq}\|\delta q_t\|+L_p\|\delta p_t\|.
\]
The leapfrog updates can be linearized to obtain a system of the form
\[
\begin{pmatrix}
\delta q_{t+1} \\
\delta p_{t+1}
\end{pmatrix}
= A \begin{pmatrix}
\delta q_{t} \\
\delta p_{t}
\end{pmatrix},
\]
where \( A \) is the Jacobian (or update) matrix that depends on the Lipschitz constants and the step size \( \Delta t \). Stability of the integrator requires that the spectral radius \( \rho(A) \) (the maximum absolute value of the eigenvalues of \( A \)) satisfies \( \rho(A)\le 1 \). 

  A detailed analysis (see, e.g., Hairer et al.'s work on geometric numerical integration) shows that a sufficient condition for the leapfrog integrator to be stable is that the effective time step \( \Delta t \) satisfies
\[
\Delta t < \frac{2}{\sqrt{L_qL_p+L_{qp}L_{pq}}}.
\]
This condition ensures that the eigenvalues of \( A \) remain on or within the unit circle, thereby preventing the amplification of errors over iterations.

  Thus, under the stated Lipschitz conditions on \( \nabla_q H_\psi \) and \( \nabla_p H_\psi \), the leapfrog integrator is stable provided that
\[
\Delta t < \frac{2}{\sqrt{L_qL_p+L_{qp}L_{pq}}}.
\]
\end{proof}

\begin{corollary}[Neural Network Design for Stability]
\label{cor:nn_design}
If spectral normalization is applied to each weight matrix \( W_l \) so that \( \|W_l\|_2\le \sigma \), then a sufficient condition for stability is $\Delta t < \frac{2}{L_{\text{eff}}}$, where $L_{\text{eff}}$ depends on the product of layer Lipschitz constants (bounded by $\sigma^L$ under spectral normalization). For simplicity, we can use the conservative sufficient condition $\Delta t < \frac{C}{\sigma^L}$ for some constant $C$, often taken as $C \approx 2$, with \( L \) the number of layers.
\end{corollary}

\begin{proof}
Applying spectral normalization to each weight matrix \( W_l \) ensures that
\[
\|W_l\|_2 \le \sigma \quad \text{for all } l.
\]
Since the overall Lipschitz constant of a neural network is at most the product of the spectral norms of its layers, it follows that
\[
L_{\text{net}} \le \prod_{l=1}^L \|W_l\|_2 \le \sigma^L.
\]
From Theorem~\ref{thm:stability_domains}, the leapfrog integrator is stable if
\[
\Delta t < \frac{2}{\sqrt{L_qL_p+L_{qp}L_{pq}}}.
\]
In a neural network Hamiltonian, the Lipschitz constants \( L_q, L_p, L_{qp}, \) and \( L_{pq} \) can be collectively bounded by \( L_{\text{net}} \) (up to constant factors). Thus, a conservative sufficient condition for stability is
\[
\Delta t < \frac{2}{L_{\text{net}}} \le \frac{2}{\sigma^L}.
\]
This condition ensures that the numerical integration via the leapfrog scheme remains stable when using the normalized network, making it a critical design guideline.
\end{proof}

\subsection{Adaptive Integration Schemes}
\begin{theorem}[Adaptive Step Size with Error Bounds]
\label{thm:adaptive}
Let \( \mathcal{E}(q,p,\Delta t)=\|\Phi_{2\Delta t}(q,p)-\Phi_{\Delta t}(\Phi_{\Delta t}(q,p))\| \) be a local error estimator for the leapfrog integrator. If the step size is adapted according to
\[
\Delta t_{\text{new}}=\Delta t_{\text{old}}\cdot\min\Bigl(1.5,\max\bigl(0.5,0.9\cdot\Bigl(\frac{\tau}{\mathcal{E}}\Bigr)^{1/3}\bigr)\Bigr),
\]
with target tolerance \( \tau \), then:
\begin{enumerate}
    \item The global error is \( \mathcal{O}(\tau) \).
    \item The number of steps is asymptotically optimal.
    \item Each step preserves the symplectic structure.
\end{enumerate}
\end{theorem}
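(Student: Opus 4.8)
The plan is to establish the three claims of Theorem~\ref{thm:adaptive} in sequence, treating each as a consequence of a now-standard ingredient from the theory of adaptive Runge–Kutta-type methods combined with the geometric properties of the leapfrog scheme already developed in Theorems~\ref{thm:symplectic}, \ref{thm:modified_hamiltonian}, and \ref{thm:stability_domains}.

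\medskip\noindent\textbf{Step 1: The local error estimator and the step-size law.} First I would record that, because the leapfrog integrator is second order with local truncation error $\mathcal{E}_{\mathrm{loc}}(q,p,\Delta t) = \kappa(q,p)\,\Delta t^{3} + \mathcal{O}(\Delta t^{4})$ for a smooth coefficient $\kappa$ depending on the derivatives of $H_\psi$ (this follows from the modified-Hamiltonian expansion of Theorem~\ref{thm:modified_hamiltonian}), the quantity $\mathcal{E}(q,p,\Delta t)=\|\Phi_{2\Delta t}(q,p)-\Phi_{\Delta t}(\Phi_{\Delta t}(q,p))\|$ is an asymptotically exact estimator of the local error: a short Taylor/Richardson computation gives $\mathcal{E}(q,p,\Delta t) = c\,\kappa(q,p)\,\Delta t^{3} + \mathcal{O}(\Delta t^{4})$ for an explicit constant $c$. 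The cube-root exponent $1/3$ in the update rule is then exactly the reciprocal of the order of $\mathcal{E}$ in $\Delta t$, so the controller is the classical ``optimal step'' predictor; the factor $0.9$ is a safety margin and the clips at $0.5$ and $1.5$ prevent violent step changes, which I would note are needed only to make the controller robust, not for the asymptotic statements.

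\medskip\noindent\textbf{Step 2: Claim (1), global error $\mathcal{O}(\tau)$.} The argument is the standard tolerance-proportionality result for adaptive one-step methods. Since the controller drives each accepted step to satisfy $\mathcal{E}(q,p,\Delta t) \approx \tau$, the realized local error per step is $\Theta(\tau)$, and the selected step size satisfies $\Delta t(q,p) \asymp (\tau/\kappa(q,p))^{1/3}$. Summing local errors along the trajectory and invoking the discrete Grönwall/Lady Windermere's-fan argument — using that the flow map of $H_\psi$ has a Lipschitz constant on the compact trajectory region, controlled via the bounds in Theorem~\ref{thm:stability_domains} — the accumulated global error over a fixed time horizon $T$ is bounded by $e^{LT}$ times the sum of local errors, which is $\mathcal{O}(\tau)$ because there are $\Theta(\tau^{-1/3})$ steps each contributing $\Theta(\tau)$... wait, that gives $\tau^{2/3}$, so I must be careful: the correct accounting is that the \emph{per-unit-time} error contribution is local error divided by step size, $\Theta(\tau)/\Theta(\tau^{1/3}) = \Theta(\tau^{2/3})$; to get $\mathcal{O}(\tau)$ one instead controls the local error \emph{per unit step} at level $\tau\cdot\Delta t$, i.e. one interprets $\tau$ as an error-per-step density, which is the convention under which tolerance proportionality holds — I would state this normalization explicitly so the claim is correct as written, citing Hairer–Nørsett–Wanner. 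The upshot: under the error-per-unit-step interpretation of $\tau$, $\sup_{t\le T}\|\Phi^{\mathrm{num}}_t - \Phi^{H_\psi}_t\| \le C(T,H_\psi)\,\tau$.

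\medskip\noindent\textbf{Step 3: Claims (2) and (3).} For asymptotic optimality of the step count, I would show that any integrator achieving global error $\tau$ with a second-order method must use at least $\Omega\!\big(\int_0^T \kappa(q(t),p(t))^{1/3}\,dt \cdot \tau^{-1/3}\big)$ steps (a lower bound from equidistribution of local error), and that the proposed controller, in the limit of small $\tau$ and with the clipping inactive, equidistributes $\mathcal{E} \approx \tau$ and hence realizes step count $\big(1+o(1)\big)\int_0^T \kappa^{1/3}\,dt\cdot(c/\tau)^{1/3}$, matching the lower bound up to the constant $(c/\tau)^{1/3}$ absorbed in the definition — i.e. optimal to leading order. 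Claim (3) is immediate: every accepted step, regardless of its length $\Delta t$, is a single application of the leapfrog map $\Phi_{\Delta t}$, which is a composition of shear maps and hence symplectic by Theorem~\ref{thm:symplectic}; varying $\Delta t$ between steps does not destroy symplecticity of each individual map (it only breaks the existence of a \emph{single} global modified Hamiltonian, which is not claimed here). The main obstacle, and the place I would spend the most care, is Step 2: making the $\mathcal{O}(\tau)$ statement literally correct requires pinning down the normalization convention for $\tau$ (error-per-step versus error-per-unit-time) and then running the perturbed-flow Grönwall estimate cleanly; the optimality lower bound in Step 3 is also somewhat delicate because it needs the regularity of $\kappa$ along the trajectory and a rigorous equidistribution argument rather than a hand-wave.
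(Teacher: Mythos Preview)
Your approach is essentially the paper's: (i) the local error scales as $\Delta t^{3}$, the controller drives $\mathcal{E}\approx\tau$, hence the global error is $\mathcal{O}(\tau)$; (ii) keeping the local error at tolerance maximizes each step and thus minimizes the step count; (iii) every individual leapfrog step is symplectic regardless of its length. The paper's argument is markedly terser than yours and, in particular, does \emph{not} engage with the $\tau$ versus $\tau^{2/3}$ accounting issue you correctly raise in Step~2 --- it simply asserts that ``the cumulative (global) error will be proportional to $\tau$'' without distinguishing error-per-step from error-per-unit-time tolerance; so your treatment is actually more careful than the original on precisely the point you identify as the main obstacle, and your equidistribution lower bound for claim~(2) is likewise more substantive than the paper's one-sentence justification.
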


\begin{proof}
We analyze the three claims separately.

  \textbf{(1) Global Error is \(\mathcal{O}(\tau)\):} \\
For a second-order integrator like the leapfrog method, the local truncation error per step scales as \(\mathcal{O}(\Delta t^3)\). Specifically, the error estimator
\[
\mathcal{E}(q,p,\Delta t)=\|\Phi_{2\Delta t}(q,p)-\Phi_{\Delta t}(\Phi_{\Delta t}(q,p))\|
\]
satisfies
\[
\mathcal{E}(q,p,\Delta t)=K\Delta t^3,
\]
for some constant \( K \) depending on the higher-order derivatives of the Hamiltonian. The adaptive step size rule
\[
\Delta t_{\text{new}}=\Delta t_{\text{old}}\cdot\min\Bigl(1.5,\max\bigl(0.5,0.9\cdot\Bigl(\frac{\tau}{\mathcal{E}}\Bigr)^{1/3}\bigr)\Bigr)
\]
adjusts \(\Delta t\) so that the local error \(\mathcal{E}\) is approximately equal to the target tolerance \( \tau \). Consequently, over the entire integration interval, the cumulative (global) error will be proportional to \(\tau\), i.e., \(\mathcal{O}(\tau)\).

  \textbf{(2) Asymptotically Optimal Number of Steps:} \\
An adaptive method that adjusts \(\Delta t\) to maintain a local error near \(\tau\) effectively maximizes the step size subject to the error constraint. This minimizes the total number of steps \( N \) required to cover a fixed time interval \( T \). Hence, the number of steps is asymptotically optimal in that it is as small as possible while ensuring the local error remains within the target tolerance.

  \textbf{(3) Preservation of the Symplectic Structure:} \\
The leapfrog integrator is symplectic by design; that is, for any step size \(\Delta t\), the mapping \( \Phi_{\Delta t} \) satisfies
\[
\left|\det\left(\frac{\partial \Phi_{\Delta t}}{\partial (q,p)}\right)\right| = 1.
\]
Since the adaptive scheme only modifies \(\Delta t\) between steps and does not alter the form of the leapfrog update, each step remains symplectic regardless of the chosen step size. Therefore, the overall integration process preserves the symplectic structure exactly at every step.

  \textbf{Conclusion:} \\
The adaptive step size rule ensures that the local error is kept approximately at the target tolerance \( \tau \), which in turn guarantees that the global error scales as \( \mathcal{O}(\tau) \) and that the number of integration steps is minimized. Moreover, since the leapfrog integrator is inherently symplectic and the adaptation procedure does not alter its structure, each step preserves the symplectic form. This completes the proof.
\end{proof}

\begin{theorem}[Error Bounds by Hamiltonian Class]
\label{thm:error_by_class}
Let \( \Phi_T \) be the flow map computed using the leapfrog integrator with \( N \) steps of size \( \Delta t = T/N \) applied to a Hamiltonian \( H_\psi \), and let \( \Phi_H \) denote the exact flow generated by \( H_\psi \) over time \( T \). Then:
\begin{enumerate}
    \item For separable Hamiltonians \( H_\psi(q,p)=K(p)+V(q) \) with \( K,V\in C^3 \),
    \[
    \|\Phi_T(z_0)-\Phi_H(z_0)\|\le C_1\, T\,\Delta t^2.
    \]
    \item For nearly-integrable Hamiltonians \( H_\psi(q,p)=H_0(q,p)+\epsilon H_1(q,p) \) with \( \epsilon\ll 1 \),
    \[
    \|\Phi_T(z_0)-\Phi_H(z_0)\|\le C_2\,\Bigl(T\,\Delta t^2+\epsilon\, T\Bigr).
    \]
    \item For neural network Hamiltonians with \( L \) layers,
    \[
    \|\Phi_T(z_0)-\Phi_H(z_0)\|\le C_3\, L\, T\,\Delta t^2.
    \]
\end{enumerate}
\end{theorem}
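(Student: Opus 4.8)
The plan is to treat all three items inside a single backward-error / Gr\"onwall framework and then read off how the constants depend on the structural hypotheses. First I would invoke Theorem~\ref{thm:modified_hamiltonian}: the leapfrog step $\Phi_{\Delta t}$ exactly integrates a modified Hamiltonian $\tilde H = H_\psi + \Delta t^2 H_2 + \mathcal{O}(\Delta t^4)$, where $H_2$ is an explicit combination of iterated Poisson brackets of $H_\psi$ and is therefore bounded, on a compact neighbourhood of the trajectory, by a constant times $\sup\|D^3 H_\psi\|$. Comparing one leapfrog step with the exact time-$\Delta t$ flow of $H_\psi$ by Taylor expansion gives the local error bound $\|\Phi_{\Delta t}(z)-\varphi^{H_\psi}_{\Delta t}(z)\| \le c\,\Delta t^3$ with $c$ controlled by the same $C^3$-type quantity. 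I would then run the standard discrete variation-of-constants estimate (the ``Lady Windermere's fan'' argument): summing the $N = T/\Delta t$ local errors, each propagated forward by the Lipschitz amplification $e^{\lambda(T-t)}$ of the exact flow, yields
\[
\|\Phi_T(z_0)-\Phi_H(z_0)\|\ \le\ N\cdot c\,\Delta t^3\cdot e^{\lambda T}\ =\ c\,e^{\lambda T}\,T\,\Delta t^2 .
\]
This already produces the generic $\mathcal{O}(T\Delta t^2)$ rate underlying all three cases; the remaining work is tracking how $c$ (and $\lambda$) depend on $K,V$, on $\epsilon$, and on $L$.

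For item (1), separability means leapfrog is exactly the symmetric splitting $\varphi^V_{\Delta t/2}\circ\varphi^K_{\Delta t}\circ\varphi^V_{\Delta t/2}$, each factor an exactly solvable shear, so the BCH computation pins $H_2$ down as a fixed linear combination of $\{V,\{V,K\}\}$ and $\{K,\{K,V\}\}$; since $K,V\in C^3$ these brackets are bounded, and feeding $c \lesssim \|K\|_{C^3}+\|V\|_{C^3}$ into the displayed estimate gives $\|\Phi_T-\Phi_H\|\le C_1 T\Delta t^2$. For item (2) I would additionally expand in $\epsilon$: writing $H_\psi = H_0 + \epsilon H_1$, the modified-Hamiltonian correction splits as $H_2 = H_2^{(0)} + \epsilon H_2^{(1)} + \cdots$, and, comparing the dynamics to the base integrable flow, the exact flow of $H_\psi$ separates from that of $H_0$ at rate $\mathcal{O}(\epsilon)$ per unit time by a Gr\"onwall estimate on the vector-field difference $\epsilon X_{H_1}$, contributing an $\mathcal{O}(\epsilon T)$ term; combining with the discretization error $\mathcal{O}(T\Delta t^2)$ gives $\|\Phi_T-\Phi_H\|\le C_2(T\Delta t^2+\epsilon T)$. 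For item (3) I would bound the $C^3$-norm of the depth-$L$ network Hamiltonian by the chain rule in terms of per-layer constants; under the normalization assumptions already used in Corollary~\ref{cor:nn_design}, this bound grows at most linearly in the depth $L$, so $c \lesssim L$ and the global estimate yields $\|\Phi_T-\Phi_H\|\le C_3 L T\Delta t^2$.

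The routine parts are the one-step Taylor/BCH estimates and the discrete Gr\"onwall propagation, which are entirely standard for second-order one-step integrators. I expect the genuine obstacles to be: (a) making the $\epsilon$-dependence in item (2) honest, in particular being precise about whether the comparison flow is generated by $H_\psi$ or by the integrable part $H_0$, since only the latter reading actually produces the stated $\epsilon T$ term; and (b) justifying that the relevant third-derivative bound for a neural network Hamiltonian really is linear rather than exponential in $L$, which in general requires extra architectural hypotheses (bounded or spectrally normalized weights, smooth activations with bounded derivatives). A secondary technical point is that the amplification factor $e^{\lambda T}$ must be absorbed into $C_1,C_2,C_3$, so the bounds are uniform only on time scales where $\lambda T$ stays $\mathcal{O}(1)$; I would state this restriction explicitly rather than hide it inside the constants.
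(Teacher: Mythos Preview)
Your approach is essentially the same as the paper's: both arguments obtain the local truncation error $\mathcal{O}(\Delta t^3)$, accumulate it over $N=T/\Delta t$ steps to get the global $\mathcal{O}(T\Delta t^2)$ rate, and then track how the leading constant depends on the structural hypotheses ($C^3$-norms of $K,V$; the perturbation parameter $\epsilon$; the network depth $L$). Your treatment is in fact more explicit about the mechanism (Strang splitting and BCH for the separable case, Lady Windermere's fan for propagation, the $e^{\lambda T}$ amplification), and your reservations about item~(2)---that the $\epsilon T$ term only makes sense if the comparison flow is that of $H_0$ rather than $H_\psi$---and about the linear-in-$L$ scaling in item~(3) are well placed; the paper's own proof does not resolve either ambiguity more carefully than you do.
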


\begin{proof}
We decompose the overall error between the numerical flow \( \Phi_T \) (obtained by the leapfrog integrator) and the exact continuous flow \( \Phi_H \) into the error incurred at each step, and then sum (or accumulate) these local errors over \( N \) steps.

  \textbf{(1) Separable Hamiltonians:} \\
For a separable Hamiltonian of the form
\[
H_\psi(q,p)=K(p)+V(q),
\]
the leapfrog integrator is a well-known second-order method. That is, the local truncation error per step is of order \( \mathcal{O}(\Delta t^3) \). When this error is accumulated over \( N \) steps, the global error grows as \( \mathcal{O}(N\Delta t^3)=\mathcal{O}(T\,\Delta t^2) \). Therefore, there exists a constant \( C_1 \) (depending on the third derivatives of \( K \) and \( V \)) such that
\[
\|\Phi_T(z_0)-\Phi_H(z_0)\|\le C_1\, T\,\Delta t^2.
\]

  \textbf{(2) Nearly-Integrable Hamiltonians:} \\
Consider a Hamiltonian of the form
\[
H_\psi(q,p)=H_0(q,p)+\epsilon H_1(q,p),
\]
with \( \epsilon\ll 1 \). In this case, the dominant part \( H_0 \) is integrable and its associated flow can be approximated with a global error of order \( \mathcal{O}(T\,\Delta t^2) \) as in the separable case. The perturbation \( \epsilon H_1 \) introduces an additional error that scales linearly with \( \epsilon \) and the evolution time \( T \). Thus, the overall error can be bounded by a term \( C_2\,T\,\Delta t^2 \) from the integration error plus an extra term \( C_2\,\epsilon\,T \), leading to
\[
\|\Phi_T(z_0)-\Phi_H(z_0)\|\le C_2\,\Bigl(T\,\Delta t^2+\epsilon\, T\Bigr).
\]

  \textbf{(3) Neural Network Hamiltonians:} \\
When \( H_\psi \) is represented by a neural network, the integration error not only depends on the step size \( \Delta t \) but also on the complexity of the neural network approximator. In particular, if the network has \( L \) layers, then the effective Lipschitz constant of the Hamiltonian (and its derivatives) may grow roughly as \( \sigma^L \) (or more generally, scale linearly with \( L \) under appropriate normalization). This increased sensitivity amplifies the local error of the integrator. As a result, the global error becomes
\[
\mathcal{O}(L\,T\,\Delta t^2),
\]
i.e., there exists a constant \( C_3 \) such that
\[
\|\Phi_T(z_0)-\Phi_H(z_0)\|\le C_3\,L\,T\,\Delta t^2.
\]
In each case, standard error propagation arguments from numerical analysis of symplectic integrators yield these bounds. The constants \( C_1 \), \( C_2 \), and \( C_3 \) depend on the smoothness of the Hamiltonian \( H_\psi \) (in particular, on its third derivatives and, in the nearly-integrable case, on the magnitude of the perturbation \( \epsilon \)), as well as on the specific properties of the neural network architecture in case (3).

  This completes the proof.
\end{proof}

\begin{theorem}[Unified Stability Hierarchy]
\label{thm:stability_hierarchy}
The stability of SGNs can be characterized at three levels:
\begin{enumerate}
    \item \textbf{Integration Stability:} The leapfrog integrator is stable with bounded energy error if 
    \[
    \Delta t < \frac{2}{\sqrt{L_H}},
    \]
    where \( L_H \) is a Lipschitz constant (or an appropriate measure of the curvature) of the Hamiltonian \( H_\psi \).
    
    \item \textbf{Model Stability:} The SGN preserves volume and the topological structure of the latent space via its symplectic map. That is, the mapping \( \Phi_T \) satisfies
    \[
    \left|\det \frac{\partial \Phi_T}{\partial z_0}\right| = 1,
    \]
    ensuring that the transformation does not distort the latent space.
    
    \item \textbf{Training Stability:} Under the condition that the gradients of the objective function (e.g., the ELBO) are Lipschitz continuous with constant \( L \), standard convergence results for gradient descent guarantee that, with learning rate 
    \[
    \eta < \frac{2}{L},
    \]
    the training procedure converges to a stationary point while preserving the stability properties established in (1) and (2).
\end{enumerate}
\end{theorem}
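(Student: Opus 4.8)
The plan is to prove the three assertions one at a time, since each is either established elsewhere in the paper or is a standard fact, and then to spend the real effort on the single nontrivial point: why these three statements are mutually consistent and together form a genuine \emph{hierarchy} rather than three unrelated observations.

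\medskip\noindent\textbf{Integration stability.} Here I would simply invoke Theorem~\ref{thm:stability_domains} under the identification $L_H := L_qL_p + L_{qp}L_{pq}$ (or any convenient upper bound on this quantity, e.g.\ $\|\nabla^2 H_\psi\|_{\mathrm{op}}^2$ on the region swept out by the trajectory). That theorem linearizes the leapfrog map about a fixed point, writes the one-step propagator matrix $A(\Delta t)$, and shows $\rho(A)\le 1$ precisely when $\Delta t<2/\sqrt{L_H}$. For the ``bounded energy error'' clause I would then chain in Theorem~\ref{thm:modified_hamiltonian} and Corollary~\ref{cor:energy_conservation}: the leapfrog map exactly conserves a modified Hamiltonian $\tilde H = H_\psi + \mathcal{O}(\Delta t^2)$, so $|H_\psi(q_T,p_T)-H_\psi(q_0,p_0)|\le C\Delta t^2$ over the exponentially long time scales stated there.

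\medskip\noindent\textbf{Model stability.} This is immediate from Theorem~\ref{thm:symplectic}: the leapfrog integrator is a composition of shear maps, each symplectic with unit Jacobian, so $\Phi_T$ is symplectic and $\bigl|\det \partial\Phi_T/\partial z_0\bigr|=1$ pointwise, which is volume preservation. For the topological part I would add the brief remark that each shear step is a smooth diffeomorphism of $\mathbb{R}^{2d}$ with an explicit smooth inverse, hence so is the finite composition $\Phi_T$; a diffeomorphism preserves the topology of the latent space (connectedness, homotopy type, etc.), which is the intended meaning of ``does not distort the latent space.''

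\medskip\noindent\textbf{Training stability, and the main obstacle.} Viewing $\mathcal{L}_{\mathrm{SGN}}$ of \eqref{eq:elbo} as a function of the parameters $(\phi,\psi,\theta)$ with $L$-Lipschitz gradient, the descent lemma gives $\mathcal{L}(\Theta_{k+1})\le \mathcal{L}(\Theta_k)-\eta\bigl(1-\tfrac{\eta L}{2}\bigr)\|\nabla\mathcal{L}(\Theta_k)\|^2$; for $\eta<2/L$ this is a strict decrease, and summing over $k$ yields $\sum_k\|\nabla\mathcal{L}(\Theta_k)\|^2<\infty$, hence $\nabla\mathcal{L}(\Theta_k)\to 0$, i.e.\ convergence to a stationary point. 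The hard part is that levels (1) and (3) interact: the constant $L_H=L_H(\psi_k)$ governing (1) changes at every gradient step, so a fixed $\Delta t$ need not remain inside the stability domain throughout training. I would close this by invoking Corollary~\ref{cor:nn_design}: with spectral normalization, $L_H(\psi_k)\le\sigma^{2L}$ uniformly in $k$, so imposing $\Delta t<2\sigma^{-L}$ keeps (1) and (2) valid at \emph{every} iterate while (3) drives the parameters to a stationary point. A secondary subtlety worth flagging is that $L$ itself depends on $\Delta t$, $T$, and the network depth; it is finite under the $C^{k+1}$ smoothness hypothesis of Theorem~\ref{thm:modified_hamiltonian} together with compactness of the data domain, so all three constraints can indeed be met simultaneously.
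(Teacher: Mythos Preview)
Your proof follows essentially the same three-part structure as the paper's, establishing each level by invoking the relevant prior results (the paper cites Theorems~\ref{thm:modified_hamiltonian} and~\ref{thm:adaptive} for integration stability where you more aptly cite Theorem~\ref{thm:stability_domains}, but otherwise the arguments coincide). Your treatment is actually more thorough than the paper's: the paper does not address the interaction you identify---that $L_H(\psi_k)$ drifts during training so a fixed $\Delta t$ could exit the stability region---and your resolution via the uniform bound of Corollary~\ref{cor:nn_design} is a genuine addition beyond what the paper's own proof provides.
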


\begin{proof}
We prove each level of the stability hierarchy in turn.

  \textbf{(1) Integration Stability:} \\
The leapfrog integrator, being a second-order symplectic method, has local truncation error of order \(\mathcal{O}(\Delta t^3)\) and a global error of order \(\mathcal{O}(\Delta t^2)\). From the error analysis in Theorems~\ref{thm:modified_hamiltonian} and \ref{thm:adaptive}, we know that the integrator remains stable (i.e., the energy error remains bounded) provided that the step size satisfies
\[
\Delta t < \frac{2}{\sqrt{L_H}},
\]
where \( L_H \) is a Lipschitz constant associated with the Hamiltonian's derivatives. This condition ensures that the local linear approximation is non-expansive, thereby keeping the numerical energy close to the true energy over long time intervals.

  \textbf{(2) Model Stability:} \\
By construction, the SGN employs a symplectic integrator to evolve the latent state. The defining property of symplectic maps is that they preserve the canonical symplectic form (and thus volume). Specifically, if \( \Phi_T \) is the flow map induced by the integrator, then
\[
\left|\det \frac{\partial \Phi_T}{\partial z_0}\right| = 1.
\]
This volume preservation implies that the topological structure of the latent space is maintained exactly during the forward and reverse mappings, ensuring model stability.

  \textbf{(3) Training Stability:} \\
The training of SGNs typically involves minimizing an objective function (such as the ELBO) using gradient descent. Under the assumption that the gradients of this objective are Lipschitz continuous with Lipschitz constant \( L \), standard results in optimization theory state that gradient descent converges to a stationary point if the learning rate satisfies
\[
\eta < \frac{2}{L}.
\]
Thus, with a sufficiently small learning rate, the training process is stable. Moreover, because the training objective is constructed using the symplectic flow \( \Phi_T \), the favorable stability properties (integration and model stability) are preserved during training.

  \textbf{Conclusion:} \\
Combining these three aspects, we obtain a unified stability hierarchy for SGNs:
\begin{itemize}
    \item The integrator is stable if the time step is sufficiently small.
    \item The model maintains volume and topological invariance via its symplectic map.
    \item The training procedure converges under standard Lipschitz conditions on the gradient, provided an appropriately small learning rate is chosen.
\end{itemize}
This layered approach to stability ensures that SGNs are robust both in their numerical integration and in their learning dynamics.
\end{proof}
\section{SGN Training Algorithm}\label{sec:training}

Building on the stability and adaptivity results of Section~\ref{sec:expanded_stability}, we present the unified SGN training procedure. This algorithm optimizes either the SGN-Flow exact log-likelihood (Sec.~\ref{subsec:objectives}.A) or the SGN-VAE variational objective (Eq.~\ref{eq:elbo}), depending on the chosen mode.

In both cases, the core symplectic integrator (lines~\ref{alg:flow-start-fwd}--\ref{alg:flow-end-fwd} and \ref{alg:flow-start-bwd}--\ref{alg:flow-end-bwd}) ensures numerical stability through adaptive step sizes (Theorem~\ref{thm:adaptive}) and adherence to the unified stability hierarchy (Theorem~\ref{thm:stability_hierarchy}). For neural network Hamiltonians, the stability bound $L_H$ may be refined as $L_qL_p + L_{qp}L_{pq}$ (Theorem~\ref{thm:stability_domains}), though we adopt the general form from Theorem~\ref{thm:stability_hierarchy} for simplicity. The unified procedure is detailed in Algorithm~\ref{alg:sgn-unified}.

\begin{algorithm}[ht]
\footnotesize
\caption{Unified SGN Training Procedure (SGN-Flow \& SGN-VAE)}
\label{alg:sgn-unified}
\begin{algorithmic}[1]
\Require Dataset $\{x^{(i)}\}_{i=1}^N$, TRAINING\_MODE $\in$ \{SGN-Flow, SGN-VAE\}
\Require Learning rate $\eta$, total integration time $T$, initial step size $\Delta t_0 > 0$, tolerance $\tau$
\Require Stability bound $L_H$, Lipschitz constant of objective gradients $L$
\Ensure $\Delta t_0 < \frac{2}{\sqrt{L_H}}$, $\eta < \frac{2}{L}$ \Comment{Stability conditions (Theorems~\ref{thm:stability_domains}, \ref{thm:stability_hierarchy})}
\State Initialize Hamiltonian parameters $\psi$
\If{TRAINING\_MODE == SGN-Flow}
    \State Initialize terminal map parameters $\theta$ (for $g_\theta$)
\Else{ // TRAINING\_MODE == SGN-VAE}
    \State Initialize decoder $\theta$ and encoder $\phi$ parameters
\EndIf
\While{not converged}
    \State $\mathcal{L}_{\text{batch}} \gets 0$
    \For{each minibatch $\{x^{(i)}\}$}
        \If{TRAINING\_MODE == SGN-Flow}
            \State \Comment{\textit{Compute Negative Log-Likelihood (NLL): $L = -\log p_0(z_0) + \log|\det Dg_\theta(z_T)|$}}
            \State $z_T, \log|\det J_g| \gets g_\theta^{-1}(x)$ \Comment{Apply inverse terminal map}
            \State \Comment{\textit{Compute $z_0 = \Phi_T^{-1}(z_T)$ by running leapfrog backward}}
            \State $z \gets z_T$, $t \gets T$, $\Delta t \gets -\Delta t_0$
            \While{$t > 0$} \label{alg:flow-start-bwd}
                \State Split $z$ into canonical coordinates $(q, p)$
                \State $p_{\frac{1}{2}} \gets p - \frac{\Delta t}{2}\nabla_q H_\psi(q, p)$
                \State $q_{\text{new}} \gets q + \Delta t\,\nabla_p H_\psi(q, p_{\frac{1}{2}})$
                \State $p_{\text{new}} \gets p_{\frac{1}{2}} - \frac{\Delta t}{2}\nabla_q H_\psi(q_{\text{new}}, p_{\frac{1}{2}})$
                \State \Comment{Adaptive step logic (Theorem~\ref{thm:adaptive}), ensures $t+\Delta t$ doesn't overshoot 0}
                \State $z \gets (q_{\text{new}}, p_{\text{new}})$, $t \gets t + \Delta t$
            \EndWhile \label{alg:flow-end-bwd}
            \State $z_0 \gets z$
            \State $\ell_{\mathrm{prior}} \gets -\log p_0(z_0)$ \Comment{Prior NLL, where $p_0 = \mathcal{N}(0, I_{2d})$}
            \State $L \gets \ell_{\mathrm{prior}} + \log|\det J_g|$ \Comment{Total NLL loss}
        \Else{ // TRAINING\_MODE == SGN-VAE}
            \State \Comment{\textit{Compute ELBO (Eq.~\ref{eq:elbo}): $L = \mathbb{E}[-\log p_\theta(x|z_T)] + D_{\mathrm{KL}}$}}
            \State $z_0 \sim q_\phi(z_0|x)$ \Comment{Variational sampling (Section 3.3)}
            \State \Comment{\textit{Compute $z_T = \Phi_T(z_0)$ by running leapfrog forward}}
            \State $z \gets z_0$, $t \gets 0$, $\Delta t \gets \Delta t_0$
            \While{$t < T$} \label{alg:flow-start-fwd}
                \State Split $z$ into canonical coordinates $(q, p)$
                \State $p_{\frac{1}{2}} \gets p - \frac{\Delta t}{2}\nabla_q H_\psi(q, p)$
                \State $q_{\text{new}} \gets q + \Delta t\,\nabla_p H_\psi(q, p_{\frac{1}{2}})$
                \State $p_{\text{new}} \gets p_{\frac{1}{2}} - \frac{\Delta t}{2}\nabla_q H_\psi(q_{\text{new}}, p_{\frac{1}{2}})$
                \State Compute local error $\mathcal{E} \gets \bigl\|\Phi_{2\Delta t}(q, p) - \Phi_{\Delta t}(\Phi_{\Delta t}(q, p))\bigr\|$ \Comment{Theorem~\ref{thm:adaptive}}
                \State $\Delta t_{\text{new}} \gets \Delta t \cdot \min\Bigl(1.5, \max\bigl(0.5,\,0.9\bigl(\frac{\tau}{\mathcal{E}}\bigr)^{\frac{1}{3}}\bigr)\Bigr)$
                \State $\Delta t \gets \min\bigl(\Delta t_{\text{new}}, T - t, \frac{1.9}{\sqrt{L_H}}\bigr)$ \Comment{Adaptive, stability, \& time bounds}
                \State $z \gets (q_{\text{new}}, p_{\text{new}})$, $t \gets t + \Delta t$
            \EndWhile \label{alg:flow-end-fwd}
            \State $z_T \gets z$
            \State $\ell_{\mathrm{rec}} \gets -\log p_\theta(x \mid z_T)$
            \State $\ell_{\mathrm{KL}} \gets D_{\mathrm{KL}}\Bigl(q_\phi(z_0|x) \,\|\, p_0(z_0)\Bigr)$
            \State $L \gets \ell_{\mathrm{rec}} + \ell_{\mathrm{KL}}$ \Comment{ELBO approximation (Eq.~\ref{eq:elbo})}
        \EndIf
        \State $\mathcal{L}_{\text{batch}} \gets \mathcal{L}_{\text{batch}} + L$
    \EndFor
    \State \Comment{Update parameters based on mode (Theorem~\ref{thm:stability_hierarchy})}
    \If{TRAINING\_MODE == SGN-Flow}
        \State Update $(\theta, \psi) \gets (\theta, \psi) - \eta \nabla_{\theta, \psi} \mathcal{L}_{\text{batch}}$
    \Else{ // TRAINING\_MODE == SGN-VAE}
        \State Update $(\theta, \phi, \psi) \gets (\theta, \phi, \psi) - \eta \nabla_{\theta, \phi, \psi} \mathcal{L}_{\text{batch}}$
    \EndIf
\EndWhile
\end{algorithmic}
\end{algorithm}

\section{Conclusion and Future Work}\label{sec:conclusion_future}

In this work, we present Symplectic Generative Networks (SGNs), a deep generative framework that employs a volume-preserving latent-transport mechanism based on Hamiltonian dynamics. Using symplectic integrators, the Hamiltonian core ensures a unit Jacobian at the discrete level, removing the main computational bottleneck of traditional normalizing flows.

We have formalized and theoretically grounded two complementary training regimes that utilize this common core:
\begin{enumerate}
\item \textbf{SGN-Flow:} This model is a new type of normalizing flow that is invertible and provides exact likelihoods. Unlike deep, generic transformations that require repeated $\log|\det J|$ calculations, it handles all volume changes in one straightforward final step.
\item \textbf{SGN-VAE:} This hybrid variational model uses symplectic flow to move latent variables in a way that preserves structure. As a result, the model can handle complex latent dynamics within the VAE, but the ELBO remains simple because the flow's Jacobian correction term is always zero.
\end{enumerate}

Our main contribution is to lay out a solid theoretical foundation for this framework. Specifically, we present a formal complexity analysis (Sec.~\ref{sec:theoretical_comparison}) showing the $\mathcal{O}(T \cdot d)$ efficiency of SGNs. We also offer stronger universal approximation theorems (Sec.~\ref{sec:universal_approximation}) that show SGNs can approximate any volume-preserving diffeomorphism. In addition, we provide an information-geometric analysis (Sec.~\ref{sec:information_theory}) that connects SGN dynamics to geodesic flows. Finally, we include a thorough stability analysis (Sec.~\ref{sec:expanded_stability}) with clear, practical bounds for neural network Hamiltonians and adaptive integrators.

Theoretical validation is an important step to show that SGNs are a reliable and effective alternative to current models before moving on to detailed testing.

The main limitation of the SGN core —its restriction to volume-preserving maps —is also its greatest strength, as this property eliminates the Jacobian term. The SGN-Flow variant can model any data distribution by combining this flow with a final, volume-changing map $g_\theta$. How to balance the depth ($T$) of the Hamiltonian flow with the complexity of the final map $g_\theta$ remains an important question for future research.

In the future, we plan to explore higher-order and adaptive symplectic integrators to improve stability and efficiency. We will also explore advanced methods for parameterizing the neural Hamiltonian $H_\psi$, including using graph neural networks for particle systems. Finally, we aim to establish formal guarantees that topological structures are preserved in the latent space.

The SGN framework is designed for areas where data follows physical or dynamic laws. We believe this approach will open new avenues for modeling problems in science and engineering.
\begin{enumerate}
\item \textbf{Scientific ML and Physics:} A straightforward use of SGNs is in modeling complex physical systems with many variables, like N-body problems, Hamiltonian fluid dynamics, or plasma physics. These models can be built to obey important conservation laws, such as energy and momentum conservation.
\item \textbf{Computational and Systems Biology:} SGNs provide a solid way to model complex biological processes. For example, they can simulate protein folding or molecular dynamics by learning realistic energy landscapes. The SGN-VAE version is also well-suited for modeling how cells change over time using single-cell RNA-sequencing data, capturing the underlying patterns of gene expression during development.
\item \textbf{Robotics and Control:} This framework helps learn stable and reversible forward and inverse dynamics models from observation. These models are essential for model-based reinforcement learning and optimal control.

\item \textbf{Financial and Climate Modeling:} SGNs help model the complex and unpredictable behavior found in systems like financial markets or climate patterns. They are useful for tasks such as forecasting time-series data or building generative models for weather.
\item \textbf{Computer Vision:} 
This framework introduces a new way to generate videos. The Hamiltonian flow $\Phi_T$ helps the model learn how a scene changes from one state to another, while keeping the process stable and reversible.
\end{enumerate}

To sum up, Symplectic Generative Networks offer an efficient and understandable way to connect classical mechanics with deep generative modeling. This work sets the stage for new models with a wide range of uses.

\bibliographystyle{plainnat}
\bibliography{references}

\end{document}